\documentclass{article} 
\usepackage{iclr2027_conference,times}

\usepackage{amsmath,amsfonts,bm}

\def\eqref#1{equation~\ref{#1}}

\def\1{\bm{1}}

\def\vs{{\bm{s}}}

\DeclareMathAlphabet{\mathsfit}{\encodingdefault}{\sfdefault}{m}{sl}
\SetMathAlphabet{\mathsfit}{bold}{\encodingdefault}{\sfdefault}{bx}{n}

\newcommand{\E}{\mathbb{E}}

\newcommand{\R}{\mathbb{R}}

\usepackage{graphicx}
\usepackage{hyperref}
\usepackage{url}
\usepackage{booktabs}
\usepackage{amsmath,amssymb,amsthm}
\usepackage{bm}

\usepackage{multirow}
\usepackage[table]{xcolor}
\usepackage{adjustbox}
\usepackage{algorithm, algorithmic}
\usepackage{enumitem}
\usepackage{tcolorbox}
\usepackage{tabularx}
\usepackage{array}
\usepackage{wrapfig}
\usepackage{makecell}
\newcolumntype{Y}{>{\raggedright\arraybackslash}X}
\newcommand{\pass}[1]{{\scriptsize\textcolor{gray!70}{#1}}}

\newcommand{\inner}[2]{\langle #1,\, #2\rangle}
\newcommand{\fcos}{\varphi^{\cos}}
\newcommand{\fsin}{\varphi^{\sin}}
\newtheorem{lemma}{Lemma}
\newtheorem{proposition}{Proposition}
\newtheorem{theorem}{Theorem}
\newtheorem{corollary}{Corollary}
\newtheorem{remark}{Remark}

\title{Rationally Linearized MLPs}

\author{ Arijit Sehanobish\thanks{Equal Contribution}\footnotemark[1]$^{*\color{purple}{\textbf{1}}}$\quad Byeongchan Kim\footnotemark[1]$^{*\color{purple}{\textbf{2}}}$\quad \textbf{Avinava Dubey}$^{\color{purple}{\textbf{3}}}$\quad
\textbf{Min-hwan Oh}$^{\color{purple}{\textbf{2}}}$\quad \textbf{Krzysztof Choromanski}\footnotemark[1]$^{*\color{purple}{\textbf{4,5}}}$\vspace{0.3cm}\\ 
$^{\color{purple}{\textbf{1}}}$ Kensho Technologies$\ \ $ $^{\color{purple}{\textbf{2}}}$ Seoul National University $\ \ $ $^{\color{purple}{\textbf{3}}}$Google Research  $\ \ $ $^{\color{purple}{\textbf{4}}}$Google DeepMind  $\ \ $ $^{\color{purple}{\textbf{5}}}$Columbia University
}

\iclrfinalcopy 
\begin{document}

\maketitle

\begin{abstract}
We present \textit{Rationally Linearized Multilayer Perceptrons} (\textbf{RaLiMPs}), a new approach to efficiently linearizing MLPs using low-degree rational approximations of standard activation functions. Inspired by polynomial-based MLP linearization, we show that rational approximations can provide accurate representations at substantially lower degree. RaLiMPs exploit these approximations to construct kernel representations with narrow bottlenecks, enabling efficient composition across MLP layers. We provide theoretical guarantees for the resulting approximation and evaluate RaLiMPs across applications ranging from implicit 3D models to large language models. Our experiments demonstrate favorable accuracy--efficiency tradeoffs, with RaLiMPs achieving competitive or improved accuracy while substantially reducing inference cost.
\end{abstract}

\section{Introduction and Related Work}
\label{sec: intro_related}

Nonlinear processing modules are at the heart of modern deep learning and are critical to modeling complex phenomena with machine learning (ML) approaches \citep{goodfellow2016deep, Hinton06}. Virtually all modern ML systems are built around one or both of the fundamental building blocks: \textit{multi-layer perceptrons} (MLPs) and the \textit{attention layers}. Both are highly nonlinear functions of their inputs, and their combination yields high-capacity models that can approximate complex functions while remaining amenable to robust training.

However that expressiveness comes at the cost: a non-negligible computational footprint which is quadratic in the sequence of the input length for the attention layers and quadratic in the dimensionality of the input for the MLPs. That poses practical challenges and led to fruitful research on efficient attention layers in Transformers\citep{yi-tay, elouargui, performers, reformer}. With recent improvements to efficient attention mechanism, reducing the computational cost of MLPs has become increasingly important. This is in part because the highly optimized attention computation often makes MLPs the new computational bottlenecks for Transformers, in particular when $d^{2} \gg L$ for the input dimensionality $d$ and sequence length $L$. Furthermore, MLPs are ubiquitous in ML and are present also in modern ML architectures that are not attention-based, e.g. implicit 3D-models such as \textit{Neural Radiance Fields} (NeRFs) \citep{nerf}, Signed Distance Fields (SDFs) \citep{isdf} or in Physics Informed Neural Networks (PINNs)~\citep{pinn}. 

A single layer of the MLP processing unit for the input vector $\mathbf{x} \in \mathbb{R}^{d}$, can be seen as a map:
\begin{equation}
\label{eq:mlp-single-layer}
\mathbf{x} \rightarrow f(\mathbf{W}\mathbf{x}) \in \mathbb{R}^{d_{\textrm{out}}},    
\end{equation}
where $\mathbf{W} \in \mathbb{R}^{l \times d_{\textrm{out}}}$ is a learned matrix of weights and $f:\mathbb{R} \rightarrow \mathbb{R}$ is a fixed non-linear \textit{activation function} applied element-wise. 
The mapping defined in Eq. \ref{eq:mlp-single-layer} is a non-linear function of $\mathbf{x}$ but it can be approximated by a product of two feature maps,
$f(\mathbf{W}\mathbf{x}) \approx \Psi(\mathbf{W})\Phi(\mathbf{x})$
where $\Phi(\mathbf{x})$ depends only on the input and $\Psi(\mathbf{W})$ depends only on the weights. More specifically, we consider layers of the form
\begin{equation}
\label{eq:linearized-mlp-base}
\mathbf{x} \rightarrow \Psi(\mathbf{W})\Phi(\mathbf{x}),
\end{equation}
where $\Phi,\Psi:\mathbb{R}^{d}\rightarrow\mathbb{R}^{F}$ are nonlinear, potentially randomized feature maps, with $\Psi$ applied row-wise to $\mathbf{W}$. We will refer to the map defined above as the $(\Psi,\Phi)$-linearized PL (perceptron layer).
It was shown in \citep{snnk} that for large classes of the activation functions $f$, randomized maps $\Phi=\Phi_{f}, \Psi=\Psi_{f}$ can be constructed with the use of \textit{Random Fourier Features} (RFFs) \citep{rffs, random-kitchen-sinks} in such a way that the mechanism proposed in Eq. \ref{eq:linearized-mlp-base} provides an \textbf{unbiased} estimation of the map from Eq. \ref{eq:mlp-single-layer}:
\begin{equation}
\label{e:unbiasedness}
f(\mathbf{Wx}) = \mathbb{E}[\Psi_{f}(\mathbf{W})\Phi_{f}(\mathbf{x})],    
\end{equation}
with the expectation taken with respect to the randomness in $\Psi_{f}, \Phi_{f}$ (often one can take: $\Psi_{f}=\Phi_{f}$). 
Note that Eq. \ref{eq:linearized-mlp-base} is linear in $\Phi(\mathbf{x})$ and that fact itself directly leads to straightforward computational improvements \citep{snnk}.
The key observation is that even though the map $\mathbf{x} \rightarrow \Psi(\mathbf{W})\Phi(\mathbf{x})$ is still non-linear in $\mathbf{x}$, as a linear transformation of $\Phi(\mathbf{x})$, it \textbf{disentangles} the input $\mathbf{x}$ from the weight matrix $\mathbf{W}$, only to entangle them at the end of the entire computational process via simple dot-product calculations.  

However achieving an approximation that is not only unbiased, but is also highly accurate, i.e. characterized by low variance, is in general a difficult problem. This challenge directly translates to the accuracy gap that is observed between multi-layer modules using $(\Psi,\Phi)$-linearized PLs and regular MLPs. For particular classes of activation functions, such as polynomials, random features based on tensor products can replace RFFs to provide a simple linearized mechanism with low variance \citep{eugens}. 


\begin{wrapfigure}{r}{0.5\textwidth}
  \centering
  \includegraphics[width=0.48\textwidth]{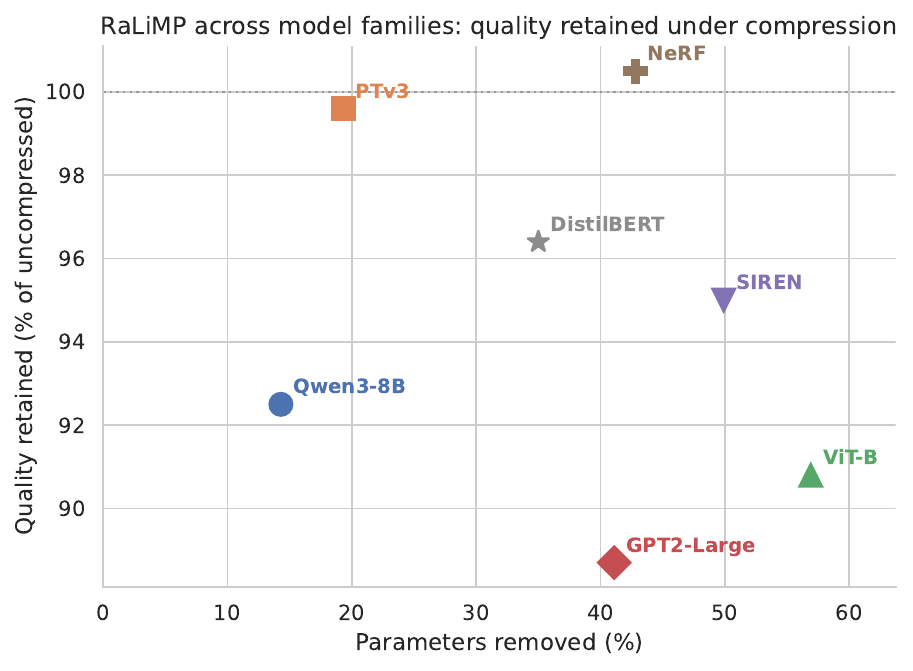}
  \caption{\small{\textbf{Compression–quality trade-off across diverse model families.} Each point represents a model family compressed using RaLiMPs.}}
  \label{fig:compression_vs_accuracy}
\end{wrapfigure}

Several other approaches were proposed to improve the computational efficiency of MLPs, yet most of them departure from the low-variance estimation of the regular MLPs. Many of these approaches focus on replacing $\mathbf{W}$ with its structured counterpart \citep{Dao-2, sindhwani-1, low-rank-swish}, supporting sub-quadratic matrix-vector multiplication.
For instance, Monarch-matrices \citep{monarch} are defined as products of block-diagonal and permutation matrices. Their successors, the so-called \textit{Block Tensor-Train} (BTT) matrices \citep{btt} (containing Monarch-matrices as special cases), derive from the rich theory on Tensorized Neural Networks \citep{tnns}, leveraging sums over Kronecker products.

In this paper, we revisit the $(\Psi,\Phi)$-linearized MLP framework in the context of recent work suggesting that rational functions are particularly effective choices for MLP activations \citep{rational-analysis}. We show that $(\Psi,\Phi)$-linearized MLPs can achieve high accuracy while maintaining an attractive computational footprint when approximating low-degree rational activation functions. To the best of our knowledge, this combination has not been explored previously. Low-degree rational functions are also substantially easier to approximate accurately than the higher-degree functions considered in much of the prior work, making them well suited to random feature methods.
Our contributions are as follows :
\begin{itemize}[nosep]
    \item We present \textit{Rationally Linearized  Multilayer Perceptrons} (RaLiMP), a novel $(\Psi, \Phi)$ linearized layer and prove results on unbiasedness, variance and approximation errors (Sec.~\ref{sec:algorithm}).
    \item We comprehensively test RaLiMP across various tasks like 3D modeling, vision and language modeling against other linearized MLPs and the baselines (Sec.~\ref{sec:exp}).
\end{itemize}

\section{Rationally Linearized Multilayer Perceptrons (RaLiMPs)}
\label{sec:algorithm}

\subsection{Preliminaries}
\label{sec:preliminaries}

Let $\bm{x}\in\R^{d}$ be an input and $\bm{W}\in\R^{l \times d}$ a (pretrained) weight matrix with rows
$\bm{w}_{i}$, and let $f$ be an activation. So the standard feed-forward layer computes $f(\bm{x}\!\cdot\!\bm{w}_i)$. We do not
linearize $f$ directly: we replace it by a rational surrogate $R\approx f$ and linearize $R$. In particular, we construct feature maps $\Phi:\R^{d}\!\to\!\R^{F}$ (on the input) and
$\Psi:\R^{d}\!\to\!\R^{F}$ (on each weight row), where
$\bm{a} \cdot \bm{b} \overset{\mathrm{def}}{=} \bm{a}^{\top}\bm{b}$:
\begin{equation}
  \inner{\Phi(\bm{x})}{\Psi(\bm{w})}\;\approx\;R(\bm{x}\cdot\bm{w}),
  \qquad R(t)=P(t)+\sum_{j} \frac{A_j+B_j(t-p_j)}{(t-p_j)^2+q_j^2},
  \qquad t=\bm{x}\cdot \bm{w}
  \label{eq:pf}
\end{equation}
with a partial-fraction decomposition into a polynomial part $P(t)=\sum_{n=0}^{K}c_n t^n$ and
irreducible-quadratic terms (denominators with no real zero). This decomposition is the governing principle of the construction. We build one separable feature for each part:
\begin{itemize}[nosep]
  \item the \textbf{polynomial part} $P$ is handled by product features that are separable and unbiased
  for each monomial $(\bm{x}\cdot\bm{w})^n$ (Sec.~\ref{sec:alg-polynomial});
  \item each \textbf{irreducible-quadratic atom} is handled by writing it as a Laplace integral over
  frequencies $s$ of $\cos(st)$, and estimating each $\cos(st)=\mathrm{Re}\,(e^{ist})$ with separable
  Gaussian features (Secs.~\ref{sec:alg-atoms-base} and~\ref{sec:gen}).
\end{itemize}
A concatenation of separable blocks is separable, so their sum is a single map of the form
Eq.~\ref{eq:pf}. We first present the centered atom $1/(1+t^2)$ (Sec.~\ref{sec:alg-atoms-base}) and give
the general shifted term in Sec.~\ref{sec:gen}. Because $\Phi$ depends only on $\bm{x}$ and $\Psi$ only on
$\bm{W}$, the full output is a matrix-vector product $\bm{y}=\Psi(\bm{W})\Phi(\bm{x})$, where
$\Psi(\bm{W})\in\R^{l\times F}$ stacks $\Psi(\bm{w}_i)$ as rows.

\subsection{Handling Polynomial Parts}
\label{sec:alg-polynomial}
Eq.~\ref{eq:pf} requires only that the polynomial part be realized by a separable block with finite variance. Any such mechanism may be used, and the guarantee of App.~\ref{sec:total_error}
treats it as one additive term. A polynomial linearization is the natural choice. We use the EUGen mechanism \citep{eugens}: for degree $n$ it draws shared Gaussian projections
$\bm G^{(1)},\dots,\bm G^{(n)}\in\R^{m\times d}$ and, since $\E[\bm G_r^{\top}\bm G_r]=\bm I_d$,
estimates $(\bm x\!\cdot\!\bm w)^{n}$ unbiasedly by averaging
$\prod_{i=1}^{n}(\bm G^{(i)}_r\!\cdot\!\bm x)(\bm G^{(i)}_r\!\cdot\!\bm w)$ over rows $r$. This is
separable, with features $\phi_n(\bm u)_r=\prod_{i=1}^{n}(\bm G^{(i)}_r\!\cdot\!\bm u)$ for
$n\ge1$ and $\phi_0(\bm u)=1$, where $\bm u\in\{\bm x,\bm w\}$; the block contributes
$c_0+\sum_{n=1}^{K}c_n\tfrac1m\inner{\phi_n(\bm x)}{\phi_n(\bm w)}\approx P(t)$ and is of width $1+Km$. Note that this polynomial part may also be approximated by any random feature mechanism with more favorable variance or cost guarantees.

\subsection{Handling Irreducible-Quadratic Atoms: centered atom $\frac{1}{1+t^{2}}$}
\label{sec:alg-atoms-base}
This atom is bounded and saturating, whereas any Taylor
truncation diverges for large $t$. We obtain separable features for it in two steps: (1) a deterministic frequency
quadrature, and then (2) separable softmax-kernel features per frequency, whose real and
imaginary parts estimate $\cos(st)$ and $\sin(st)$.

\paragraph{Frequency quadrature.} Using the Laplace identity,
\begin{equation}
  \frac{1}{1+t^2}=\int_{0}^{\infty} e^{-s}\cos(s\,t)\,ds
  \;\approx\;\sum_{k=1}^{n_{\mathrm{freq}}} \gamma_k\,\cos(s_k\,t),
  \label{eq:gl}
\end{equation}
where $\{s_k,\gamma_k\}$ are the Gauss--Laguerre \citep{gauss-laguerre} nodes and weights ($\gamma_k$
carry the $e^{-s}$ factor). Nodes above a cutoff $s_{\max}$ are dropped and $n_{\mathrm{freq}}$
counts those retained, so the sum discards a fraction $\sum_{s_k>s_{\max}}\gamma_k$ of the unit total
weight $\int_0^\infty e^{-s}ds$ in exchange for bounding the feature growth of the next step. Since $e^{-s}$
is a density this integral could instead be sampled; App.~\ref{sec:quadrature_vs_mc} shows why deterministic
nodes are preferable. It remains to write each $\cos(s_k\,t)$ separably.

\subsubsection{The rational random features}
\label{sec:gen_rat}

We now unbiasedly estimate each factor $\cos(s_{k}t)$ (and $\sin(s_{k}t)$). For this we use the softmax-
and Gaussian-kernel approximation mechanism of Performers~\citep{performers} rather than regular random
Fourier features~\citep{rffs}, which linearize shift-invariant kernels and not functions of
$\bm x\!\cdot\!\bm w$. Fix a node $s\equiv s_k$, draw $m$ i.i.d.\ Gaussian
projections $\bm\omega_1,\dots,\bm\omega_m\sim\mathcal N(\bm0,\bm I_d)$, and set the following real scalars for a vector $\bm u$
($=\bm x$ or $\bm w$) and a scaling factor $\tau$:
\begin{equation}
  Z_r(\bm u)=\sqrt{s/2}\,\big(\bm\omega_r\!\cdot\!\bm u/\tau\big),\qquad
  h_r(\bm u)=\tfrac{s}{2}\|\bm u/\tau\|^2,\qquad r=1,\dots,m.
\end{equation}
The rational random feature maps are the two length-$m$ vectors
\begin{equation}
  \boxed{\;
  \fcos(\bm u)_r=\tfrac{1}{\sqrt m}\,e^{Z_r(\bm u)}\cos\!\big(Z_r(\bm u)-h_r(\bm u)\big),
  \qquad
  \fsin(\bm u)_r=\tfrac{1}{\sqrt m}\,e^{Z_r(\bm u)}\sin\!\big(Z_r(\bm u)-h_r(\bm u)\big).\;}
  \label{eq:ratfeat}
\end{equation}
These are the real and imaginary parts of the softmax-kernel feature
$\varphi_\lambda(\bm u)_r=\tfrac1{\sqrt m}\exp\!\big(\lambda\,\bm\omega_r\!\cdot\!\bm u/\tau
-\tfrac12\lambda^2\|\bm u/\tau\|^2\big)$ with the complex scale $\lambda=\sqrt{s/2}\,(1+i)$
($\lambda^2=is$), which satisfies
$\E[\varphi_\lambda(\bm x)_r\varphi_\lambda(\bm w)_r]=\frac{1}{m}\exp(\lambda^2\,\bm x\!\cdot\!\bm w/\tau^2)=e^{ist/\tau^2}$ \citep{performers}.
Since $\tau$ rescales both arguments it acts on the scalar variable as $t\mapsto t/\tau^{2}$. For simplicity, we take $\tau=1$
below and calibrate it in App.~\ref{sec:tau_appendix}.
This partial-fraction, separable-feature view
subsumes prior polynomial and Fourier random-feature constructions of MLPs~\citep{eugens,snnk} as instances
of one kernel representation, which we make precise in App.~\ref{sec:unified_appendix}. Summing the
real (resp.\ imaginary) part of the product over $r$ gives the unbiased estimators of $\cos(s_k t)$ (resp.
$\sin(s_k t)$) as $1/\sqrt m$ factor is already inside the formula in Eq.~\ref{eq:ratfeat}:
\begin{align}
  \widehat C_k(\bm x,\bm w)&=\sum_{r}\big[\fcos(\bm x)_r\fcos(\bm w)_r-\fsin(\bm x)_r\fsin(\bm w)_r\big]
    \;\to\;\cos(s_k t),\label{eq:Cest}\\
  \widehat S_k(\bm x,\bm w)&=\sum_{r}\big[\fcos(\bm x)_r\fsin(\bm w)_r+\fsin(\bm x)_r\fcos(\bm w)_r\big]
    \;\to\;\sin(s_k t).\label{eq:Sest}
\end{align}
Lemma~\ref{lem:unbiased} below verifies these limits. Complete proof is in App.~\ref{sec:unbiased_appendix}.
\begin{lemma}[Unbiasedness]\label{lem:unbiased}
Let $\bm\omega\sim\mathcal N(\bm0,\bm I_d)$ and $t=\bm w\!\cdot\!\bm x$. Then for $\tau=1$,
\begin{equation}
\E\big[\widehat C_k\big]=\cos(s_k t),\qquad
\E\big[\widehat S_k\big]=\sin(s_k t),
\end{equation}
for general $\tau>0$, applying this to $\bm x/\tau$ and $\bm w/\tau$ gives
$\E[\widehat C_k]=\cos(s_k t/\tau^{2})$, $\E[\widehat S_k]=\sin(s_k t/\tau^{2})$.
\end{lemma}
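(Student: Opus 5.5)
The plan is to identify $\widehat C_k$ and $\widehat S_k$ as the real and imaginary parts of a single complex estimator. Then everything reduces to one Gaussian moment-generating-function computation with a complex argument. Fix $s=s_k$, $\tau=1$, and $\lambda=\sqrt{s/2}\,(1+i)$, so that $\lambda^2=is$. Write $\varphi_\lambda(\bm u)_r=\fcos(\bm u)_r+i\,\fsin(\bm u)_r$. This holds because
\[
\lambda\,\bm\omega_r\cdot\bm u-\tfrac12\lambda^2\|\bm u\|^2=Z_r(\bm u)+i\big(Z_r(\bm u)-h_r(\bm u)\big).
\]
Indeed, the real part of the exponent is $\sqrt{s/2}\,\bm\omega_r\cdot\bm u=Z_r$. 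The imaginary part is $\sqrt{s/2}\,\bm\omega_r\cdot\bm u-\tfrac{s}{2}\|\bm u\|^2=Z_r-h_r$. Multiplying two such complex numbers (without conjugation) gives
\[
\varphi_\lambda(\bm x)_r\varphi_\lambda(\bm w)_r=\big[\fcos\fcos-\fsin\fsin\big]+i\big[\fcos\fsin+\fsin\fcos\big],
\]
with the arguments placed as in Eq.~\ref{eq:Cest}--\ref{eq:Sest}. Hence
\[
\widehat C_k+i\widehat S_k=\sum_r\varphi_\lambda(\bm x)_r\varphi_\lambda(\bm w)_r .
\]

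The core step is to compute $\E[\varphi_\lambda(\bm x)_r\varphi_\lambda(\bm w)_r]$ for a single $r$. Up to the factor $1/m$, this equals
\[
\exp\!\big(-\tfrac12\lambda^2(\|\bm x\|^2+\|\bm w\|^2)\big)\,\E\big[e^{\lambda\,\bm\omega\cdot(\bm x+\bm w)}\big].
\]
I will use the Gaussian MGF $\E[e^{\bm z\cdot\bm\omega}]=e^{\frac12\bm z\cdot\bm z}$, which holds for \emph{complex} $\bm z\in\mathbb C^d$. This is the only step requiring justification. Since $\bm\omega$ has independent coordinates, it reduces to the scalar identity $\E[e^{zg}]=e^{z^2/2}$ for $g\sim\mathcal N(0,1)$ and $z\in\mathbb C$. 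Both sides are entire functions of $z$: the left side is entire because $|e^{zg}|=e^{(\mathrm{Re}\,z)g}$ is integrable locally uniformly in $z$, so differentiation under the integral is allowed. The two sides agree on $\mathbb R$, so they agree everywhere by the identity theorem. Alternatively, one can complete the square and shift the contour. Applying this with $\bm z=\lambda(\bm x+\bm w)$ gives exponent
\[
\tfrac12\lambda^2\|\bm x+\bm w\|^2-\tfrac12\lambda^2(\|\bm x\|^2+\|\bm w\|^2)=\lambda^2\,\bm x\cdot\bm w=is\,t .
\]
Summing over the $m$ terms cancels the $1/m$, so $\E[\widehat C_k+i\widehat S_k]=e^{is_kt}$.

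Next I take real and imaginary parts. Expectation commutes with taking real and imaginary parts, and $\widehat C_k$, $\widehat S_k$ are real. This yields $\E[\widehat C_k]=\cos(s_kt)$ and $\E[\widehat S_k]=\sin(s_kt)$. Integrability holds since $|\varphi_\lambda|\le\frac1{\sqrt m}e^{Z_r}$ and $e^{Z_r(\bm x)+Z_r(\bm w)}$ has a finite (real) Gaussian MGF.

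For general $\tau$, the features depend on $\bm u$ only through $\bm u/\tau$. Applying the $\tau=1$ result to the pair $(\bm x/\tau,\bm w/\tau)$ replaces $t$ by $(\bm x/\tau)\cdot(\bm w/\tau)=t/\tau^2$, which gives the stated formulas. I expect the main obstacle to be the rigorous extension of the Gaussian MGF to complex arguments, handled by analytic continuation as above. The rest is bookkeeping of real and imaginary parts.
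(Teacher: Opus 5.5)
Your proposal is correct and is essentially the paper's argument. Multiplying $\varphi_\lambda(\bm x)_r\varphi_\lambda(\bm w)_r$ without conjugation is exactly the angle-addition step by which the paper collapses each summand to $\tfrac1m e^{\xi}\cos(\xi-H)$ (resp.\ $\sin$), and both proofs then evaluate the same complex-argument Gaussian moment generating function via $\lambda^2=is$, i.e.\ $(1+i)^2=2i$; your analytic-continuation justification of $\E[e^{zg}]=e^{z^2/2}$ for complex $z$ and your integrability check are details the paper leaves implicit.
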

\begin{proof}[Proof sketch]
Let $\tau = 1$. By angle addition, each $\widehat C_k$ summand is $\tfrac1m\,e^{\xi}\cos(\xi-H)$ where $\xi:=\sqrt{s/2}\,\bm\omega\!\cdot\!(\bm x+\bm w)$ a
centered Gaussian with variance $\tfrac s2\|\bm x+\bm w\|^2$ and $H:=\tfrac s2(\|\bm x\|^2+\|\bm w\|^2)$.
Summing them, $\E[\widehat C_k]=\E\,e^{\xi}\cos(\xi-H)$; since
$e^{\xi}\cos\xi=\mathrm{Re}\,e^{(1+i)\xi}$ and $(1+i)^2=2i$, the Gaussian moment generating function
evaluates this to $\cos(st)$ ($\widehat S_k$ case  identical with $\mathrm{Re}\!\to\!\mathrm{Im}$). 
\end{proof}

\paragraph{Feature towers.} We now present the approximation of the centered atom $\frac{1}{1+t^{2}}$ (Eq.~\ref{eq:gl}). In that case, we only need $\widehat C_k$. To write it as an inner product, split its two terms into two sub-blocks and fold the weight
$\gamma_k$ and the minus sign onto the input side. Length-$2m$ towers, per node $k$ are
\begin{equation}
  \phi^{\mathrm{rat}}_k(\bm x)=\big[\ \gamma_k\,\fcos(\bm x)\ ,\ -\gamma_k\,\fsin(\bm x)\ \big],
  \qquad
  \psi^{\mathrm{rat}}_k(\bm w)=\big[\ \fcos(\bm w)\ ,\ \fsin(\bm w)\ \big].
  \label{eq:rattower}
\end{equation}
They satisfy $\phi^{\mathrm{rat}}_k(\bm x) \cdot \psi^{\mathrm{rat}}_k(\bm w)=\gamma_k\widehat C_k\to
\gamma_k\cos(s_k t)$. Concatenating over the $n_{\mathrm{freq}}$ nodes gives
$\phi^{\mathrm{rat}}(\bm x),\psi^{\mathrm{rat}}(\bm w)\in\R^{2 n_{\mathrm{freq}} m}$ with
$\phi^{\mathrm{rat}}(\bm x) \cdot\psi^{\mathrm{rat}}(\bm w)\approx\sum_k\gamma_k\cos(s_k t)\approx
1/(1+t^2)$. Both towers share the same functional form; only the $\bm x$-side carries the coefficients
$\gamma_k$ and signs, so $\psi^{\mathrm{rat}}(\bm w)$ depends on $\bm w$ alone.
The weights $\gamma_k$, initialized at the quadrature values of Eq.~\ref{eq:gl}, and the projections
$\bm\omega_r$ can be made trainable. The general shifted atom (Eq.~\ref{eq:genatom}) additionally uses $\widehat S_k$; its tower is constructed in Sec.~\ref{sec:gen}.

\paragraph{Inference Speedups.} Since each block separates the weights from the inputs, its computation can be fused into a subsequent linear layer. This eliminates intermediate matrices, reducing the deployed model size and accelerating inference~(Sec.~\ref{sec:bundling}).

\subsubsection{Variance Analysis}
\label{sec:variance}
The estimator presented in the previous section is unbiased, but its summand $e^{\xi}\cos(\xi-H)$ is
unbounded. Set $\sigma^2=\tfrac s2\|\bm x+\bm w\|^2/\tau^2$ and let $H=\tfrac s2(\|\bm x\|^2+\|\bm w\|^2)$ as in sketch proof of lemma~\ref{lem:unbiased}, let
$Y^c_r=m\big[\fcos(\bm x)_r\fcos(\bm w)_r-\fsin(\bm x)_r\fsin(\bm w)_r\big]$ and $Y^s_r$ the analogous
$\widehat S_k$ term, so that $\widehat C_k=\tfrac1m\sum_r Y^c_r$ (and likewise for $\widehat S_k$). The
second moments are exact. Detailed proof is in App.~\ref{sec:variance_appendix}.
\begin{proposition}[Second moments]\label{prop:variance}
For each projection, $\E[(Y^c)^2]=\tfrac12\big(e^{2\sigma^2}+\cos(4\sigma^2-2H/\tau^2)\big)$ and
$\E[(Y^s)^2]=\tfrac12\big(e^{2\sigma^2}-\cos(4\sigma^2-2H/\tau^2)\big)$. Consequently, averaging over $m$ i.i.d.\
projections,
\begin{equation}
  \mathrm{Var}\big(\widehat C_k\big)\;\le\;\frac{1}{2m}\Big(e^{2\sigma^2}+1\Big),
  \label{eq:varC}
\end{equation}
\begin{equation}
  \mathrm{Var}\big(\widehat C_k\big)+\mathrm{Var}\big(\widehat S_k\big)
  \;=\;\frac1m\Big(e^{2\sigma^2}-1\Big).
  \label{eq:varsum}
\end{equation}
\end{proposition}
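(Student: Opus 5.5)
We work with $\tau=1$ throughout; the general case follows by replacing $\bm x,\bm w$ with $\bm x/\tau,\bm w/\tau$, exactly as in Lemma~\ref{lem:unbiased}. The plan is to reuse the reduction from the proof sketch of Lemma~\ref{lem:unbiased}. By angle addition, a single projection gives
$$Y^c=e^{\xi}\cos(\xi-H),\qquad Y^s=e^{\xi}\sin(\xi-H),$$
where $\xi=\sqrt{s/2}\,\bm\omega\cdot(\bm x+\bm w)\sim\mathcal N(0,\sigma^2)$ with $\sigma^2=\tfrac s2\|\bm x+\bm w\|^2$. Indeed, $Y^c+iY^s=e^{\xi}e^{i(\xi-H)}$ is the product of the two complex features. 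The exponent of $Z_r(\bm x)+Z_r(\bm w)$ equals $\xi$, and the phases add to $\xi-H$.

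Next we square. Using $\cos^2\theta=\tfrac12(1+\cos 2\theta)$ and $\sin^2\theta=\tfrac12(1-\cos2\theta)$,
$$(Y^c)^2=\tfrac12 e^{2\xi}\big(1+\cos(2\xi-2H)\big),\qquad (Y^s)^2=\tfrac12 e^{2\xi}\big(1-\cos(2\xi-2H)\big).$$
The first term has expectation $\E e^{2\xi}=e^{2\sigma^2}$ by the Gaussian MGF. For the second, write
$$e^{2\xi}\cos(2\xi-2H)=\mathrm{Re}\big(e^{-2iH}e^{(2+2i)\xi}\big).$$
The complex Gaussian MGF $\E e^{z\xi}=e^{z^2\sigma^2/2}$ is valid for all complex $z$ by analytic continuation, or by direct completion of the square. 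With $z=2+2i$ we have $z^2=8i$, so this expectation is $e^{4i\sigma^2}$. Hence
$$\E\big[e^{2\xi}\cos(2\xi-2H)\big]=\cos(4\sigma^2-2H).$$
This yields both stated second moments.

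For the variance consequences, the projections are i.i.d., so $\mathrm{Var}(\widehat C_k)=\tfrac1m\mathrm{Var}(Y^c)=\tfrac1m\big(\E(Y^c)^2-\cos^2(s t)\big)$, using the mean $\cos(st)$ from Lemma~\ref{lem:unbiased}. Dropping the nonnegative term $\cos^2(st)$ and bounding $\cos(\cdot)\le1$ gives \eqref{eq:varC}. Summing the two second moments, the cosine terms cancel, so $\E(Y^c)^2+\E(Y^s)^2=e^{2\sigma^2}$. Subtracting the squared means $\cos^2(st)+\sin^2(st)=1$ gives \eqref{eq:varsum} exactly.

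The only delicate step is justifying the complex-argument Gaussian MGF. This is the same tool already used in Lemma~\ref{lem:unbiased}, and the integrand is dominated by $e^{2|\xi|}$, which is integrable. Otherwise the work is bookkeeping: tracking $\tau$ so that $H$ becomes $H/\tau^2$ while $\sigma^2$ already includes $\tau^2$ by definition, and checking consistency. One consistency check is $\cos(4\sigma^2-2H)=\cos(2st)$, since $4\sigma^2-2H=s(\|\bm x+\bm w\|^2-\|\bm x\|^2-\|\bm w\|^2)=2st$. This could even sharpen \eqref{eq:varC}, but the stated bound suffices.
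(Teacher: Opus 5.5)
Your argument is correct and is essentially the paper's proof: you reduce to $Y^c=e^{\xi}\cos(\xi-H)$ and $Y^s=e^{\xi}\sin(\xi-H)$ by angle addition, apply the half-angle identities and the complex Gaussian MGF with $(2+2i)^2=8i$, let the cosine terms cancel in the sum, and use $\cos\le 1$ and $\cos^2\ge 0$ for the bound on $\mathrm{Var}(\widehat C_k)$. One correction to your closing consistency check: $4\sigma^2=2s\|\bm x+\bm w\|^2$ (you effectively used $2\sigma^2$), so $4\sigma^2-2H=s(\|\bm x\|^2+\|\bm w\|^2)+4st\neq 2st$ in general (e.g.\ at $\bm x=\bm 0$ the phase is $s\|\bm w\|^2$ while $t=0$), and the suggested sharpening, which would give $\mathrm{Var}(\widehat C_k)=\frac{1}{2m}(e^{2\sigma^2}-1)$ exactly, is false; since you never use it, the proof of the proposition itself is unaffected.
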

The variance depends on the inputs only through $\|\bm x+\bm w\|$ and grows like $e^{2\sigma^2}$, a norm dependence that persists even though the target $\cos(s_k t)$ is bounded. The estimator is
thus finite-variance whenever the norms are bounded (e.g.\ post-LayerNorm on a sphere). The cutoff $s_{\max}$ and the input scale
$\tau$ are therefore the two quantities that control it. For a
\emph{sum} of atoms the variance is moreover governed by the narrowest pole width
$q_{\min}=\min_j q_j$ (Cor.~\ref{cor:variance_atom}), not by $s_{\max}$ and $\tau$ alone. The scale $\tau$ is
the practical handle, since the exponent enters as $2\sigma^2$; we calibrate it per layer in
App.~\ref{sec:tau_appendix}.



\subsection{Handling Irreducible-Quadratic Atoms: General Shifted Atom}
\label{sec:gen}
The centered atom extends to the general partial-fraction term
\begin{equation}
  g(t)=\frac{A+B(t-p)}{(t-p)^2+q^2},\qquad q>0,
  \label{eq:genatom}
\end{equation}
whose denominator has complex-conjugate roots $p\pm iq$. A rational
activation with real coefficients that is finite on $\R$ has no real pole, so its denominator factors into
such conjugate pairs and $R$ decomposes as the polynomial $P$ plus one atom per simple pair (repeated pairs in App.~\ref{sec:repeated_poles}). Both parts share the centered atom's Laplace representation, at frequency $s/q$ with
the shift $p$ as a phase:
\begin{equation}
  \frac{1}{(t-p)^2+q^2}=\frac1{q^2}\!\int_0^\infty\!\! e^{-s}\cos\!\big(\tfrac{s}{q}(t-p)\big)ds,
  \quad
  \frac{t-p}{(t-p)^2+q^2}=\frac1{q}\!\int_0^\infty\!\! e^{-s}\sin\!\big(\tfrac{s}{q}(t-p)\big)ds.
\end{equation}
Discretizing with the same nodes at rescaled frequency $s_k/q$, and using
$\cos(\tfrac{s_k}{q}(t-p))=\cos(\tfrac{s_k}{q}p)\,\widehat C_k+\sin(\tfrac{s_k}{q}p)\,\widehat S_k$ (with analogous identity for $\sin$), both parts are separable via the same estimators (Eqs.~\ref{eq:Cest}--\ref{eq:Sest}),
evaluated at $s_k\!\to\!s_k/q$. The shift $p$ mixes the two estimators: the even part ($A$) and the odd
part ($B$) each combine $\widehat C_k$ and $\widehat S_k$ with the phase weights above, so the sine
estimator that the centered atom leaves unused is needed here. The centered atom is the case
$p=0,q=1,A=1,B=0$. A finite sum of Eq.~\ref{eq:genatom} blocks concatenates into one separable inner
product, so with $J$ atoms the total width is $F=1+Km+4Jn_{\mathrm{freq}}m$.

\subsection{Approximation Error Guarantees}
Combining the three blocks, the layer approximates \emph{any} rational-representable activation with a controlled error on a bounded interval. We present the theorem below with detailed proof in App.~\ref{sec:total_error}.
\begin{theorem}[Total error on an interval]\label{thm:total}
Fix $[-T,T]$ containing the pre-activations of interest. Let $\varepsilon_{\mathrm{rat}}=\sup_{|t|\le
T}|R(t)-f(t)|$ be the rational fit error, $\varepsilon_{\mathrm{quad}}=\sup_{|t|\le T}|\widehat g(t)-g(t)|$
the quadrature-truncation error (here $g=\sum_j g_j$ is the non-polynomial part of $R$, each $g_j$ of
the form Eq.~\ref{eq:genatom}, and $\widehat g$ replaces each atom's Laplace integral by the retained nodes
$s_k\le s_{\max}$), and $V_{\mathrm{poly}},V_{\mathrm{rat}}$ the variances of the polynomial and rational
blocks ($V_{\mathrm{rat}}$ bounded by Cor.~\ref{cor:variance_atom}). Then for every $|t|\le T$,
\begin{equation}
  \Big(\E\big[\big(\inner{\Phi(\bm x)}{\Psi(\bm w)}-f(t)\big)^2\big]\Big)^{1/2}
  \;\le\;\varepsilon_{\mathrm{rat}}+\varepsilon_{\mathrm{quad}}+\sqrt{V_{\mathrm{poly}}+V_{\mathrm{rat}}}.
  \label{eq:total}
\end{equation}
\end{theorem}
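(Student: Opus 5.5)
The plan is a bias--variance split followed by Minkowski's inequality in $L^2(\mathbb{P})$. Write $\widehat Y=\inner{\Phi(\bm x)}{\Psi(\bm w)}$ and split it by blocks as $\widehat Y=\widehat Y_{\mathrm{poly}}+\widehat Y_{\mathrm{rat}}$. By Sec.~\ref{sec:alg-polynomial}, $\E[\widehat Y_{\mathrm{poly}}]=P(t)$, since each monomial estimator is unbiased. For the rational block we use Lemma~\ref{lem:unbiased} at the rescaled frequencies $s_k/q_j$. The phase identity of Sec.~\ref{sec:gen} writes each retained $\cos(\tfrac{s_k}{q}(t-p))$ and $\sin(\tfrac{s_k}{q}(t-p))$ as a deterministic linear combination of $\widehat C_k,\widehat S_k$, so by linearity $\E[\widehat Y_{\mathrm{rat}}]=\widehat g(t)$. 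This is the retained-node quadrature of $g=\sum_j g_j$. Hence $\E[\widehat Y]=P(t)+\widehat g(t)$.

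Next we decompose the error as
\[
\widehat Y-f(t)=\big(\widehat Y-\E\widehat Y\big)+\big(\widehat g(t)-g(t)\big)+\big(R(t)-f(t)\big),
\]
using $R=P+g$. The last two terms are deterministic. By the definitions in the statement, for $|t|\le T$ their absolute values are at most $\varepsilon_{\mathrm{quad}}$ and $\varepsilon_{\mathrm{rat}}$. Applying the triangle inequality in $L^2(\mathbb{P})$ gives
\[
\|\widehat Y-f(t)\|_{L^2}\le \sqrt{\mathrm{Var}(\widehat Y)}+\varepsilon_{\mathrm{quad}}+\varepsilon_{\mathrm{rat}}.
\]
A sharper route is the exact identity $\E[(\widehat Y-f)^2]=\mathrm{Var}(\widehat Y)+(\text{bias})^2$, with bias bounded by $\varepsilon_{\mathrm{rat}}+\varepsilon_{\mathrm{quad}}$, which yields the same form.

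The remaining step, and the main obstacle, is to show $\mathrm{Var}(\widehat Y)\le V_{\mathrm{poly}}+V_{\mathrm{rat}}$. This needs
\[
\mathrm{Var}(\widehat Y_{\mathrm{poly}}+\widehat Y_{\mathrm{rat}})=V_{\mathrm{poly}}+V_{\mathrm{rat}}+2\,\mathrm{Cov}(\widehat Y_{\mathrm{poly}},\widehat Y_{\mathrm{rat}}),
\]
so the cross term must vanish. I would take the projections of the two blocks to be drawn independently: the $\bm G^{(i)}$ for the polynomial part and the $\bm\omega_r$ for the rational towers. The two estimators are then independent and the covariance is zero.

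If the construction instead shares randomness across blocks, I would not try to compute the covariance. Minkowski gives $\sqrt{\mathrm{Var}(\widehat Y)}\le\sqrt{V_{\mathrm{poly}}}+\sqrt{V_{\mathrm{rat}}}\le\sqrt{2(V_{\mathrm{poly}}+V_{\mathrm{rat}})}$, which costs only a factor $\sqrt2$. I would flag that the stated form assumes independent draws.

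Correlations inside the rational block, across atoms $j$ and nodes $k$, are not an issue here. They are absorbed into $V_{\mathrm{rat}}$, whose bound is deferred to Cor.~\ref{cor:variance_atom} and ultimately rests on Proposition~\ref{prop:variance}. Finiteness of $V_{\mathrm{rat}}$ on bounded norms follows from the $e^{2\sigma^2}$ bound with $s\le s_{\max}/q_{\min}$.
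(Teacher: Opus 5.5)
Your proposal is correct and follows the paper's proof essentially verbatim. You use the same three-way split of $\inner{\Phi(\bm x)}{\Psi(\bm w)}-f(t)$ into a mean-zero fluctuation, the deterministic truncation error $\widehat g(t)-g(t)$ and the fit error $R(t)-f(t)$. You then apply the $L^{2}$ triangle inequality, and use independent projections for the polynomial and rational blocks to get $\mathrm{Var}=V_{\mathrm{poly}}+V_{\mathrm{rat}}$. The paper also simply assumes independent draws across the two blocks, so flagging that assumption (with the $\sqrt{2}$ fallback) is a fair addition rather than a discrepancy.
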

The three terms are controlled independently: more poles reduce $\varepsilon_{\mathrm{rat}}$, more nodes and a
larger cutoff reduce $\varepsilon_{\mathrm{quad}}$, and more projections reduce the variance but raising $s_{\max}$
inflates $V_{\mathrm{rat}}$ exponentially through Cor.~\ref{cor:variance_atom}, so the scale $\tau$
(App.~\ref{sec:tau_appendix}) keeps that exponent in range. The proof, a refinement of
$\varepsilon_{\mathrm{quad}}$, and a concentration tail under bounded projections (Cor.~\ref{cor:tail}) are in App.~\ref{sec:total_error}.

The polynomial and rational blocks differ sharply in how their
variance grows as we ask for more accuracy. A degree-$n$ monomial has relative variance at least $3^{n}-1$,
even when $\bm x\parallel\bm w$ (Prop.~\ref{prop:polyvar}), so
pushing the polynomial block to higher accuracy quickly becomes costly. The rational block behaves
differently. Adding poles lowers its approximation error, yet its variance does not grow with the number which is set by the input scale alone. We can
therefore sharpen the fit by adding poles without increasing variance. App.~\ref{sec:polycost} makes the cost precise (Thm.~\ref{thm:cost}).

To summarize, we provided a general $(\Phi,\Psi)$-linearized feedforward layer, combining (1) quadrature acting on the Laplace-transform formulation of the centered atoms and
(2) random features for the unbiased estimation of the softmax kernel~\citep{performers}. We call the
resulting module a Rationally Linearized Multilayer Perceptron (RaLiMP). RaLiMPs replace MLPs of deep
neural networks with more computationally efficient modules.  Taking empty sum in Eq.~\ref{eq:pf} recovers a polynomial-only layer, so the atoms lower the bias $\varepsilon_{\mathrm{rat}}$ by construction: at equal polynomial degree they cut it by one to two orders of magnitude on standard activations, and they
stay bounded where a polynomial diverges. In the next section, we evaluate the RaLiMP mechanism empirically.


\begin{figure}
    \centering
\includegraphics[width=\linewidth]{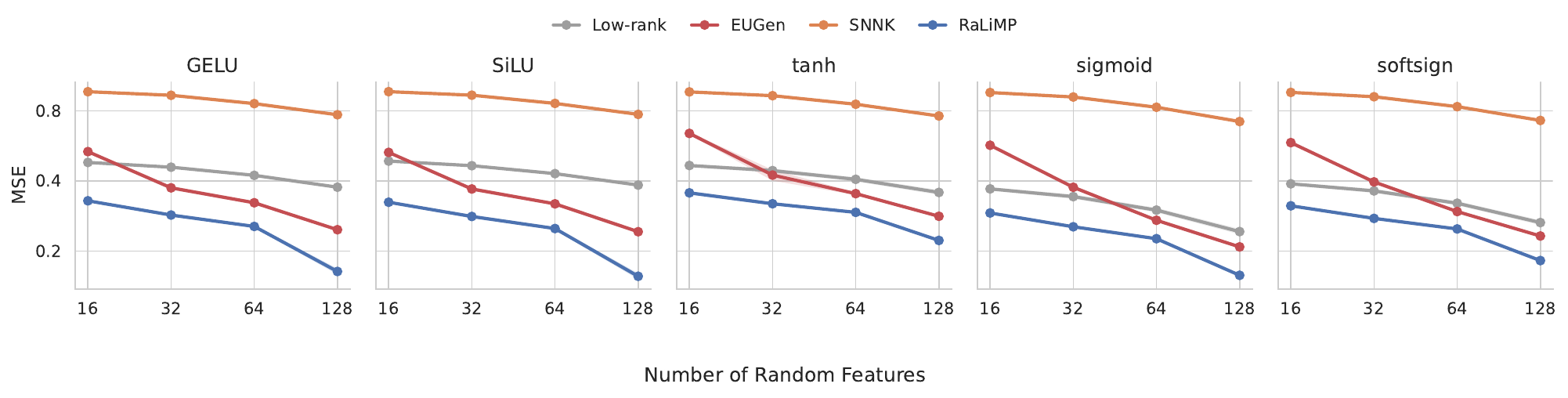}
\vspace{-7mm}
    \caption{\small{\textbf{Synthetic Experiments.}  Approximation for various Feedforward layers with different activations. RaLiMPs beat all other published methods. Error for all methods go down as the number of random features increase. All experiments are run using 10 seeds. }}
    \label{fig:synthetic_expt}
\end{figure}

\section{Experiments}
\label{sec:exp}
In this section, we conduct an exhaustive comparison of RaLIMPs with their counterparts and brute-force baselines. We consider the following research questions : 
\begin{itemize}[nosep]
    \item[(\textbf{RQ1})] How well does RaLiMPs approximate various well-known feed-forward layers?
    \item[(\textbf{RQ2})] How well does RaLiMPs perform in various Transformer architectures across different modalities like text, 2D vision and 3D modeling?
    \item[(\textbf{RQ3})] How well does RaLiMPs generalize beyond Transformer architectures?
\end{itemize}

\subsection{Synthetic Experiments}
In this experiment, we evaluate the approximation quality of RaLiMP layers. Specifically, we test whether RaLiMP layers can approximate the outputs of a fully-connected feedforward layer (FFL), $\mathbf{Y}=f(\mathbf{Wx}+\mathbf{b})$, where $f$ is an activation function. We compare RaLiMP performance with low-rank approximation, SNNK~\citep{snnk} and EUGen~\citep{eugens}.

We report the results in Fig.~\ref{fig:synthetic_expt}, where we compute the MSE loss between the FFL output, $\mathbf{Y}$, and RaLiMP prediction, $\mathbf{\hat Y}$. We observe that RaLiMP achieve significantly lower MSE loss across all activations, showcasing a superior approximation quality of FFLs (additional details in App.~\ref{sec:exp_synth_app}). This provides the answer to (\textbf{RQ1}), showing that RaLiMPs are effective in approximating FFLs.

\subsection{RaLiMPs in Transformers}
\label{sec:ralimp_trans}
Next we focus on replacing feed forward layers in  Transformers for various modalities like 3D modeling, image classification, causal language modeling as well as reasoning (\textbf{RQ2}).
\paragraph{Vision Tasks: Classification \& General Visual Representations}
\label{sec:exp-vit}
We first focus our attention to Vision Transformer (ViT) models \citep{vits}. We start with classic benchmarks involving standard classification datasets: CIFAR-100~\citep{cifar100}, Places365~\citep{places365} and ImageNet-1k~\citep{imagenet}. The results are presented in Fig.~\ref{fig:vit_results_main} (detailed results in Tab.~\ref{table:vit-main-results}). RaLiMP outperforms other efficient MLP substitutes, across different settings (number of layers replaced with the efficient variants). Then, we compare RaLIMP on VTAB-1k~\citep{VTAB}: a diverse and challenging suite of tasks, designed to evaluate general visual representations (fig.~\ref{fig:vit_results_main} with detailed results in Tab. \ref{table-vtab}). RaLiMP provides top performance across all datasets and configurations while reducing memory footprint by up to \textbf{57}\% (Fig.~\ref{fig:efficiency}). VTAB-1k is the hardest setting for these modules. Each task provides only 1000 training examples which are far too few to adapt these layers to new tasks and overcome the approximation errors introduced. This is why weaker baselines collapse while RaLiMP still maintains performance (details in App.~\ref{sec:vit_app}). 
\begin{figure}[h]
    \centering
    \includegraphics[width=0.95\linewidth]{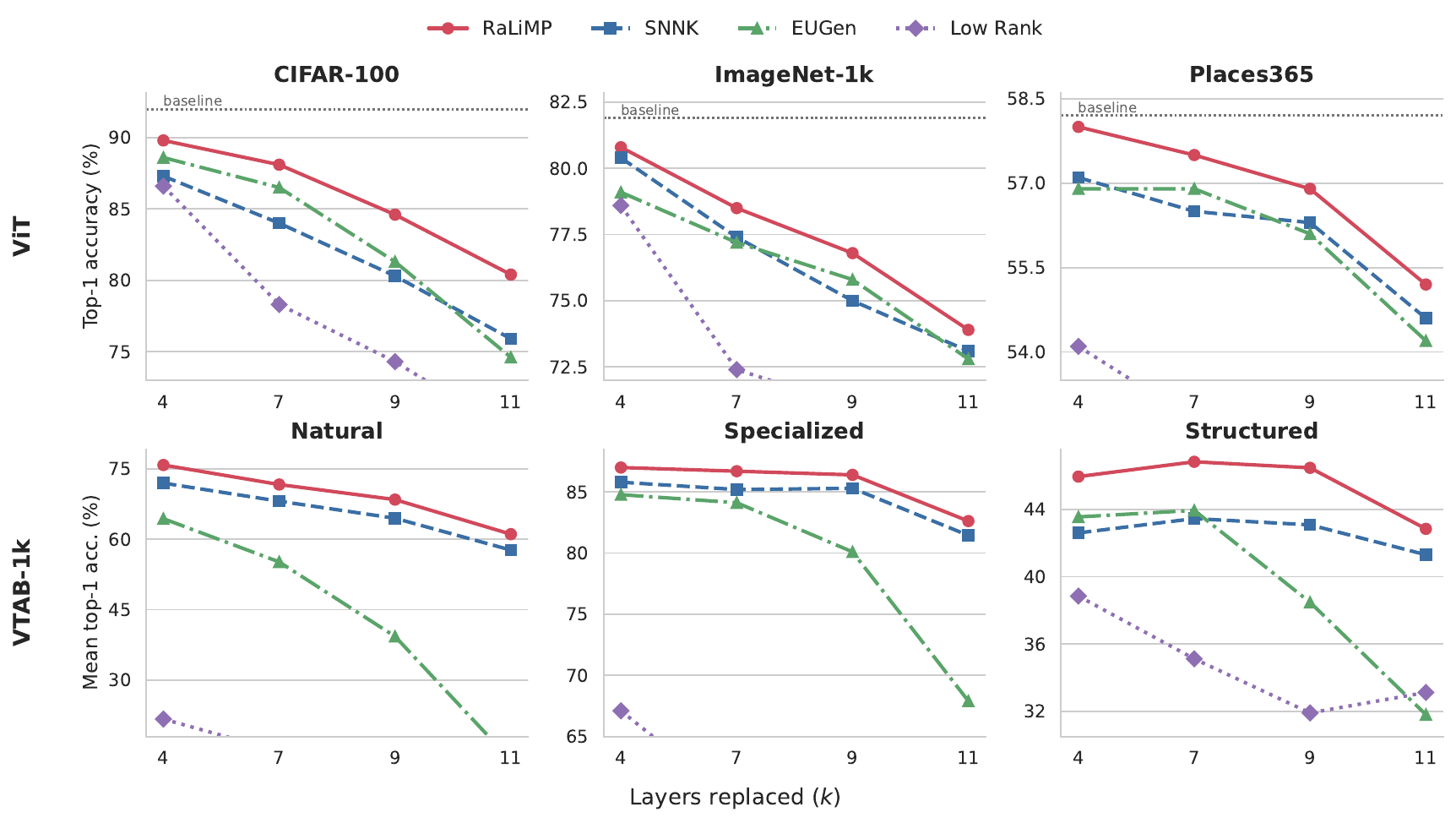}
    \vspace{-4mm}
    \caption{\small{\textbf{ViT Results.} \textit{Top Row :} Classification on CiIFAR-100, ImageNet and Places365. \textit{Bottom Row :} Summarized results on VTAB across different image categories. RaLiMP outperforms all variants. Low Rank and EUGen are consistently weak across all methods at high levels of compression. As more layers are replaced, we get up to \textbf{57}\% reduction in inference parameters. }}
    \label{fig:vit_results_main}
\end{figure}


\begin{table}[h]
\centering
\caption{\small{Qwen3-8B accuracy (\%) at matched inference budget with the top-$k$ feed-forward layers replaced. For the high-variance benchmarks
(AMC23 $n{=}83$, AIME24/25 $n{=}30$, GPQA $ n {=} 198$) each cell reports \textbf{avg@16} with \pass{pass@16} beneath it, at
$t{=}0.6$; the remaining benchmarks are single-sample at a $16$k-token thinking-mode budget, and MMLU is the
57-subject accuracy. Within each compression level the best value is in \textbf{bold}.}}
\label{table:qwen-2}
\renewcommand{\arraystretch}{1.15}
\setlength{\tabcolsep}{6pt}
\resizebox{\textwidth}{!}{%
\begin{tabular}{l ccc cccc c c}
\toprule
& \multicolumn{3}{c}{\textbf{Competition math}\;{\scriptsize(avg@16\,/\,pass@16)}}
& \multicolumn{4}{c}{\textbf{Reasoning}} & & \\
\cmidrule(lr){2-4}\cmidrule(lr){5-8}
Method & AMC23 & AIME24 & AIME25 & GSM8K & MATH500 & Minerva & Olympiad & GPQA & MMLU \\
\midrule
\multicolumn{10}{l}{\emph{$k{=}4$ (last four FFNs replaced)}} \\
\quad SNNK   & \makecell{76.2\\\pass{88.0}} & \makecell{53.8\\\pass{68.8}} & \makecell{47.5\\\pass{70.8}} & 92.6 & 89.9 & 40.4 & 66.3 & \makecell{48.2\\\pass{80.4}} & 71.4 \\
\quad EUGen  & \makecell{79.1\\\pass{90.6}} & \makecell{57.2\\\pass{72.4}} & \makecell{49.4\\\pass{72.0}} & 93.0 & 90.2 & 43.1 & 65.0 & \makecell{49.8\\\pass{80.8}} & 72.0 \\
\quad RaLiMP (ours)
  & \makecell{\textbf{82.2}\\\pass{96.4}} & \makecell{\textbf{61.0}\\\pass{86.0}} & \makecell{\textbf{52.3}\\\pass{77.5}}
  & \textbf{95.5} & \textbf{92.8} & \textbf{45.3} & \textbf{70.4} & \makecell{\textbf{55.6}\\\pass{85.2}} & \textbf{72.7} \\
\midrule
\multicolumn{10}{l}{\emph{$k{=}8$ (last eight FFNs replaced)}} \\
\quad SNNK   & \makecell{66.3\\\pass{84.1}} & \makecell{44.8\\\pass{68.8}} & \makecell{39.5\\\pass{60.8}} & 90.3 & 87.4 & 37.8 & 62.5 & \makecell{42.8\\\pass{77.6}} & 69.6 \\
\quad EUGen  & \makecell{64.1\\\pass{82.4}} & \makecell{46.3\\\pass{70.0}}  & \makecell{41.0\\\pass{62.8}} & 88.4 & 86.2  & 29.8 & 50.5 & \makecell{35.4\\\pass{71.3}} & 65.4 \\
\quad RaLiMP (ours)
  & \makecell{\textbf{78.8}\\\pass{95.2}} & \makecell{\textbf{58.1}\\\pass{79.7}} & \makecell{\textbf{48.6}\\\pass{75.3}}
  & \textbf{95.2} & \textbf{94.5} & \textbf{43.4} & \textbf{69.0} & \makecell{\textbf{51.1}\\\pass{83.9}} & \textbf{71.9} \\
\midrule
Base Qwen3-8B (uncompressed)
  & \makecell{83.0\\\pass{97.6}} & \makecell{67.5\\\pass{90.0}} & \makecell{57.7\\\pass{80.0}}
  & 93.5 & 90.0 & 45.6 & 69.9 & \makecell{59.3\\\pass{86.9}} & 73.0 \\
\bottomrule
\end{tabular}%
}
\end{table}

\paragraph{Causal Language Modeling}
\label{sec:exp-clm}
Our next set of experiments involves large language models (LLMs). We start by pretraining GPT-2 Large (770M) on OpenWebText~\citep{openwebtext} on 22B tokens over 30k iterations. We replace the last $k$ layers with RaLiMP and other linearized MLP methods and measure validation perplexity. Fig.~\ref{fig:clm_results_main} shows that RALiMPs provide accuracy improvements over all other methods (detailed results in Tab.~\ref{table:gpt}). Moreover RaLiMP also improves throughput and shrinks memory footprint by up to \textbf{50}\% (additional training details in App.\ref{sec:clm_app}). 
\begin{figure}[h]
    \centering
    \includegraphics[width=0.95\linewidth]{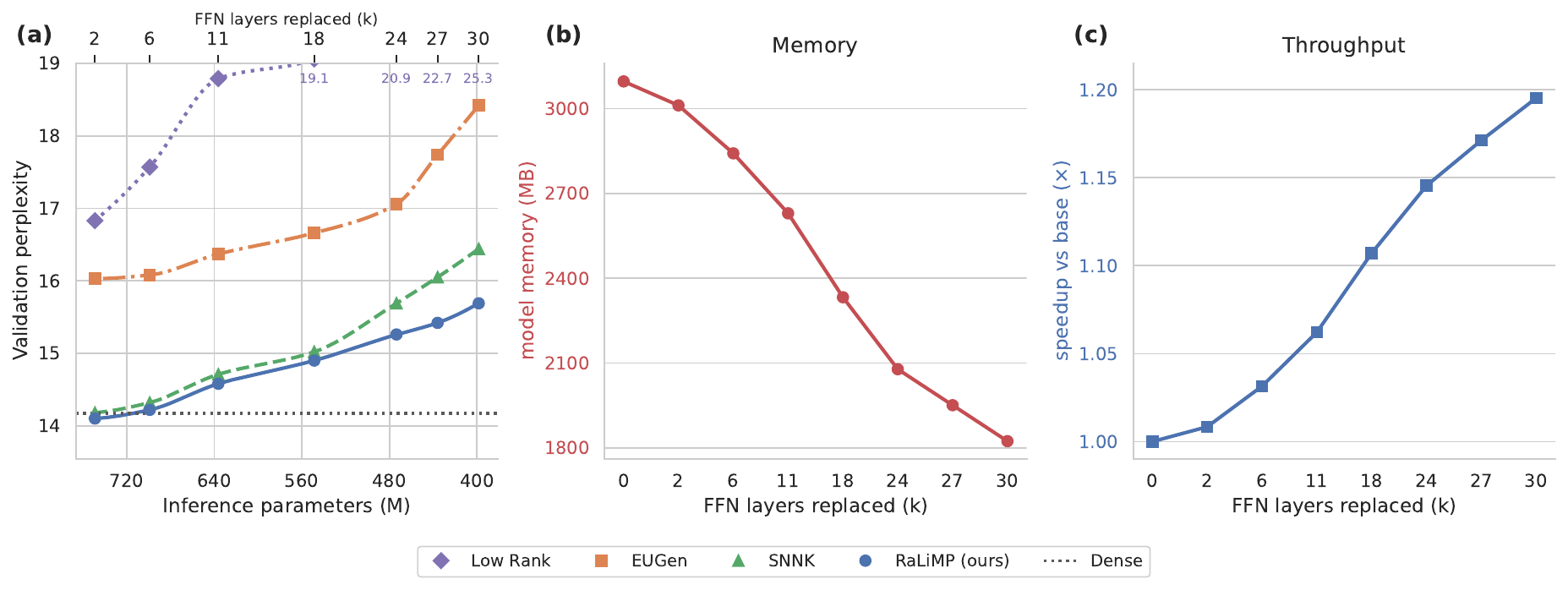}
    \vspace{-4mm}
    \caption{\small{\textbf{Casual Language Modeling with GPT2-L (770M) on OpenWebText.} \textit{Left :} Perplexity (Lower is better) RaLiMP shows the best validation perplexity among all the efficient methods considered with matched inference parameters. \textit{Middle :} Memory usage shrinks to almost \textbf{50}\% as more layers are replaced. \textit{Right :} Throughput increases as well.  }}
    \label{fig:clm_results_main}
\end{figure}

Since Low Rank has been a weak baseline, we drop it in the further expensive experiments.
\paragraph{RaLiMPs in Large Language Models (LLM)} 
We compress Qwen3-8B~\citep{qwen3} by replacing its gated (SwiGLU) feed-forward layers with our rational random feature layer. The rational feature map turns SiLU into an inner product, so the gated product $\mathrm{SiLU}(\mathbf{Wx}) \odot \mathbf{Ux}$ becomes a rank-$r$ Hadamard factorization that never forms the wide intermediate activation (details in App.~\ref{sec:hadamard}) We train the replacement layers at matched inference budget to EUGen and SNNK, and recover the accuracy loss due to compression by on-policy distillation from the uncompressed model (App.~\ref{app:qwen}). At $k{=}8$ the replacement removes $1.17$B parameters ($\sim$14\% of the model) and decoding throughput rises by $6\%$ over the uncompressed model (Tab.~\ref{tab:qwen-efficiency}).  Despite this compression the model stays at near-parity with the base Qwen3-8B across
reasoning and knowledge benchmarks: GSM8K~\citep{cobbe2021gsm8k}, MATH500~\citep{hendrycks2021math}, Minerva~\citep{MINERVA}, MMLU~\citep{mmlu} and OlympiadBench~\citep{OLYMPIADBENCH} are all within roughly a
point with MATH500 and GSM8K slightly exceeding the dense model (Tab.~\ref{table:qwen-2}). On the hardest competition problems (AIME)~\citep{aime_2024, aime_2025} the model's pass@16 remains close to the dense model, indicating the underlying capability is largely preserved. Our detailed evaluation shows that RaLiMP can compress a production-scale transformer while retaining its quality. To the best of our knowledge, this is the first time these methods are used for LLMs. 

Additional finetuning results on GLUE benchmark~\citep{glue} using DistilBERT~\citep{sanh2020distilbertdistilledversionbert} shows that our method can be combined with other compression techniques (Fig.~\ref{fig:glue}). RaLiMPs are competitive with baseline while reducing inference parameters by \textbf{35}\%.

\begin{table}[h]
    \caption{\small{Semantic segmentation performance (mIoU) on indoor (ScanNet, and ScanNet200) and outdoor (nuScenes) benchmarks.
        RaLiMP, SNNK, and EUGen are compared under the same MLP replacement configurations, where $k$ denotes the number of replaced MLP modules in PTv3.
        }}
    \centering
    \begin{adjustbox}{width=\textwidth}
        \begin{tabular}{l|ccc|ccc|ccc}
            \toprule
            \multirow{3}{*}{Methods}
            & \multicolumn{3}{c|}{ScanNet Val}
            & \multicolumn{3}{c|}{ScanNet200 Val}
            & \multicolumn{3}{c}{nuScenes Val} \\
            
            \cmidrule(lr){2-10}
            
            & \multicolumn{9}{c}{MLP modules replaced ($k$)}
            \\

            \cmidrule(lr){2-4}
            \cmidrule(lr){5-7}
            \cmidrule(lr){8-10}
            
            & 5 & 10 & 22
            & 5 & 10 & 22
            & 5 & 10 & 22 \\
            
            \midrule

            EUGen
            & 75.65 & 74.50 & 72.80
            & 34.62 & 33.08 & 30.56
            & 79.14 & 76.34 & 74.84 \\

            SNNK
            & 76.84 & 76.63 & 76.35
            & 35.65 & 34.69 & 34.01
            & 80.76 & 79.92 & 79.79 \\

            RaLiMP (ours)
            & \textbf{78.08} & \textbf{77.54} & \textbf{77.33}
            & \textbf{36.27} & \textbf{35.50} & \textbf{34.90}
            & \textbf{80.81} & \textbf{80.65} & \textbf{80.47} \\

            \midrule
            Baseline (dense, full FT)
            & \multicolumn{3}{c|}{77.57}
            & \multicolumn{3}{c|}{35.26}
            & \multicolumn{3}{c}{80.32} \\
            \bottomrule
        \end{tabular}
    \end{adjustbox}
    \label{table-seg-2}
\end{table}
\begin{table}[h]
    \centering
    \caption{\small{
    Inference efficiency of PTv3 with RaLiMP.
    $k$ denotes the number of replaced MLP modules.
    Memory is measured for the full model, while latency is summed over all MLP modules.
    Measurements use $51{,}200$ points in FP$32$ on an RTX PRO $6000$ Blackwell GPU.
    }}
    \label{tab:ptv3_efficiency}
    \small
    \setlength{\tabcolsep}{5pt}
    \begin{adjustbox}{width=\textwidth}
    \begin{tabular}{lccc}
        \toprule
        Methods
        & Model memory (MB) $\downarrow$
        & Peak GPU memory (MB) $\downarrow$
        & MLP latency (ms) $\downarrow$ \\
        \midrule
        Baseline (dense, full FT) & 184.7 & 849.3 & 12.31 \\
        \midrule
        RaLiMP ($k=5$)           & 168.3 & 832.8 & 9.23  \\
        RaLiMP ($k=10$)          & 151.9 & 817.2 & 6.52  \\
        RaLiMP ($k=22$)          & \textbf{149.1} & \textbf{814.9} & \textbf{6.44}  \\
        \bottomrule
    \end{tabular}
    \end{adjustbox}
\end{table}
\paragraph{Semantic Segmentation}
\label{sec:exp-seg}
We focus on the semantic segmentation tasks on ScanNet~\citep{dai2017scannet}, ScanNet200~\citep{rozenberszki2022language}, and nuScenes~\citep{caesar2020nuscenes}.
Using PTv3~\citep{wu2024point} as the backbone, we compare RaLiMP with SNNK, EUGen, and the fully fine-tuned dense baseline.
The PTv3 backbone contains 22 Transformer blocks, with 14 in the encoder and 8 in the decoder, each containing an MLP module.
We consider three MLP replacement configurations, with $k \in \{5, 10, 22\}$ denoting the number of replaced MLP modules.
We replace the MLP modules in all Transformer blocks for $k=22$, and only in encoder and decoder stages with at least 256 channels for $k=10$.
For $k=5$, we further restrict replacement to the top half of the Transformer layers within each selected stage.
As shown in Table~\ref{table-seg-2}, RaLiMP consistently outperforms the competing efficient methods across all datasets and replacement configurations.
Although segmentation performance generally decreases as more MLP modules are replaced, RaLiMP remains competitive with the dense baseline even at $k=22$, with gaps of only $0.24$ and $0.40$ mIoU points on ScanNet and ScanNet200, respectively, while slightly outperforming it on nuScenes.
With partial replacement, RaLiMP surpasses the baseline on ScanNet200 by \textbf{1.0} mIoU point at $k=5$ and, at $k=10$, nearly matches the baseline on ScanNet while outperforming it on ScanNet200 and nuScenes.

Table~\ref{tab:ptv3_efficiency} quantifies the corresponding inference efficiency gains.
Compared to the dense baseline, RaLiMP reduces memory by up to \textbf{20}\%, with up to \textbf{2}$\times$ faster MLPs.
Notably, $k=10$ configuration captures most of the efficiency gains: replacing the remaining 12 saves only $2.8$ MB of memory and $0.08$ ms of MLP latency, while reducing segmentation accuracy across all benchmarks. This demonstrates the favorable accuracy-efficiency tradeoff of targeting MLPs with at least 256 channels.

\subsection{Beyond Transformers}
\label{sec:exp-nerf}

\begin{figure}[h]
    \centering
\includegraphics[width=0.32\linewidth]{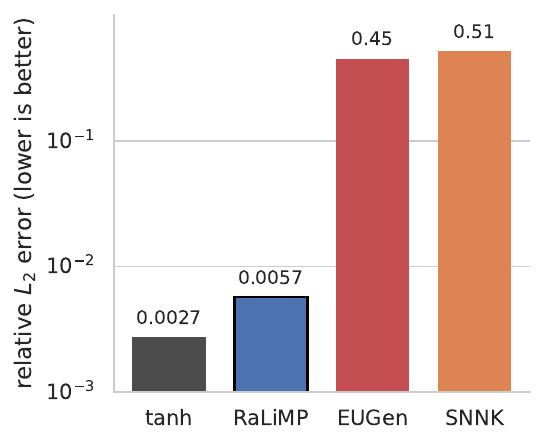}
\includegraphics[width=0.32\linewidth]{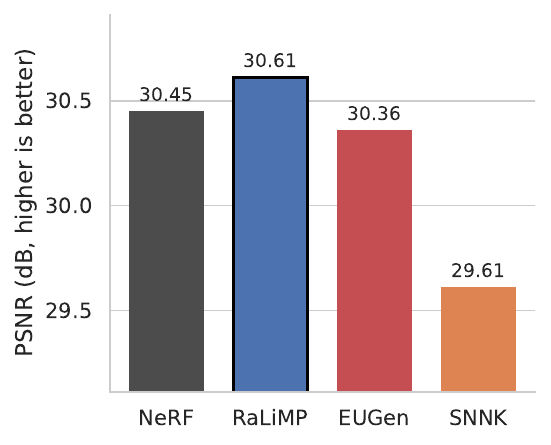}
\includegraphics[width=0.32\linewidth]{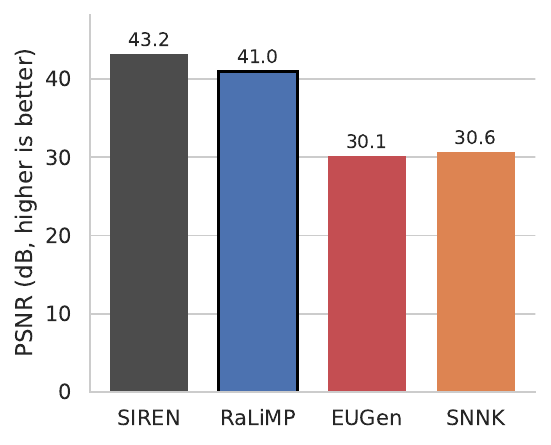}
\vspace{-4mm}
    \caption{\small{\textbf{RaLiMP in PiNNs, NeRF and SIREN.} RaLiMP matches the performance of the baseline PiNN (\textit{Left}) and outperforms NeRF (\textit{Middle}). RaLiMP also matches the SIREN network outperforming the other variants (\textit{Right}). In all cases RaLiMPs reduce model size by \textbf{30}\% -- \textbf{50}\% while maintaining performance. }}
    \label{fig:pinn}
\end{figure}
We start by targeting NeRF \citep{nerf} architectures that rely on MLPs. We replace regular MLPs with their linearized variations: (1) RaLiMPs, (2) SNNKs \citep{snnk} and (3) EUGens \citep{eugens} following the protocol in~\citep{eugens}. RALiMPs is the most robust method among all the considered variants, providing top average score across all 3D scenes (Fig.~\ref{fig:pinn} Middle), while reducing model size by \textbf{30}\% and inference time by \textbf{23}\%. Detailed NeRF results are presented in Table \ref{table:nerf}. Moreover we test RaLiMPs in Siren networks~\citep{siren} where we replaced \textit{all} feed forward layers (Fig.~\ref{fig:siren}). RaLiMP match the performance of the dense model while being \textbf{50}\% smaller (Fig.~\ref{fig:pinn} Right). Finally we apply RaLiMP in Physics Informed Neural Networks (PINNs)~\citep{pinn} for the solution of Burgers equation. RaLiMP close the gap with the dense PINN model while being \textbf{42}\% smaller. Both EUGen and SNNK struggle to solve this equation (Fig.~\ref{fig:pinn} Left and Fig.~\ref{fig:burgers}). For all the experiments, we use the same number of random features for all methods (details in App.~\ref{sec:exp_inr_exp}. These results affirmatively answer \textbf{RQ3}.

\section{Conclusion}

We presented in this paper Rationally Linearized Multilayer Perceptrons (RaLIMPs) - efficient linearized MLPs capable of approximating various commonly used feed-forward layers. We provide a detailed theoretical analysis, complemented by extensive experiments evaluating RaLiMPs across a broad range of tasks, from implicit 3D modeling to visual representation learning, semantic segmentation, and language modeling. Our results confirm the effectiveness of RaLiMP, as an efficient way of linearizing MLPs to improve their computational footprint with minimal quality loss.

\subsection*{AI use statement}

In this work, we used generative AI tools for code assistance and running experiments, generating the plots and for grammar correction. We have not used generative AI tools for idea generation, theoretical results, designing the experiments and the general writing of the paper. We developed the theory and the base code. AI was only used to scale and manage some CLM and Qwen experiments. We have manually vetted and reviewed all the code generated by AI as well as made sure that the plots faithfully and accurate describe our results. We take responsibility for the final content of this work, including text, claims or artifacts produced with the aid of generative AI.

\subsection*{Ethics statement}
This work presents a method for reducing the computational cost of multilayer perceptrons using kernel approximations. It does not involve human subjects, personal data, or automated decision-making. We are not aware of any specific ethical or societal risks arising directly from the proposed method beyond the general potential for more efficient machine learning systems to be used in applications with societal impacts.

\subsection*{Reproducibility statement}

Our mechanism is defined in equation~\ref{eq:ratfeat}. We detail the hardware as well as the hyperparameters for all the experiments in Sec.~\ref{sec:hparam_app}. All datasets used are public datasets and are referenced in the paper. We will open source our code upon acceptance of the manuscript.


%
%

\bibliography{iclr2027_conference}
\bibliographystyle{iclr2027_conference}
\clearpage
\appendix
\section{Theoretical Results}
\label{sec:theory_appendix}
In this section we discuss additional theoretical results.

\subsection{Unbiasedness of the Rational Random Features}
\label{sec:unbiased_appendix}
We present the complete proof of Lemma~\ref{lem:unbiased}, which we restate for convenience of the reader.

\begin{lemma}[Unbiasedness]
Let $\bm\omega\sim\mathcal N(\bm0,\bm I_d)$ and $t=\bm w\!\cdot\!\bm x$. Then for $\tau=1$,
\begin{equation}
\E\big[\widehat C_k\big]=\cos(s_k t),\qquad
\E\big[\widehat S_k\big]=\sin(s_k t),
\end{equation}
and for general $\tau>0$, applying this to $\bm x/\tau$ and $\bm w/\tau$ gives
$\E[\widehat C_k]=\cos(s_k t/\tau^{2})$ and $\E[\widehat S_k]=\sin(s_k t/\tau^{2})$.
\end{lemma}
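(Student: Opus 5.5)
The proof reduces everything to a single Gaussian expectation: a complex moment generating function evaluated at a complex argument. We take $\tau=1$ and fix a node $s=s_k$. Since the $m$ summands of $\widehat C_k$ and $\widehat S_k$ are i.i.d., linearity of expectation means it suffices to show that a single summand, multiplied by $m$, has mean $\cos(st)$ (resp.\ $\sin(st)$).

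\textbf{Combining the two factors.} For one projection $\bm\omega$ write $Z_x=\sqrt{s/2}\,\bm\omega\cdot\bm x$, $Z_w=\sqrt{s/2}\,\bm\omega\cdot\bm w$, $h_x=\tfrac s2\|\bm x\|^2$ and $h_w=\tfrac s2\|\bm w\|^2$. By the angle-addition formulas,
\begin{align*}
\cos a\cos b-\sin a\sin b&=\cos(a+b),\\
\cos a\sin b+\sin a\cos b&=\sin(a+b).
\end{align*}
Applying these to Eq.~\ref{eq:ratfeat}, the $r$-th summand of $m\widehat C_k$ equals $e^{\xi}\cos(\xi-H)$ and that of $m\widehat S_k$ equals $e^{\xi}\sin(\xi-H)$. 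Here $\xi=Z_x+Z_w=\sqrt{s/2}\,\bm\omega\cdot(\bm x+\bm w)$ and $H=h_x+h_w$. Equivalently, both summands are the real and imaginary parts of
\[
e^{\xi}e^{i(\xi-H)}=e^{-iH}e^{(1+i)\xi}.
\]

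\textbf{The Gaussian expectation.} Since $\bm\omega\sim\mathcal N(\bm0,\bm I_d)$, $\xi$ is a centered Gaussian with variance $\sigma^2=\tfrac s2\|\bm x+\bm w\|^2$. We use the Gaussian MGF $\E e^{z\xi}=e^{z^2\sigma^2/2}$, valid for all complex $z$ because both sides are entire in $z$ and agree on $\R$. Taking $z=1+i$, so that $z^2=2i$, gives
\[
\E\, e^{(1+i)\xi}=e^{i\sigma^2}.
\]
Hence
\[
\E\big[e^{-iH}e^{(1+i)\xi}\big]=\exp\!\big(i(\sigma^2-H)\big).
\]
Expanding the norm,
\[
\sigma^2-H=\tfrac s2\big(\|\bm x+\bm w\|^2-\|\bm x\|^2-\|\bm w\|^2\big)=s\,\bm x\cdot\bm w=st,
\]
so the expectation equals $e^{ist}$.

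Taking real and imaginary parts, which commute with expectation, gives $\E[\widehat C_k]=\cos(st)$ and $\E[\widehat S_k]=\sin(st)$. Integrability is not an issue: $|e^{\xi}\cos(\xi-H)|\le e^{\xi}$, and $e^{\xi}$ has finite mean since $\xi$ is Gaussian.

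\textbf{General $\tau$.} For $\tau>0$, the features of Eq.~\ref{eq:ratfeat} depend on $\bm u$ only through $\bm u/\tau$. Applying the $\tau=1$ result to $\bm x/\tau$ and $\bm w/\tau$ replaces $t$ by $(\bm x/\tau)\cdot(\bm w/\tau)=t/\tau^2$, which gives the stated formulas.

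The only step needing care is justifying the complex-argument MGF. We handle it either by analytic continuation, as above, or directly by completing the square in the one-dimensional Gaussian integral $\int e^{(1+i)y}e^{-y^2/(2\sigma^2)}\,dy$ and shifting the contour; both routes are routine. Everything else is bookkeeping of the angle-addition identities.
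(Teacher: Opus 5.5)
Your proof is correct and follows the paper's argument. Angle addition collapses each summand to $e^{\xi}\cos(\xi-H)$ (resp.\ $e^{\xi}\sin(\xi-H)$); the Gaussian moment generating function at $1+i$ together with $\sigma^2-H=st$ then gives the result, and the general $\tau$ case follows by rescaling. The differences are cosmetic: you write the phase as $e^{-iH}e^{(1+i)\xi}$ to treat both channels at once, where the paper computes $\E[e^{\xi}\cos\xi]$ and $\E[e^{\xi}\sin\xi]$ separately and then subtracts angles, and you also justify integrability and the complex-argument MGF, which the paper leaves implicit.
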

\begin{proof}
Drop $r$ and write $Z_x=\sqrt{s/2}\,\bm\omega\!\cdot\!\bm x$, $Z_w=\sqrt{s/2}\,\bm\omega\!\cdot\!\bm w$,
$h_x=\tfrac s2\|\bm x\|^2$, $h_w=\tfrac s2\|\bm w\|^2$. By angle addition the $\widehat C_k$ summand
collapses to a single feature,
\begin{equation}
  \fcos(\bm x)\fcos(\bm w)-\fsin(\bm x)\fsin(\bm w)
   = \tfrac1m\,e^{Z_x+Z_w}\cos\!\big((Z_x-h_x)+(Z_w-h_w)\big)=\tfrac1m\,e^{\xi}\cos(\xi-H),
\end{equation}
with $\xi:=\sqrt{s/2}\,\bm\omega\!\cdot\!(\bm x+\bm w)$ and $H:=h_x+h_w$; $\xi$ is centered Gaussian
with $\mathrm{Var}(\xi)=\tfrac s2\|\bm x+\bm w\|^2$. Since $e^{\xi}\cos\xi=\mathrm{Re}\,e^{(1+i)\xi}$
and $(1+i)^2=2i$, the Gaussian moment generating function gives
$\E\,e^{(1+i)\xi}=e^{\frac12(2i)\mathrm{Var}(\xi)}=e^{i\varphi}$ with $\varphi:=\tfrac s2\|\bm x+\bm
w\|^2$, hence $\E[e^{\xi}\cos\xi]=\cos\varphi$ and $\E[e^{\xi}\sin\xi]=\sin\varphi$. Therefore
$\E[e^{\xi}\cos(\xi-H)]=\cos(H-\varphi)$, and
$H-\varphi=\tfrac s2(\|\bm x\|^2+\|\bm w\|^2-\|\bm x+\bm w\|^2)=-st$, giving $\E\,\widehat C_k=\cos(st)$.
The $\widehat S_k$ case is identical with $\mathrm{Re}\!\to\!\mathrm{Im}$. For general $\tau$ the argument
enters as $\bm u/\tau$ on both sides, so $\bm x,\bm w$ are replaced by $\bm x/\tau,\bm w/\tau$ throughout and
the estimators become unbiased for $\cos(st/\tau^{2})$ and $\sin(st/\tau^{2})$.
\end{proof}

\subsection{Variance of the Rational Random Features}
\label{sec:variance_appendix}

We compute the second moments of the estimators of Eqs.~\ref{eq:Cest}--\ref{eq:Sest} exactly, proving
Proposition~\ref{prop:variance}. Throughout, fix a quadrature node $s$,
evaluate the features at $\bm u/\tau$, and set
\begin{equation}
  \sigma^{2}=\tfrac{s}{2}\|\bm x+\bm w\|^{2}/\tau^{2},
  \qquad
  H=\tfrac{s}{2}\big(\|\bm x\|^{2}+\|\bm w\|^{2}\big),
\end{equation}
as in the main text.
Let $Y^{c}_r=m\big[\fcos(\bm x)_r\fcos(\bm w)_r-\fsin(\bm x)_r\fsin(\bm w)_r\big]$ and
$Y^{s}_r=m\big[\fcos(\bm x)_r\fsin(\bm w)_r+\fsin(\bm x)_r\fcos(\bm w)_r\big]$, so that
$\widehat C_k=\tfrac1m\sum_r Y^{c}_r$ and $\widehat S_k=\tfrac1m\sum_r Y^{s}_r$ are averages
of $m$ i.i.d.\ terms.

\medskip
\noindent\textbf{Proposition~\ref{prop:variance}}
\emph{For each projection, $\E[(Y^{c})^{2}]=\tfrac12\big(e^{2\sigma^{2}}+\cos(4\sigma^{2}-2H/\tau^{2})\big)$ and
$\E[(Y^{s})^{2}]=\tfrac12\big(e^{2\sigma^{2}}-\cos(4\sigma^{2}-2H/\tau^{2})\big)$. Consequently, averaging over $m$
i.i.d.\ projections,}
\begin{equation*}
  \mathrm{Var}\big(\widehat C_k\big)\;\le\;\frac{1}{2m}\Big(e^{2\sigma^{2}}+1\Big),
  \qquad
  \mathrm{Var}\big(\widehat C_k\big)+\mathrm{Var}\big(\widehat S_k\big)
  \;=\;\frac1m\Big(e^{2\sigma^{2}}-1\Big).
\end{equation*}

\begin{proof}
Drop the index $r$. By the angle addition used in Lemma~\ref{lem:unbiased},
$Y^{c}=e^{\xi}\cos(\xi-H/\tau^{2})$ and $Y^{s}=e^{\xi}\sin(\xi-H/\tau^{2})$ with
$\xi=\sqrt{s/2}\,\bm\omega\!\cdot\!(\bm x+\bm w)/\tau\sim\mathcal N(0,\sigma^{2})$; that lemma also
gives $\E[Y^{c}]=\cos(st/\tau^{2})$ and $\E[Y^{s}]=\sin(st/\tau^{2})$. For the second moments write
$\cos^{2}\theta=\tfrac12(1+\cos2\theta)$ and $\sin^{2}\theta=\tfrac12(1-\cos2\theta)$, so that
\begin{equation}
  \E\big[(Y^{c})^{2}\big]=\tfrac12\E\big[e^{2\xi}\big]+\tfrac12\E\big[e^{2\xi}\cos(2\xi-2H/\tau^{2})\big].
\end{equation}
The first term is $e^{2\sigma^{2}}$ by the Gaussian moment generating function. For the second, note
$e^{2\xi}\cos(2\xi-2H/\tau^{2})=\mathrm{Re}\,\big[e^{-2iH/\tau^{2}}e^{(2+2i)\xi}\big]$ and $(2+2i)^{2}=8i$, hence
$\E[e^{(2+2i)\xi}]=e^{\frac12(8i)\sigma^{2}}=e^{4i\sigma^{2}}$ and the term equals
$\cos(4\sigma^{2}-2H/\tau^{2})$. The $Y^{s}$ case is identical with the opposite sign. Subtracting
$(\E[Y^{c}])^{2}$ and averaging over $m$ i.i.d.\ projections gives the variance in Eq.~\ref{eq:varC};
the bound follows from $\cos(\cdot)\le1$ and $\cos^{2}(\cdot)\ge0$. For Eqn.~\ref{eq:varsum}, add the two second moments, the cosine terms cancel, leaving $e^{2\sigma^{2}}$ and
$\cos^{2}(\cdot)+\sin^{2}(\cdot)=1$.
\end{proof}

The variance depends on the inputs only through $\|\bm x+\bm w\|$ and grows exponentially in
$2\sigma^2$ and it is symmetric between the cosine and sine channels, so the general shifted atom of
Sec.~\ref{sec:gen} incurs no additional cost per channel.

\begin{corollary}\label{cor:variance_atom}
Draw independent projections per quadrature node, and write $\rho=\|\bm x+\bm w\|$.
\begin{enumerate}
\item[(i)] For the centered atom estimator $\widehat R=\sum_{k}\gamma_k\widehat C_k$,
\begin{equation}
  \mathrm{Var}\big(\widehat R\big)=\sum_{k}\gamma_k^{2}\,\mathrm{Var}\big(\widehat C_k\big)
  \;\le\;\frac{1}{2m}\Big(e^{s_{\max}\rho^{2}/\tau^{2}}+1\Big).
\end{equation}
\item[(ii)] For the general atom of Eq.~\ref{eq:genatom}, whose estimator is
$\widehat g=\sum_k\gamma_k\big(w_{c,k}\widehat C_k+w_{s,k}\widehat S_k\big)$ at frequencies $s_k/q$,
\begin{equation}
  \mathrm{Var}\big(\widehat g\big)\;\le\;\frac{c_A^{2}+c_B^{2}}{m}
  \Big(e^{s_{\max}\rho^{2}/(q\tau^{2})}-1\Big),
  \qquad c_A=\frac{|A|}{q^{2}},\quad c_B=\frac{|B|}{q}.
\end{equation}
\end{enumerate}
\end{corollary}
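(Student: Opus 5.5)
The plan is to reduce both parts to Proposition~\ref{prop:variance} node by node. Independence across nodes removes all cross-node covariances, and two elementary facts about the Gauss--Laguerre weights finish the argument. The weights $\gamma_k$ are positive, and they sum to at most the unit total mass $\int_0^\infty e^{-s}ds=1$, since dropping the nodes with $s_k>s_{\max}$ only lowers the sum. Hence $\sum_k\gamma_k^2\le(\sum_k\gamma_k)^2\le1$. Separately, every retained node satisfies $s_k\le s_{\max}$, and the per-node bounds are increasing in $s_k$, so each node's bound can be replaced by its value at $s_{\max}$.

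For part (i), independent projections at distinct nodes make the $\widehat C_k$ mutually independent. This gives the stated equality $\mathrm{Var}(\widehat R)=\sum_k\gamma_k^2\mathrm{Var}(\widehat C_k)$. At node $k$ we have $2\sigma_k^2=s_k\rho^2/\tau^2$. Equation~\ref{eq:varC} then gives $\mathrm{Var}(\widehat C_k)\le\frac1{2m}(e^{s_k\rho^2/\tau^2}+1)\le\frac1{2m}(e^{s_{\max}\rho^2/\tau^2}+1)$. Multiplying by $\gamma_k^2$, summing, and using $\sum_k\gamma_k^2\le1$ yields (i).

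For part (ii), I first make the phase weights explicit. Write $a_k=s_k/q$. Expanding $\cos(a_k(t-p))$ and $\sin(a_k(t-p))$ by angle subtraction, the node-$k$ integrand $\frac{A}{q^2}\cos(a_k(t-p))+\frac Bq\sin(a_k(t-p))$ becomes $w_{c,k}\cos(a_kt)+w_{s,k}\sin(a_kt)$. Here $(w_{c,k},w_{s,k})$ is the vector $(A/q^2,\,B/q)$ rotated by the angle $a_kp$, so $w_{c,k}^2+w_{s,k}^2=c_A^2+c_B^2$.

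The main obstacle is that $\widehat C_k$ and $\widehat S_k$ are built from the \emph{same} projections, so they are correlated and their variances cannot simply be added. I handle this with a covariance-matrix bound rather than by computing the covariance. Let $\Sigma_k$ be the $2\times2$ covariance matrix of $(\widehat C_k,\widehat S_k)$. Then
\[
\mathrm{Var}(w_{c,k}\widehat C_k+w_{s,k}\widehat S_k)\le\lambda_{\max}(\Sigma_k)\,(w_{c,k}^2+w_{s,k}^2)\le\mathrm{tr}(\Sigma_k)\,(c_A^2+c_B^2),
\]
where the last step uses that $\Sigma_k$ is positive semidefinite. By Equation~\ref{eq:varsum} evaluated at frequency $s_k/q$, we have $\mathrm{tr}(\Sigma_k)=\frac1m(e^{s_k\rho^2/(q\tau^2)}-1)$. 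This is increasing in $s_k$, so it is at most $\frac1m(e^{s_{\max}\rho^2/(q\tau^2)}-1)$.

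Finally, independence across nodes again gives $\mathrm{Var}(\widehat g)=\sum_k\gamma_k^2\,\mathrm{Var}(w_{c,k}\widehat C_k+w_{s,k}\widehat S_k)$. Applying the per-node bound and $\sum_k\gamma_k^2\le1$ gives (ii).
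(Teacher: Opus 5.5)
Your proof is correct and follows the paper's argument. It uses positive Gauss--Laguerre weights with $\sum_k\gamma_k^2\le1$, independence across nodes, the rotation identity $w_{c,k}^2+w_{s,k}^2=c_A^2+c_B^2$, and Eq.~\ref{eq:varsum} at frequency $s_k/q$; your step $\lambda_{\max}(\Sigma_k)\le\mathrm{tr}(\Sigma_k)$ is just the matrix form of the paper's Cauchy--Schwarz bound $\mathrm{Var}(\alpha U+\beta V)\le(\alpha^2+\beta^2)(\mathrm{Var}\,U+\mathrm{Var}\,V)$, and your remark that truncation gives $\sum_k\gamma_k\le1$ is more accurate than the paper's stated $\sum_k\gamma_k=1$.
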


\begin{proof}
Both use $\gamma_k>0$ and $\sum_k\gamma_k=1$, hence $\sum_k\gamma_k^{2}\le1$, together with independence
across nodes.  For (ii), the per-node coefficients of Sec.~\ref{sec:gen} are the rotation
    $w_{c,k}=c_A\cos\phi_k-c_B\sin\phi_k$, $w_{s,k}=c_A\sin\phi_k+c_B\cos\phi_k$ with
    $\phi_k=s_kp/q$, so $w_{c,k}^{2}+w_{s,k}^{2}=c_A^{2}+c_B^{2}$ for every $k$, and
    Cauchy--Schwarz gives
$\mathrm{Var}(\alpha U+\beta V)\le(\alpha^{2}+\beta^{2})(\mathrm{Var}\,U+\mathrm{Var}\,V)$ for any
$\alpha,\beta$ and any $U,V$; applying this with Eq.~\ref{eq:varsum} at frequency $s_k/q$, for which
$2\sigma_k^{2}=s_k\rho^{2}/(q\tau^{2})$, yields the claim.
\end{proof}

The exponent carries $q$ in the denominator, so for a sum
of atoms the variance is governed by the \emph{narrowest} pole width $q_{\min}=\min_j q_j$, and not by $s_{\max}$ and $\tau$ alone.

\subsection{Total Approximation Error}
\label{sec:total_error}
Three errors accumulate: the rational fit, the truncated quadrature, and the finite projection count. Only the
last is random, and Prop.~\ref{prop:variance} controls it. Since $P$ is unbounded on $\mathbb{R}$, we focus our attention on an interval $[-T,T]$ containing the pre-activations of interest.

Theorem~\ref{thm:total} gives an upper bound to the total $L^{2}$ error. We restate the theorem and prove it below. 

\begin{theorem}[Total error on an interval]
Fix $T>0$ and a target activation $f:[-T,T]\to\R$. Let $R=P+g$ with $g=\sum_j g_j$, each $g_j$ of the form Eq.~\ref{eq:genatom}, be the rational surrogate, its poles expressed in the rescaled variable of App.~\ref{sec:tau_appendix} so that
$f$ and $R$ are compared in the variable the features are evaluated at, and let $\widehat g$ denote the
deterministic value of the truncated quadrature, i.e.\
$g$ with its Laplace integral replaced by the nodes $\{(s_k,\gamma_k):s_k\le s_{\max}\}$. Put
\begin{equation}
  \varepsilon_{\mathrm{rat}}=\sup_{|t|\le T}\big|f(t)-R(t)\big|,
  \qquad
  \varepsilon_{\mathrm{quad}}=\sup_{|t|\le T}\big|g(t)-\widehat g(t)\big|.
\end{equation}
Then for every $\bm x,\bm w$ with $t=\bm w\!\cdot\!\bm x\in[-T,T]$ the RaLiMP estimator is unbiased for the
truncated surrogate,
$\E\big[\inner{\Phi(\bm x)}{\Psi(\bm w)}\big]=P(t)+\widehat g(t)$ (stated for $\tau=1$; general $\tau$
replaces $t$ by $t/\tau^{2}$ inside $\widehat g$, cf.\ App.~\ref{sec:tau_appendix}), and
\begin{equation}
  \Big(\E\big[\big(\inner{\Phi(\bm x)}{\Psi(\bm w)}-f(t)\big)^{2}\big]\Big)^{1/2}
  \;\le\;
  \varepsilon_{\mathrm{rat}}+\varepsilon_{\mathrm{quad}}+\sqrt{V_{\mathrm{poly}}+V_{\mathrm{rat}}},
\end{equation}
where $V_{\mathrm{rat}}$ is bounded by Cor.~\ref{cor:variance_atom}(ii) and $V_{\mathrm{poly}}$ is the variance
of the polynomial block. Furthermore
$\varepsilon_{\mathrm{quad}}\le\varepsilon_{\mathrm{GL}}+(c_A+c_B)\!\!\sum_{s_k>s_{\max}}\!\!\gamma_k$, where
$\varepsilon_{\mathrm{GL}}$ is the Gauss--Laguerre error at full node count; the same bound holds per atom, and
summing over the $J$ atoms gives the truncation error for $g$.
\end{theorem}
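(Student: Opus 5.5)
The proof splits the estimator error into a deterministic bias and a centered random fluctuation, then bounds each separately. Write $E=\inner{\Phi(\bm x)}{\Psi(\bm w)}$.

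\textbf{Step 1: unbiasedness for the truncated surrogate.} By construction $E$ is the sum of the polynomial block and one block per atom, so linearity of expectation reduces this to one block at a time.
\begin{itemize}[nosep]
\item For the polynomial block, $\E[\prod_i(\bm G^{(i)}_r\cdot\bm x)(\bm G^{(i)}_r\cdot\bm w)]=t^n$. This holds by independence of the $n$ projections and $\E[\bm G_r^\top\bm G_r]=\bm I_d$, so the block has mean $P(t)$.
\item For an atom $g_j$, the coefficients multiplying $\widehat C_k,\widehat S_k$ (at frequency $s_k/q$) are deterministic. They come from the phase identity of Sec.~\ref{sec:gen}, namely the rotation weights $w_{c,k},w_{s,k}$ times $\gamma_k$.
\item Lemma~\ref{lem:unbiased} then gives mean $\sum_{s_k\le s_{\max}}\gamma_k[c_A\cos(\tfrac{s_k}{q}(t-p))+c_B\sin(\tfrac{s_k}{q}(t-p))]$, which is by definition $\widehat g_j(t)$.
\end{itemize}
Summing over blocks yields $\E[E]=P(t)+\widehat g(t)$.

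\textbf{Step 2: bias–variance decomposition.} Write
\[
E-f(t)=\big(E-\E E\big)+\big(\widehat g(t)-g(t)\big)+\big(R(t)-f(t)\big).
\]
Apply Minkowski's inequality in $L^2(\mathbb P)$; the last two terms are deterministic. This bounds the root-mean-square error by
\[
\sqrt{\mathrm{Var}(E)}+|\widehat g(t)-g(t)|+|R(t)-f(t)|\le\sqrt{\mathrm{Var}(E)}+\varepsilon_{\mathrm{quad}}+\varepsilon_{\mathrm{rat}}.
\]
One could use the sharper orthogonal split $\E(E-f)^2=\mathrm{Var}(E)+(\text{bias})^2$ together with $\sqrt{a^2+b^2}\le a+b$; either route works.

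\textbf{Step 3: the variance term.} We need $\mathrm{Var}(E)=V_{\mathrm{poly}}+V_{\mathrm{rat}}$. This holds if the polynomial Gaussians $\bm G^{(i)}$ are drawn independently of the $\bm\omega$'s, so the cross-covariance vanishes. This independence is the modeling assumption I would state explicitly; without it one still gets $\sqrt{V_{\mathrm{poly}}}+\sqrt{V_{\mathrm{rat}}}$.

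Here $V_{\mathrm{rat}}$ is the variance of the sum of all atom blocks. With independent projections per node and per atom it is the sum of the per-atom variances, each bounded by Cor.~\ref{cor:variance_atom}(ii). If the projections are shared across atoms, I would instead bound it via Cauchy–Schwarz, so the main statement keeps $V_{\mathrm{rat}}$ abstract.

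\textbf{Step 4: refinement of $\varepsilon_{\mathrm{quad}}$.} Fix an atom and write $\widetilde g$ for its full (untruncated) Gauss–Laguerre sum. The triangle inequality gives
\[
|g-\widehat g|\le|g-\widetilde g|+|\widetilde g-\widehat g|.
\]
\begin{itemize}[nosep]
\item The first term is at most $\varepsilon_{\mathrm{GL}}$ by definition.
\item The second term is $\big|\sum_{s_k>s_{\max}}\gamma_k[c_A\cos(\cdot)+c_B\sin(\cdot)]\big|$. Since $\gamma_k>0$ and $|\cos|,|\sin|\le1$, it is at most $(c_A+c_B)\sum_{s_k>s_{\max}}\gamma_k$.
\end{itemize}
Summing over the $J$ atoms gives the bound for $g$.

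The main obstacle is bookkeeping rather than analysis. One must confirm that $\varepsilon_{\mathrm{quad}}$ is evaluated in the same rescaled variable $t/\tau^2$ as the features, and that the independence structure justifies adding the variances. Everything else is Minkowski's inequality plus Lemma~\ref{lem:unbiased}.
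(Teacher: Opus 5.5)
Your proposal is correct and follows the paper's proof essentially step for step. It uses the same split of $\inner{\Phi(\bm x)}{\Psi(\bm w)}-f(t)$ into a mean-zero fluctuation plus the deterministic quadrature and rational-fit biases, the $L^{2}$ triangle inequality with $V_{\mathrm{poly}}+V_{\mathrm{rat}}$ coming from independent projections in the two blocks, and the bound on the discarded nodes via $\gamma_k>0$ and $|\cos|,|\sin|\le 1$. One small notational point: in the mean of each atom the coefficients should be the signed $A/q^{2}$ and $B/q$ rather than $c_A,c_B$, which carry absolute values and are needed only in the truncation bound.
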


\begin{proof}
Unbiasedness is Lemma~\ref{lem:unbiased} per node and channel, plus that of the polynomial block. Decompose
\begin{equation}
  \inner{\Phi(\bm x)}{\Psi(\bm w)}-f(t)
  =\underbrace{\Big(\inner{\Phi(\bm x)}{\Psi(\bm w)}-P(t)-\widehat g(t)\Big)}_{\text{mean zero}}
  +\underbrace{\big(\widehat g(t)-g(t)\big)}_{|\cdot|\le\,\varepsilon_{\mathrm{quad}}}
  +\underbrace{\big(R(t)-f(t)\big)}_{|\cdot|\le\,\varepsilon_{\mathrm{rat}}} .
\end{equation}
The last two terms are deterministic, so the $L^{2}$ triangle inequality gives Eq.~\ref{eq:total}, the first
term's $L^{2}$ norm being the square root of its variance; the two blocks use independent projections, so that
variance is $V_{\mathrm{poly}}+V_{\mathrm{rat}}$. For the last claim, the omitted nodes each contribute at most
$\gamma_k$ times the channel coefficients, as $|\cos|,|\sin|\le1$.
\end{proof}

The trade-off is explicit: more poles cut the first term, more nodes and a larger cutoff the second, more
projections the third, but raising $s_{\max}$ inflates $V_{\mathrm{rat}}$ exponentially through
Cor.~\ref{cor:variance_atom}, so the last two can not be varied independently. The scale $\tau$ of App.~\ref{sec:tau_appendix} keeps that exponent in range.

\paragraph{An exponential tail.} The summands of Prop.~\ref{prop:variance} are lognormal-tailed, with infinite
moment generating function, so Hoeffding- and Bernstein-type bounds need the projections to be bounded.

\begin{corollary}[Concentration under bounded projections]\label{cor:tail}
Fix a node $s\le s_{\max}$ and suppose
$|\bm\omega_r\!\cdot\!(\bm x+\bm w)|\le\beta\|\bm x+\bm w\|$ for all $r=1,\dots,m$. Then with
$B=\exp\!\big(\beta\|\bm x+\bm w\|\sqrt{s_{\max}/2}\,/\tau\big)$, for any $\varepsilon>0$,
\begin{equation}
  \mathbb{P}\Big[\big|\widehat C_k-\cos(s_kt/\tau^{2})\big|\ge\varepsilon\Big]\le2\exp\!\Big(-\frac{m\varepsilon^{2}}{2B^{2}}\Big),
\end{equation}
and likewise for $\widehat S_k$.
\end{corollary}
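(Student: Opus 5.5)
The plan is to apply Hoeffding's inequality to the $m$ i.i.d.\ summands $Y^c_r$ that make up $\widehat C_k=\tfrac1m\sum_r Y^c_r$. The bounded-projection hypothesis makes each summand almost surely bounded, and Lemma~\ref{lem:unbiased} supplies the centering. By the angle-addition identity from that proof, $Y^c_r=e^{\xi_r}\cos(\xi_r-H/\tau^2)$ with $\xi_r=\sqrt{s/2}\,\bm\omega_r\!\cdot\!(\bm x+\bm w)/\tau$. Similarly $Y^s_r=e^{\xi_r}\sin(\xi_r-H/\tau^2)$.

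First I bound the summands. Since $|\cos|,|\sin|\le1$, we have $|Y^c_r|,|Y^s_r|\le e^{|\xi_r|}$. The hypothesis $|\bm\omega_r\!\cdot\!(\bm x+\bm w)|\le\beta\|\bm x+\bm w\|$ together with $s\le s_{\max}$ gives
\[
|\xi_r|\le\sqrt{s_{\max}/2}\,\beta\|\bm x+\bm w\|/\tau ,
\]
so each summand lies in $[-B,B]$ with $B$ as in the statement. Each summand therefore ranges over an interval of length at most $2B$.

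Next comes the centering. Hoeffding's inequality for i.i.d.\ variables in $[-B,B]$ gives
\[
\mathbb P\big[|\tfrac1m\sum_r Y_r-\E Y|\ge\varepsilon\big]\le2\exp\!\big(-2m^2\varepsilon^2/(m(2B)^2)\big)=2\exp(-m\varepsilon^2/(2B^2)),
\]
which is exactly the claimed rate. The remaining point is what $\E Y$ is.

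That is the step I expect to be the main obstacle. Conditioning on the bounded-projection event changes the law of $\bm\omega_r$, so Lemma~\ref{lem:unbiased} no longer literally gives $\E[Y^c_r]=\cos(s_kt/\tau^2)$. I would handle this in one of two ways:
\begin{itemize}
\item Read the hypothesis as describing projections drawn from a distribution that is bounded yet still reproduces the Gaussian identity $\E[e^{(1+i)\xi}]=e^{i\sigma^2}$, for instance by a symmetric truncation whose moment-generating identity the paper takes as given. Then the conditional mean is still $\cos(s_kt/\tau^2)$.
\item Otherwise, state the bound around the conditional mean and note that the bias $|\E[Y\mid\text{bounded}]-\cos(s_kt/\tau^2)|$ can be absorbed by the lognormal tail mass outside the event, which is $O(e^{-\beta^2/2})$.
\end{itemize}
I would adopt the first reading, matching the paper's intent that the centering equals $\cos(s_kt/\tau^2)$. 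With that, Hoeffding yields the inequality.

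The $\widehat S_k$ case is identical, using $|\sin|\le1$ and the sine half of Lemma~\ref{lem:unbiased}.
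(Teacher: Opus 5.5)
Your skeleton is the paper's proof: $|Y^c_r|\le e^{|\xi_r|}\le B$ from the hypothesis and $s\le s_{\max}$, the mean taken from Lemma~\ref{lem:unbiased}, and Hoeffding with range $2B$ (your constant is right). The paper simply cites Lemma~\ref{lem:unbiased} for the mean and never addresses the conditioning issue you raise. That issue is a genuine gap, and the first reading you adopt does not close it, because no bounded law can reproduce the Gaussian identity. Suppose $X=\bm\omega_r\cdot(\bm x+\bm w)/\|\bm x+\bm w\|$ satisfies $|X|\le\beta$. Then $M(z)=\E[e^{zX}]$ is entire with $|M(z)|\le e^{\beta|z|}$. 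A feature map shared across inputs would need $M(a(1+i))=e^{ia^2}=e^{(a(1+i))^2/2}$ for an interval of scales $a=\sqrt{s/2}\,\|\bm x+\bm w\|/\tau$. The identity theorem would then give $M(z)=e^{z^2/2}$ everywhere, contradicting the growth bound on the real axis. In particular, a symmetric truncation of the Gaussian changes $\E[e^{(1+i)\xi}]$, so Lemma~\ref{lem:unbiased} does not apply to it. The first reading assumes away exactly the step that needs proof.

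Your second option is the right route, but it proves a different inequality, and the stated one is false without a bias term. Take $\bm x\perp\bm w$ with unit norms and $\tau=1$, so the target is $\cos 0=1$. On the event, each summand is $e^{\xi_r}\cos(\xi_r-s_k)$ with $|\xi_r|\le\sqrt{s_k}\,\beta$. So for any node with $\cos s_k<1$, small $\beta$, and $\varepsilon<1-\cos s_k-O(\beta)$, the event $|\widehat C_k-1|\ge\varepsilon$ has conditional probability one for every $m$, while the claimed bound tends to $0$. What Hoeffding actually gives comes from applying it to the i.i.d.\ variables $\tilde Y_r=Y^c_r\mathbf 1\{|Z_r|\le\beta\}\in[-B,B]$, where $Z_r=\bm\omega_r\cdot(\bm x+\bm w)/\|\bm x+\bm w\|\sim\mathcal N(0,1)$:
\[
\mathbb P\big[|\widehat C_k-\cos(s_kt/\tau^2)|\ge\varepsilon\big]\le m\,\mathbb P(|Z_1|>\beta)+2\exp\!\Big(-\frac{m(\varepsilon-b_\beta)^2}{2B^2}\Big),\qquad \varepsilon>b_\beta .
\]
Here $b_\beta=\E\big[e^{\sigma|Z_1|}\mathbf 1\{|Z_1|>\beta\}\big]=2e^{\sigma^2/2}\,\bar\Phi(\beta-\sigma)$ bounds the bias, with $\sigma=\sqrt{s_k/2}\,\|\bm x+\bm w\|/\tau$. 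The conditional version replaces $b_\beta$ by $(\mathbb P(|Z_1|>\beta)+b_\beta)/\mathbb P(|Z_1|\le\beta)$. Note that $b_\beta$ is of order $e^{\sigma\beta-\beta^2/2}$, not $O(e^{-\beta^2/2})$. The unbounded summand carries the same exponential in $\|\bm x+\bm w\|/\tau$ as $B$, so the bias must appear in the bound rather than be absorbed. The paper's follow-up remark with $\beta=\sqrt{2\log(2m/\delta)}$ accounts for the first term above but not for $b_\beta$.
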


\begin{proof}
The hypothesis gives $|\xi_r|\le\sqrt{s/2}\,\beta\|\bm x+\bm w\|/\tau$, so $|Y^{c}_r|\le e^{|\xi_r|}\le B$ a.s.
The $Y^{c}_r$ are independent with mean $\cos(s_kt/\tau^{2})$ by Lemma~\ref{lem:unbiased} and
$\widehat C_k=\tfrac1m\sum_rY^{c}_r$, so Hoeffding's inequality applies with range $2B$.
\end{proof}

The hypothesis is not vacuous: $\bm\omega_r\!\cdot\!\bm v/\|\bm v\|$ is standard normal, so a union bound gives
$\beta=\sqrt{2\log(2m/\delta)}$ with probability $\ge1-\delta$, whence
$B=\exp\!\big(\|\bm x+\bm w\|\sqrt{s_{\max}\log(2m/\delta)}/\tau\big)$: the failure probability is explicit,
and the constant is the same exponential in $\|\bm x+\bm w\|/\tau$ that governs the variance.

\subsection{Cost of Accuracy}
\label{sec:polycost}
Sec.~\ref{sec:alg-polynomial} realizes degree-$n$ monomials as products of $n$ independent linear projections. For any such feature map the required width is exponential in $n$. The variance computation rests on the following classical fact about Gaussian moments.

\begin{lemma}[Isserlis' theorem]\label{lem:isserlis}
Let $(X_1,\dots,X_{2n})$ be a centered jointly Gaussian vector. Then
\begin{equation}
  \E\Big[\textstyle\prod_{i=1}^{2n}X_i\Big]=\sum_{\text{pairings}}\ \prod_{\{i,j\}}\E[X_iX_j],
  \label{eq:isserlis}
\end{equation}
the sum running over all partitions of $\{1,\dots,2n\}$ into pairs $\{i,j\}$ (moments of odd order vanish). In
particular, the fourth moment is
\begin{equation}
  \E[X_1X_2X_3X_4]=\E[X_1X_2]\,\E[X_3X_4]+\E[X_1X_3]\,\E[X_2X_4]+\E[X_1X_4]\,\E[X_2X_3].
  \label{eq:isserlis4}
\end{equation}
\end{lemma}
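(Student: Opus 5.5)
The statement to prove is Isserlis' theorem (Lemma~\ref{lem:isserlis}). I would prove it through the moment generating function of the Gaussian vector.

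Let $\Sigma_{ij}=\E[X_iX_j]$. For a centered jointly Gaussian vector,
\[
\E\big[e^{\bm\lambda\cdot\bm X}\big]=\exp\big(\tfrac12\bm\lambda^{\top}\Sigma\bm\lambda\big)
\qquad\text{for all }\bm\lambda\in\R^{2n}.
\]
This is the same Gaussian MGF identity already used in Lemma~\ref{lem:unbiased}, now applied to the scalar Gaussian $\bm\lambda\cdot\bm X$, whose variance is $\bm\lambda^{\top}\Sigma\bm\lambda$. The mixed moment is then
\[
\E\Big[\prod_i X_i\Big]=\frac{\partial^{2n}}{\partial\lambda_1\cdots\partial\lambda_{2n}}\,\E\big[e^{\bm\lambda\cdot\bm X}\big]\Big|_{\bm\lambda=0}.
\]
Differentiating under the expectation is justified because the MGF is finite for every $\bm\lambda$ and all Gaussian moments are finite.

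Next I expand the right-hand side as the power series $\sum_{k\ge0}\frac{1}{k!}\big(\tfrac12\sum_{i,j}\Sigma_{ij}\lambda_i\lambda_j\big)^k$. The mixed derivative at $0$ picks out the coefficient of the multilinear monomial $\lambda_1\cdots\lambda_{2n}$, in which each variable appears exactly once. Only the term $k=n$ can contribute, because it is the only one of total degree $2n$. If the number of variables is odd, no term has odd total degree, so odd-order moments vanish.

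In the $k=n$ term, I expand $\big(\sum_{i<j}\Sigma_{ij}\lambda_i\lambda_j\big)^n$. Here I used symmetry of $\Sigma$ to rewrite $\tfrac12\sum_{i,j}\Sigma_{ij}\lambda_i\lambda_j$ as $\sum_{i<j}\Sigma_{ij}\lambda_i\lambda_j$ plus diagonal terms. Diagonal terms $\lambda_i^2$ cannot appear in a multilinear monomial, so they are dropped. The multilinear monomials then arise exactly from ordered choices of $n$ disjoint pairs covering $\{1,\dots,2n\}$. Each unordered pairing is counted $n!$ times, which cancels the $1/n!$. The result is $\sum_{\text{pairings}}\prod\Sigma_{ij}$, which is Eq.~\ref{eq:isserlis}.

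Specializing to $2n=4$ gives the three pairings $\{12,34\}$, $\{13,24\}$, $\{14,23\}$, which is Eq.~\ref{eq:isserlis4}.

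The main obstacle is the combinatorial bookkeeping of the factor $\tfrac12$ against the ordered/unordered pair counts. The cleanest way to handle it is to work with $\sum_{i<j}$ from the start, as done above.

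A fallback argument is induction using Gaussian integration by parts: $\E[X_1F(\bm X)]=\sum_j\Sigma_{1j}\E[\partial_jF]$ with $F=\prod_{i\ge2}X_i$. This yields the recursion $\E[\prod X_i]=\sum_{j\ge2}\Sigma_{1j}\E[\prod_{i\ne1,j}X_i]$, which is exactly the recursive structure of pairings.
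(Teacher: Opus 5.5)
Your proof is correct and complete. The paper gives no proof of this lemma at all: it invokes Isserlis' theorem as a classical fact about Gaussian moments and uses it directly in Proposition~\ref{prop:polyvar}. Your moment-generating-function argument therefore supplies the standard proof that the paper omits, and each step holds:
\begin{itemize}
\item The mixed derivative at $\bm\lambda=0$ extracts the coefficient of $\lambda_1\cdots\lambda_{2n}$. Equivalently, you can expand $e^{\bm\lambda\cdot\bm X}$ and exchange sum and expectation, which is legitimate because $\E[e^{|\bm\lambda\cdot\bm X|}]<\infty$.
\item Degree counting isolates the $k=n$ term and kills odd orders.
\item Once the diagonal part is discarded, the $n!$ orderings of each pairing cancel the $1/n!$.
\end{itemize}

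One point is worth stating explicitly, because it is exactly how the paper uses the lemma. The identity $\E[e^{\bm\lambda\cdot\bm X}]=\exp(\tfrac12\bm\lambda^{\top}\Sigma\bm\lambda)$ holds for singular $\Sigma$ as well. So your argument covers vectors with repeated entries, such as $(\bm g\cdot\bm x,\bm g\cdot\bm x,\bm g\cdot\bm w,\bm g\cdot\bm w)$ in Proposition~\ref{prop:polyvar}, where the fourth-moment formula gives $\|\bm x\|^2\|\bm w\|^2+2t^2$.

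Your integration-by-parts fallback is an equally valid route and produces the pairing recursion directly. The generating-function version has the advantage of needing no induction.
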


\begin{proposition}[Variance of degree-$n$ product features]\label{prop:polyvar}
Let $\bm G^{(1)},\dots,\bm G^{(n)}\in\R^{m\times d}$ have i.i.d.\ $\mathcal N(0,1)$ entries, independent across
$i$, and put
$X_r=\prod_{i=1}^{n}\big(\bm G^{(i)}_r\!\cdot\!\bm x\big)\big(\bm G^{(i)}_r\!\cdot\!\bm w\big)$, so
$\tfrac1m\sum_rX_r$ estimates $t^{n}$. With $\kappa=t/(\|\bm x\|\|\bm w\|)$,
\begin{equation}
  \mathrm{Var}(X_r)=\big(\|\bm x\|^{2}\|\bm w\|^{2}+2t^{2}\big)^{n}-t^{2n},
  \qquad
  \frac{\mathrm{Var}(X_r)}{t^{2n}}=\Big(\frac{1+2\kappa^{2}}{\kappa^{2}}\Big)^{n}-1\;\ge\;3^{n}-1,
  \label{eq:polyvar}
\end{equation}
with equality iff $\bm x\parallel\bm w$. Hence relative accuracy $\eta$ needs $m\ge(3^{n}-1)/\eta^{2}$.
\end{proposition}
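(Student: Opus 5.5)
The plan is to compute the second moment of $X_r$ exactly, using independence across the $n$ projection matrices and Isserlis' theorem (Lemma~\ref{lem:isserlis}) for each factor. Drop the index $r$ and write $U_i=\bm G^{(i)}\!\cdot\!\bm x$ and $V_i=\bm G^{(i)}\!\cdot\!\bm w$. For each $i$, the pair $(U_i,V_i)$ is centered jointly Gaussian with $\E[U_i^2]=\|\bm x\|^2$, $\E[V_i^2]=\|\bm w\|^2$ and $\E[U_iV_i]=t$. The pairs are independent across $i$ because the matrices $\bm G^{(i)}$ are independent. Hence $X=\prod_i U_iV_i$ satisfies
\[
\E[X]=\prod_i\E[U_iV_i]=t^n
\quad\text{and}\quad
\E[X^2]=\prod_i\E[U_i^2V_i^2].
\]
This also confirms that $\tfrac1m\sum_r X_r$ is unbiased for $t^n$.

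The single factor comes from the fourth-moment case \eqref{eq:isserlis4} with $X_1=X_2=U_i$ and $X_3=X_4=V_i$:
\[
\E[U_i^2V_i^2]=\E[U_i^2]\E[V_i^2]+2(\E[U_iV_i])^2=\|\bm x\|^2\|\bm w\|^2+2t^2.
\]
Multiplying over $i$ and subtracting $(\E X)^2=t^{2n}$ gives the variance formula in Eq.~\ref{eq:polyvar}. Dividing by $t^{2n}$ (assuming $t\neq0$, so $\kappa\neq0$) and using $\|\bm x\|^2\|\bm w\|^2=t^2/\kappa^2$ gives the relative form $\big((1+2\kappa^2)/\kappa^2\big)^n-1$.

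For the lower bound, note that $(1+2\kappa^2)/\kappa^2=2+1/\kappa^2$ is decreasing in $\kappa^2$. By Cauchy--Schwarz, $\kappa^2\le1$, so this quantity is at least $3$, and raising to the $n$-th power gives the bound $3^n-1$. Equality holds iff $\kappa^2=1$, which is exactly the Cauchy--Schwarz equality case $\bm x\parallel\bm w$. Strict monotonicity of $z\mapsto z^n$ on $z>0$ transfers the equality condition through the power.

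Finally, the averaged estimator has variance $\mathrm{Var}(X_r)/m$. Its relative standard deviation is therefore at least $\sqrt{(3^n-1)/m}$. Requiring this to be at most $\eta$ forces $m\ge(3^n-1)/\eta^2$.

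There is no real obstacle. The one step needing care is bookkeeping: independence across $i$ lets the second moment factor as a product of per-factor fourth moments. This works because each $\bm G^{(i)}_r$ is used exactly once in $U_i$ and once in $V_i$, and never shared across different $i$. A secondary point is the degenerate case $t=0$, where the relative variance is undefined. There I would simply state that the absolute formula still holds and the relative statement is understood for $t\neq0$.
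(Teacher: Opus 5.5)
Your proposal is correct and follows the paper's proof essentially step for step: you use independence across $i$ to factor $\E[X_r^2]$ into per-factor fourth moments, Isserlis' theorem to get $\|\bm x\|^2\|\bm w\|^2+2t^2$, the substitution $\|\bm x\|^2\|\bm w\|^2=t^2/\kappa^2$, and minimization of $2+\kappa^{-2}$ at $\kappa^2=1$. Your explicit use of Cauchy--Schwarz for $\kappa^2\le 1$ and its equality case, and your remarks on $t=0$ and on the averaged estimator having variance $\mathrm{Var}(X_r)/m$, only spell out details the paper leaves implicit.
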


\begin{proof}
Independence across $i$ gives $\E[X_r]=t^{n}$ and
$\E[X_r^{2}]=\prod_i\E[(\bm G^{(i)}_r\!\cdot\!\bm x)^{2}(\bm G^{(i)}_r\!\cdot\!\bm w)^{2}]$. For one Gaussian
row, $(\bm g\!\cdot\!\bm x,\bm g\!\cdot\!\bm w)$ is centered jointly Gaussian with variances
$\|\bm x\|^{2},\|\bm w\|^{2}$ and covariance $t$, so Isserlis' theorem (Lemma~\ref{lem:isserlis}) gives second
moment $\|\bm x\|^{2}\|\bm w\|^{2}+2t^{2}$: the like pairing of the two $\bm g\!\cdot\!\bm x$ and the two
$\bm g\!\cdot\!\bm w$ factors contributes $\|\bm x\|^{2}\|\bm w\|^{2}$, and the two cross pairings contribute
$t^{2}$ each. Substituting $\|\bm x\|^{2}\|\bm w\|^{2}=t^{2}/\kappa^{2}$ gives the
second form; $\kappa^{-2}+2$ is minimized at $\kappa^{2}=1$, where it equals $3$.
\end{proof}

The bound assumes nothing about the geometry and places no bound on the projection entries, which Gaussian rows
would violate. The $3^{n}$ floor is \emph{attained} at $\bm x\parallel\bm w$, so conditioning on the inputs
cannot remove the growth in $n$. 

The contrast with the rational block is: accuracy and cost are controlled by different quantities:
the pole count sets the approximation error, while the variance exponent is set by the input scale alone.

\begin{theorem}[Accuracy and cost are separate factors]\label{thm:cost}
Let $\|\bm x\|=\|\bm w\|$ with $|\kappa|\ge c_0>0$, and let $\tau^{2}=c\,\mathrm{RMS}(\bm w\!\cdot\!\bm x)$
as in App.~\ref{sec:tau_appendix}. Then
\begin{equation}
  \frac{\|\bm x+\bm w\|^{2}}{\tau^{2}}\;\le\;\frac{4}{c\,c_0},
  \label{eq:aligned}
\end{equation}
independent of $d$, so by Theorem.~\ref{thm:total} and Cor.~\ref{cor:variance_atom} an $L^{2}$ error $\varepsilon$
is attained with $m_{\mathrm{rat}}=O\big(\varepsilon^{-2}e^{4s_{\max}/(q_{\min}c\,c_0)}\big)$, with $q_{\min}$
the narrowest pole width (Cor.~\ref{cor:variance_atom}); the exponent does not
depend on $\varepsilon$, on the number of poles, or on $d$. A degree-$K$ product-projection block instead needs,
by Prop.~\ref{prop:polyvar},
\begin{equation}
  m_{\mathrm{poly}}\;\ge\;(3^{K}-1)\,\big(c_K\,t^{K}\big)^{2}\big/\varepsilon^{2},
  \label{eq:polycost}
\end{equation}
with $c_K$ the leading coefficient and $K=K(\varepsilon)$ the degree required for approximation error
$\varepsilon/2$; for non-polynomial $f$, $K(\varepsilon)\to\infty$.
\end{theorem}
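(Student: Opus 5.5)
The plan has three parts: the geometric inequality \eqref{eq:aligned}, the rational-block cost, and the polynomial lower bound.

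\textbf{Geometric bound.} Write $\|\bm x\|=\|\bm w\|=a$. Then $t=\kappa a^{2}$ and $\|\bm x+\bm w\|^{2}=2a^{2}(1+\kappa)\le4a^{2}$. I would interpret the calibration $\tau^{2}=c\,\mathrm{RMS}(\bm w\cdot\bm x)$ of App.~\ref{sec:tau_appendix} on the inputs of interest as giving $\tau^{2}\ge c\,|t|$. Under the standing normalization $|\kappa|\ge c_0$ for every pair, the RMS is at least $c_0a^{2}$, so $\tau^{2}\ge c\,c_0a^{2}$. Dividing gives $\|\bm x+\bm w\|^{2}/\tau^{2}\le 4/(c\,c_0)$, and no dimension $d$ enters. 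The main obstacle is that the RMS is a population quantity over pairs. I would handle it by stating explicitly that the lower bound $\mathrm{RMS}\ge c_0a^{2}$ follows from $|\kappa|\ge c_0$ holding on the calibration population with common norm $a$.

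\textbf{Rational-block cost.} Insert $\rho^{2}/\tau^{2}\le4/(cc_0)$ into Cor.~\ref{cor:variance_atom}(ii). For each atom this gives
\[
V\le \frac{c_A^{2}+c_B^{2}}{m}\,e^{4s_{\max}/(q\,c\,c_0)}.
\]
Summing over the $J$ atoms with independent projections, the worst exponent is the one with $q_{\min}$. The prefactor $\sum_j(c_{A,j}^{2}+c_{B,j}^{2})$ is a constant of the fixed surrogate and is absorbed into the $O(\cdot)$. The polynomial block is treated separately, or is absent in the pure-rational comparison.

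Next apply Theorem~\ref{thm:total}. Choose the surrogate and the nodes so that $\varepsilon_{\mathrm{rat}}+\varepsilon_{\mathrm{quad}}\le\varepsilon/2$. Then require $\sqrt{V_{\mathrm{rat}}}\le\varepsilon/2$, which holds once
\[
m_{\mathrm{rat}}\ge 4\varepsilon^{-2}\Big(\textstyle\sum_j(c_{A,j}^{2}+c_{B,j}^{2})\Big)e^{4s_{\max}/(q_{\min}cc_0)}.
\]
The exponent contains only $s_{\max}$, $q_{\min}$, $c$ and $c_0$. It therefore does not depend on $\varepsilon$, on $d$, or on the pole count, apart from whatever dependence the constant and $q_{\min}$ pick up.

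\textbf{Polynomial lower bound.} Suppose a degree-$K$ block reaches $L^{2}$ error $\varepsilon$. Its variance must then be at most $\varepsilon^{2}$. The leading monomial is estimated by $\tfrac1m\sum_r X_r$, weighted by $c_K$. Assuming the monomials use independent projections, so that variances add, the leading term alone gives
\[
\mathrm{Var}\ge c_K^{2}\,\mathrm{Var}(X_r)/m \ge c_K^{2}(3^{K}-1)t^{2K}/m
\]
by Prop.~\ref{prop:polyvar}. Rearranging yields \eqref{eq:polycost}.

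\textbf{Degree must grow.} Finally, $K(\varepsilon)\to\infty$ for non-polynomial $f$. If $K$ stayed bounded along a sequence $\varepsilon\to0$, then polynomials of bounded degree would converge uniformly to $f$ on $[-T,T]$. That space is finite-dimensional and hence closed, so $f$ would be a polynomial, a contradiction.
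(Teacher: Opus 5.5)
Your geometric bound and your polynomial lower bound are exactly the paper's proof. That is, you use the identity $\|\bm x+\bm w\|^{2}=2a^{2}(1+\kappa)\le4a^{2}$ with $\tau^{2}\ge c\,c_0a^{2}$, then keep only the degree-$K$ term of the block variance and require standard deviation at most $\varepsilon$. Your closedness argument for $K(\varepsilon)\to\infty$ is a correct addition that the paper does not spell out. The paper gives no derivation of the $m_{\mathrm{rat}}$ claim beyond citing Theorem~\ref{thm:total} and Cor.~\ref{cor:variance_atom}, and your attempt to supply one has a genuine gap.

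You choose ``the surrogate and the nodes'' so that $\varepsilon_{\mathrm{rat}}+\varepsilon_{\mathrm{quad}}\le\varepsilon/2$, and then conclude that the exponent $4s_{\max}/(q_{\min}c\,c_0)$ is independent of $\varepsilon$. But $s_{\max}$ is one of the node parameters you just tuned. With the paper's Gauss--Laguerre weights, $\sum_k\gamma_k=1$, so for the centered atom $g(0)-\widehat g(0)=\sum_{s_k>s_{\max}}\gamma_k$ exactly. This discarded weight tends to $e^{-s_{\max}}$ as nodes are added, so for fixed $s_{\max}$ the quantity $\varepsilon_{\mathrm{quad}}$ has a positive floor.

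Making $\varepsilon_{\mathrm{quad}}\le\varepsilon/2$ for small $\varepsilon$ therefore forces $s_{\max}$ to grow, like $\log(1/\varepsilon)$ once many nodes are used. Then $e^{4s_{\max}/(q_{\min}c\,c_0)}$ is itself of order $\varepsilon^{-4/(q_{\min}c\,c_0)}$. Your bound thus yields $m_{\mathrm{rat}}$ of order $\varepsilon^{-2-4/(q_{\min}c\,c_0)}$, rather than $\varepsilon^{-2}$ times an $\varepsilon$-free factor.

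Invoking the trainable per-node coefficients does not obviously repair this. It abandons the normalization $w_{c,k}^{2}+w_{s,k}^{2}=c_A^{2}+c_B^{2}$, $\sum_k\gamma_k^{2}\le1$ on which Cor.~\ref{cor:variance_atom}(ii) rests. The $\varepsilon$-dependence would then simply move into the constant you absorb into $O(\cdot)$.

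What your computation does establish is the fixed-hyperparameter statement. For a fixed surrogate and fixed $s_{\max}$, taking $m_{\mathrm{rat}}\ge4\varepsilon^{-2}\sum_j(c_{A,j}^{2}+c_{B,j}^{2})\,e^{4s_{\max}/(q_{\min}c\,c_0)}$ makes the random term at most $\varepsilon/2$. The total $L^{2}$ error is then at most $\varepsilon_{\mathrm{rat}}+\varepsilon_{\mathrm{quad}}+\varepsilon/2$. So an overall error $\varepsilon$ is reached only for $\varepsilon$ above a deterministic floor fixed by $s_{\max}$.

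The defect is inherited from the statement itself; the paper concedes after Theorem~\ref{thm:total} that the quadrature and variance terms cannot be varied independently. The right fix is to state the rational claim in this restricted form, or to carry $s_{\max}=s_{\max}(\varepsilon)$ through explicitly. You should not assert that the exponent does not depend on $\varepsilon$.
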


\begin{proof}
For Eq.~\ref{eq:aligned}, $\|\bm x\|=\|\bm w\|=r$ gives $\|\bm x+\bm w\|^{2}=2r^{2}(1+\kappa)\le4r^{2}$,
while $|t|=|\kappa|r^{2}\ge c_0r^{2}$ for every such pair, so $\tau^{2}=c\,\mathrm{RMS}(t)\ge c\,c_0r^{2}$.
Eq.~\ref{eq:polycost} follows by discarding all but the degree-$K$ term of the block variance, at least
$c_K^{2}(3^{K}-1)t^{2K}/m$ by Prop.~\ref{prop:polyvar}, and requiring standard deviation at most $\varepsilon$.
\end{proof}

Equation.~\ref{eq:polycost} is only informative if $c_Kt^{K}$ does not vanish as $K$ grows. For $f=\mathrm{silu}$ on
$[-T,T]$ we find $|c_Kt^{K}|=\Theta(1)$ at $|t|=T$ across $K=6,\dots,16$, so the $3^{K}$ survives and the
required width scales as $\varepsilon^{-2-\alpha}$ with $\alpha>0$ growing with $T$.

\begin{remark}
Consider an atom whose poles lie a distance $q$ from $[-T,T]$. Approximating it to error $\varepsilon$ by
polynomials needs degree $K\sim\tfrac{T}{q}\log\tfrac1\varepsilon$, so by Eq.~\ref{eq:polycost} the
polynomial block uses $\varepsilon^{-\Theta(T/q)}$ features, while the rational block reaches the same
$\varepsilon$ with $O_q(\varepsilon^{-2})$. A narrow pole (small $q$) thus enters the rational cost as a
constant and the polynomial cost as a growing exponent in $\varepsilon$.
\end{remark}
\section{Unified Kernel Representation}
\label{sec:unified_appendix}
We relate our construction to prior kernel representations of MLPs~\citep{eugens, snnk}, expanding on
Sec.~\ref{sec:gen_rat}. All three fit the common representation
\begin{equation}
R(t)=\int_{\Xi} K_{\xi}(t)\,d\mu(\xi),\qquad
K_{\xi}(\bm{x}^{\top}\bm{w})=\E_{\omega}\!\left[\phi_{\xi,\omega}(\bm{x})\psi_{\xi,\omega}(\bm{w})\right],
\label{eq:unified}
\end{equation}
with each scalar kernel $K_{\xi}$ carried by separable features, and differ only in how the spectral measure
$\mu$ is obtained. For the polynomial block $\Xi=\mathbb{N}$ and $K_n(t)=t^{n}$, so $\mu$ is the signed
counting measure carrying the coefficients $c_n$, at the cost of Prop.~\ref{prop:polyvar}.

\paragraph{Fourier expansion of a given activation.} Universal random features~\citep{snnk} read $\mu$ off
the Fourier transform of an activation $f$ supplied as input, decomposed into nonnegative parts and sampled
through proposal densities; the resulting $\exp(\bm{\hat x}^{\top}\bm{\hat w})$ is then handled by a
softmax-kernel linearization of the kind used in Sec.~\ref{sec:alg-atoms-base}. We share that device, an
oscillatory integrand against a nonnegative measure, and differ in the direction of the construction: there
$f$ is given and $\mu$ is derived from it, here $\mu$ is the parameterized object and $f$ is whatever it
produces.

\paragraph{Partial fractions.} For an irreducible-quadratic atom the transform is already a single
nonnegative exponential density: by Sec.~\ref{sec:gen}, $\mu$ is $e^{-s}ds$ at frequency $s/q$ for both the
cosine and the sine term, with coefficients $c_A=|A|/q^{2}$ and $c_B=|B|/q$ (Cor.~\ref{cor:variance_atom}),
so no decomposition or reweighting is needed, and Gauss--Laguerre is the quadrature matched to it, its
weight function being exactly $e^{-s}$. The measure is differentiable in the pole parameters, $q$ entering
only by rescaling the nodes to $s_k/q$ and the weights to $\gamma_k/q^{2}$ and $\gamma_k/q$; a gradient
therefore reaches $\mu$ itself, so pole locations and widths are learned rather than read off a fixed
activation. The per-node coefficients are moreover left free, so the block spans
$\{\cos(s_k(t-p)/q),\sin(s_k(t-p)/q)\}$ on the node set, with Eq.~\eqref{eq:pf} as the initialization and
App.~\ref{sec:total_error} covering the family.

\section{Practical Considerations and Extended Functionalities of RaLiMP}
In this section, we discuss additional properties of the RaLiMP layers and discuss some extensions in context of LLMs and guidance on choosing $\tau$ as it controls the variance of the estimator.

\subsection{Choosing the Input Scale $\tau$}
\label{sec:tau_appendix}
The rational features of Eq.~\ref{eq:ratfeat} are evaluated at $\bm u/\tau$ on both sides, so $\tau$ acts on
the scalar variable as $t\mapsto t/\tau^{2}$: the block estimates $\cos(s_k t/\tau^{2})$ and therefore realizes
the \emph{dilated} atom $1/(1+(t/\tau^{2})^{2})$. A single scale thus controls two quantities at once. It sets
the feature exponent $s\|\bm x+\bm w\|^2/\tau^{2}$ of Sec.~\ref{sec:variance}, so larger $\tau$ means smaller
variance; and it sets the effective frequencies $s_k/\tau^{2}$, so larger $\tau$ compresses the Gauss--Laguerre
nodes toward the flat center of the atom and wastes them. The useful regime is $t/\tau^{2}=O(1)$, which places
the nodes on the part of the atom that varies. Concretely we set, \emph{per layer},
\begin{equation}
  \tau^{2} \;=\; c\,\sqrt{\E\big[t^2\big]}\;=\;c\,\mathrm{RMS}(\bm w\!\cdot\!\bm x),
  \label{eq:tau}
\end{equation}
estimated from a small sample of pre-activations, so that $\mathrm{RMS}(t/\tau^{2})=1/c$ and the constant $c$
directly names the typical argument magnitude. Note that the square is what makes this scale-invariant: taking
$\tau$ itself proportional to $\mathrm{RMS}(t)$ would leave $t/\tau^{2}$ \emph{shrinking} as the layer's
pre-activation scale grows.

\subsection{Deterministic Nodes versus Monte Carlo}
\label{sec:quadrature_vs_mc}
The weight $e^{-s}$ in Eq.~\ref{eq:gl} is a probability density, so the frequency integral equals
$\E_{s\sim\mathrm{Exp}(1)}[\cos(st)]$ and could be sampled, as random-feature constructions usually do. Two
reasons for fixed Gauss--Laguerre nodes instead. First, accuracy at equal cost: our per-node coefficients
are trainable, so the fair comparison grants both node sets least-squares-optimal coefficients and differs
only in node placement. Fig.~\ref{fig:spectrum} (right) shows the deterministic nodes ahead at every count,
by $7.4\times10^{-3}$ against $1.2\times10^{-2}$ at $n_{\mathrm{freq}}=3$ and two orders of magnitude at
$n_{\mathrm{freq}}=8$; Monte Carlo nodes cluster near $s=0$, so their near-duplicate features need
fitted coefficients of order $10^{6}$ where ours stay below $1$. Second, the nodes are known before
training, so $s_{\max}$, the discarded weight $\sum_{s_k>s_{\max}}\gamma_k$ and the bound of
Cor.~\ref{cor:variance_atom} are deterministic rather than controlled in expectation, and all randomness in
a RaLiMP sits in the projections $\bm\omega$. 

\begin{figure}
  \centering
  \includegraphics[width=\linewidth]{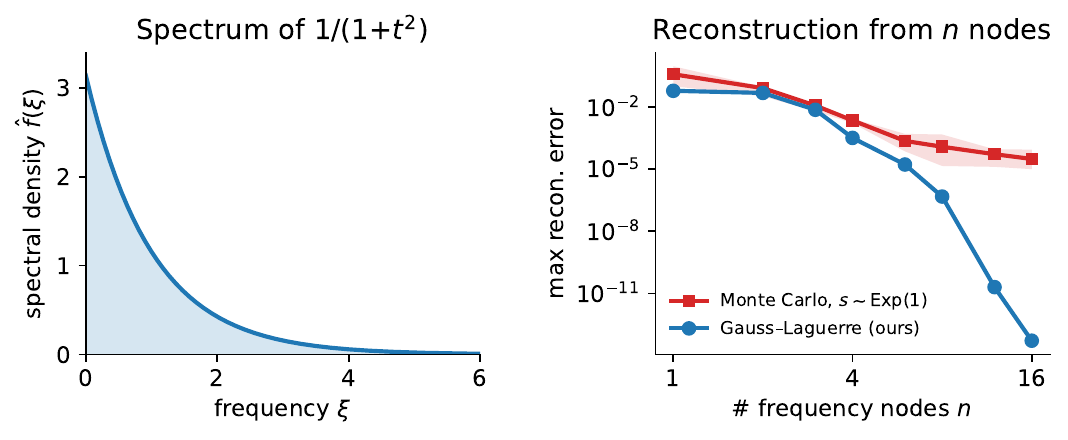}
  \caption{\textbf{Left:} the spectral density $\pi e^{-\xi}$ of $1/(1+t^{2})$ is continuous, so no finite
  set of frequencies is exact and the continuum must be discretized. \textbf{Right:} max-abs reconstruction
  error on $|t|\le1$ from $n$ frequencies, Gauss--Laguerre nodes against $s\sim\mathrm{Exp}(1)$ (mean of 200
  draws, shaded 10--90th percentile), both with least-squares-fitted coefficients.}
  \label{fig:spectrum}
\end{figure}

\subsection{Repeated Conjugate Poles}
\label{sec:repeated_poles}
A conjugate pair of multiplicity $n$ contributes partial-fraction terms with denominators
$((t-p)^2+q^2)^{j}$, $j=1,\dots,n$, which are not of the form of Eq.~\ref{eq:genatom}. These require no new
feature maps: raising the multiplicity only multiplies the Laplace weight by a polynomial in $s$. Indeed
\begin{equation}
  \int_{0}^{\infty} s^{k}e^{-s}e^{ist}\,ds=\frac{k!}{(1-it)^{k+1}},
\end{equation}
whose real and imaginary parts are proper rational functions with denominator $(1+t^{2})^{k+1}$. For $n=2$ the
four functions obtained from $k=0,1$ have numerators $1+t^{2}$, $1-t^{2}$, $t+t^{3}$ and $2t$ over
$(1+t^{2})^{2}$, which span every proper rational function with that denominator; the general case is
identical, and the shifted atom follows by the substitution of Sec.~\ref{sec:gen}. Consequently weights
$\rho_{c}(s)e^{-s}$ and $\rho_{s}(s)e^{-s}$ of degree $n-1$ on the cosine and sine channels suffice, and the
construction is unchanged apart from replacing the quadrature weights by $\gamma_k\rho_{c}(s_k)$ and
$\gamma_k\rho_{s}(s_k)$: the kernel $e^{is_k t}$, the estimators of
Eqs.~\ref{eq:Cest}--\ref{eq:Sest} and the variance bound of Sec.~\ref{sec:variance} all carry over verbatim.

\subsection{Compressing Networks by Combining Layers}\label{sec:bundling}
This is the principal motivation of using kernel methods instead of regular MLP. This was discussed in previous works~\citep{snnk, eugens} but we include it for the convenience of the reader.

In several empirical settings, e.g., training NeRF models and uptraining Transformers, we use an RaLiMP layer followed by a linear layer. In this section, we explain how we can achieve improved efficiency by compressing linear layers with RaLiMPs. Recall our RaLiMP layer is given by the following equation (following the PyTorch convention): 
\begin{equation}~\label{eqn:rf_eqn}
	\mathbf{Y}_1 = \Phi(\mathbf{X})\Psi(\mathbf{W})^{\top}
\end{equation}

If the RaLiMP layer is followed by the linear layer with weight $\mathbf{W}'$ and bias $\mathbf{b}'$, then the results is:
\begin{align*}
	\mathbf{Y}_2 &= \mathbf{Y_1 W'}^{\top} + \mathbf{b}' \\
	&= \Phi(\mathbf{X})\Psi(\mathbf{W})^{\top} \mathbf{W'}^{\top} + \mathbf{b}' \\
	&= \Phi(\mathbf{X_1})\hat{\mathbf W}^{\top} + \mathbf{b}',
\end{align*}
where $\mathbf{\hat{W}} = \mathbf{W}'\Psi(\mathbf{W})$. This simple compression technique allows us to collapse any RaLiMP layer followed by a FFL in our experiments, allowing for more efficient inference. 

\subsection{Efficient Computations with Hadamard-Product Matrices}
\label{sec:hadamard}
Assume that you have a Hadamard-product matrix $\mathbf{M \in \mathbb{R}^{N \times N}}$ defined as:
\begin{equation}
\mathbf{M} = \mathbf{Q} \odot \mathbf{R}    
\end{equation}
for dot-product matrices: $\mathbf{Q} = [\mathbf{a}_{i}^{\top}\mathbf{b}_{j}]_{i=1,...,N}^{j=1,...,N}$ and
$\mathbf{R} = [\mathbf{c}_{i}^{\top}\mathbf{d}_{j}]_{i=1,...,N}^{j=1,...,N}$,
and some $\mathbf{a}_{i},\mathbf{b}_{i},\mathbf{c}_{i},\mathbf{d}_{i} \in \mathbb{R}^{d}$ for $i=1,...,N$. Note that $\mathbf{M}$ can be then re-written as:
\begin{equation}
\label{eq:efficient_hadamard}
\mathbf{M} \overset{\mathbb{E}}{=} \mathbf{V}\mathbf{W}^{T},    
\end{equation}
for matrices $\mathbf{V},\mathbf{W} \in \mathbb{R}^{N \times r}$ with rows defined as follows (where $r \ll N$ is a hyperparameter):
\begin{align}
\begin{split}
\mathbf{V}[i] = (\mathbf{G}^{1}\mathbf{a}_{i}) \odot (\mathbf{G}^{2}\mathbf{c}_{i}) \in \mathbb{R}^{r}, \\
\mathbf{W}[j] = (\mathbf{G}^{1}\mathbf{b}_{j}) \odot (\mathbf{G}^{2}\mathbf{d}_{j}) \in \mathbb{R}^{r},
\end{split}
\end{align}
where $\mathbf{G}^{1},\mathbf{G}^{2} \in \mathbb{R}^{r \times d}$ are independently sampled Gaussian matrices of entries chosen independently from $\mathcal{N}(0,1)$.

Note that Equation \ref{eq:efficient_hadamard} provides an unbiased low-rank factorization of the Hadamard-product matrix $\mathbf{M}$, making efficient matrix-vector multiplications with it feasible. Importantly, it does it by avoiding any computations that would require materializing $d^{2}$-length vectors. 

The above recipe can be trivially extended to non-square matrices.

\paragraph{Implementation of RaLiMP in LLMs}
LLMs use a \emph{gated} (SwiGLU) feed-forward block, $\mathrm{SiLU}(\mathbf{G}x)\odot(\mathbf{U}x)$ followed
by a down-projection $\mathbf{W}_{d}$, which is exactly the Hadamard-product form of
Section~\ref{sec:hadamard} with $\mathbf{Q}=[\mathrm{SiLU}(\mathbf{g}_{j}^{\top}\mathbf{x}_{i})]$ and
$\mathbf{R}=[\mathbf{u}_{j}^{\top}\mathbf{x}_{i}]$ ($\mathbf{g}_{j},\mathbf{u}_{j}$ the rows of
$\mathbf{G},\mathbf{U}$). $\mathbf{R}$ is already a dot-product matrix; the only extra step is to write
$\mathbf{Q}$ as one, which we do by linearizing the scalar SiLU with the rational random-feature map of
Section~\ref{sec:preliminaries}, $\mathrm{SiLU}(\mathbf{g}_{j}^{\top}\mathbf{x}_{i})\approx
\langle \Phi(\mathbf{x}_{i}), \Psi(\mathbf{g}_{j})\rangle$. Section~\ref{sec:hadamard} then applies verbatim
and yields the unbiased rank-$r$ factorization $\mathbf{M}\overset{\mathbb{E}}{=}\mathbf{V}\mathbf{W}^{\top}$.
The weight side $\mathbf{W}_{d}\mathbf{W}$ depends only on $\mathbf{G},\mathbf{U},\mathbf{W}_{d}$ and folds
once, offline, into a single $d\times r$ matrix, so at inference each token costs one feature map plus two
$O(rd)$ matrix products and never materializes the wide intermediate.

\section{Experimental Details}
\label{sec:hparam_app}
In this section we provide experimental details for all the experiments considered in our work. All experiments on ViT, synthetic and Implicit Neural Representations (INR) are run on 1 24 GB A10 GPU. The CLM and the Qwen experiments are run on 1 GPU inside a 8 x 96 GB Blackwell GPU machine with 2048 GB RAM. 
The PTv3 experiments are run on 4 x 84 GB NVIDIA H100 GPUs in a machine with approximately 2048 GB of RAM.
Each subsection states its data and optimizer and then lists the hyperparameters of the replaced layers following notation of Sec.~\ref{sec:gen}. A rational block draws
$n_{\mathrm{freq}}$ Gauss--Laguerre frequencies (Eq.~\ref{eq:gl}) and keeps those with node $s_k\le s_{\max}$; where an experiment lists no $s_{\max}$, all $n_{\mathrm{freq}}$ frequencies are kept. The
nodes spread further from the origin as $n_{\mathrm{freq}}$ grows, so a fixed $s_{\max}$ retains only the lowest-frequency ones: raising $n_{\mathrm{freq}}$ refines the quadrature inside $[0,s_{\max}]$ rather than reaching higher frequencies. If $s_{max}$ is not mentioned then the given experiment keeps all $n_{\mathrm{freq}}$
nodes. On the pretrained backbones the input scale is
$\tau=c\,\mathrm{RMS}(\bm w\!\cdot\!\bm x)$ (Eq.~\ref{eq:tau}) with $c=4$ unless stated otherwise.

\subsection{Synthetic Experiments}
\label{sec:exp_synth_app}
The synthetic experiment benchmark fits one layer
$\bm Y=f(\bm X\bm W^{\!\top})$ with a fixed $\bm W\in\R^{256\times256}$ and $\bm X\in\R^{8192\times256}$, every entry i.i.d.\ $\mathrm{Unif}[0,1]$ and the pre-activation is centered by its mean and both $\bm X$ and
$\bm Y$ are standardized per column, so the reported fit is in standardized units. We take $f\in\{\mathrm{GELU},\mathrm{SiLU},\tanh,\mathrm{sigmoid},\mathrm{softsign}\}$ and train for $10$ seeds using Adam optimizer. The models are trained for $3000$ steps with learning rate $10^{-2}$ and cosine decay along with clipping at $10$. We report the fitting error as train MSE loss. All four methods linear map, the EUGen, SNNK, and ours share the feature budget of 32 random features/rank, RaLimP spends it on the polynomial of degree $K=2$ with $8$ random features and the rest of the $24$ features in the rational block is split among $n_{\mathrm{freq}}=3$ frequencies with $m=4$ features each, one centered summand of Eq.~\ref{eq:pf}, and $\tau=1$.
The companion panel sweeps the budget over various number of random features for our layer.

\subsection{ViT Experiments}
\label{sec:vit_app}
\label{sec:exp_vit}
We use ViT-B/16 pretrained on ImageNet-21k
(\texttt{google/vit-base-patch16-224-in21k}): 12 layers, dimension =$768$, MLP width $3072$ and GELU non-linearity. We
replace the MLP of the last $k$ blocks (i.e.\ blocks $12-k,\dots,11$) and fine-tune the entire network
with cross-entropy, AdamW, a one-cycle schedule with $10\%$ linear warmup, gradient-norm clipping at $1.0$, fp32, and seed $0$. We test across $k = 4, 7, 9, 11$. We run
two regimes: full-dataset fine-tuning on CIFAR-100, Places365 and ImageNet-1k, and VTAB-1k~\citep{VTAB} transfer over its $19$ tasks (each $1000$ training images, the official $800{+}200$ split used at first for hyperparameter tuning). The reported numbers are from the full test set.

The Visual Task Adaptation Benchmark measures how well a pretrained
vision model transfers to new image-classification tasks under a tight data budget. It comprises 19 tasks
grouped into three categories: \emph{Natural} (everyday photographic images, e.g.\ pets, flowers),
\emph{Specialized} (domain images such as medical and satellite imagery), and \emph{Structured} (tasks
requiring geometric reasoning, e.g.\ counting objects or estimating distances). The ``1k'' denotes the
protocol: each task provides only 1{,}000 labeled training examples, with evaluation on the full test set.
This small training budget is what makes the benchmark demanding as there is little data from which to recover the accuracy lost when introducing our replacement layers.

Table~\ref{tab:vit-recipe} gives the
per-dataset settings; for VTAB-1k, Table~\ref{tab:vtab-sweep} lists the values swept around the default
recipe and each method's reported accuracy is its best over the sweep.

\begin{table}[h]
  \centering\small
    \caption{ViT-B/16 fine-tuning settings. VTAB-1k uses random-resized crop only when the source resolution exceeds $100$ pixels.}
  \resizebox{\textwidth}{!}{%
  \begin{tabular}{lccccl}
    \toprule
    Dataset & train / eval & epochs & batch & lr / wd & augmentation \\
    \midrule
    CIFAR-100    & 50k / 10k    & 12--20 & 256 & $5\!\times\!10^{-5}$ / $0.05$ & resize, h-flip \\
    Places365    & 1.71M / 90k  & 15     & 128 & $5\!\times\!10^{-5}$ / $0.05$ & random-resized crop, h-flip \\
    ImageNet-1k  & 1.28M / 50k  & 25--35 & 128--256 & $5\!\times\!10^{-5}$ / $0.05$ & random-resized crop, h-flip \\
    VTAB-1k ($19$ tasks) & 1k / full test & 100 & 64 & $2.5\!\times\!10^{-4}$ / $10^{-3}$ & resize, random-resized crop, h-flip \\
    \bottomrule
  \end{tabular}
  }
  \label{tab:vit-recipe}
\end{table}

\paragraph{Layer hyperparameters.} The random-feature projections are frozen at their random initialization in all experiments. Coefficients are initialized by a ridge-regularized
least-squares fit of GELU on a per-layer range set from the pretrained pre-activations ($2000$ points),
with the quadratic term fixed to EUGen's and the coefficient $A$ fit to the residual. The per-layer scale is
$\tau=4\,\mathrm{RMS}(\bm w\!\cdot\!\bm x)$, measured once from a minibatch of pretrained pre-activations
and then held fixed. This value of $\tau$ for some datasets (in VTAB) can be large so we additionally clamp this value to a fixed number. On VTAB-1k the rational branch is introduced gradually, it is scaled by a factor
that increases linearly from $0$ to $1$ over the first $30\%$ of steps. We use $64$ random features for the larger image classification experiments and $32$ random features for VTAB. RaLiMP splits its budget into $48$, resp. $24$ (for VTAB) features for the polynomial part and the rational block with one centered atom using $16$ resp. $8$ (for VTAB) random features, $n_{\mathrm{freq}}=3$, and $s_{\max}=4$. 

\begin{table}[h]
  \centering\small
    \caption{Hyperparameters swept on VTAB-1k, around the default recipe of Table~\ref{tab:vit-recipe}.
  Deep layers of a pretrained ViT have much larger pre-activations than early ones, so capping $\tau$
  is the one lever specific to our layer.}
  \begin{tabular}{ll}
    \toprule
    Hyperparameter & Values \\
    \midrule
    learning rate & $1.9,\,2.1,\,2.5\ (\times10^{-4})$; up to $10^{-2}$ on the hardest Structured tasks \\
    weight decay & $10^{-3}$, $5\!\times\!10^{-2}$ \\
    scale multiplier $c$ in $\tau$ & $4,\,8,\,10,\,12$ \\
    cap on $\tau$ & none, $10$, $12$, $16$, $20$, $21$ \\
    label smoothing / dropout & $0,\,0.05,\,0.1,\,0.15$ / $0,\,0.1$ \\
    \bottomrule
  \end{tabular}
  \label{tab:vtab-sweep}
\end{table}

\begin{figure}[h]
\centering
\includegraphics[width=0.9\linewidth]{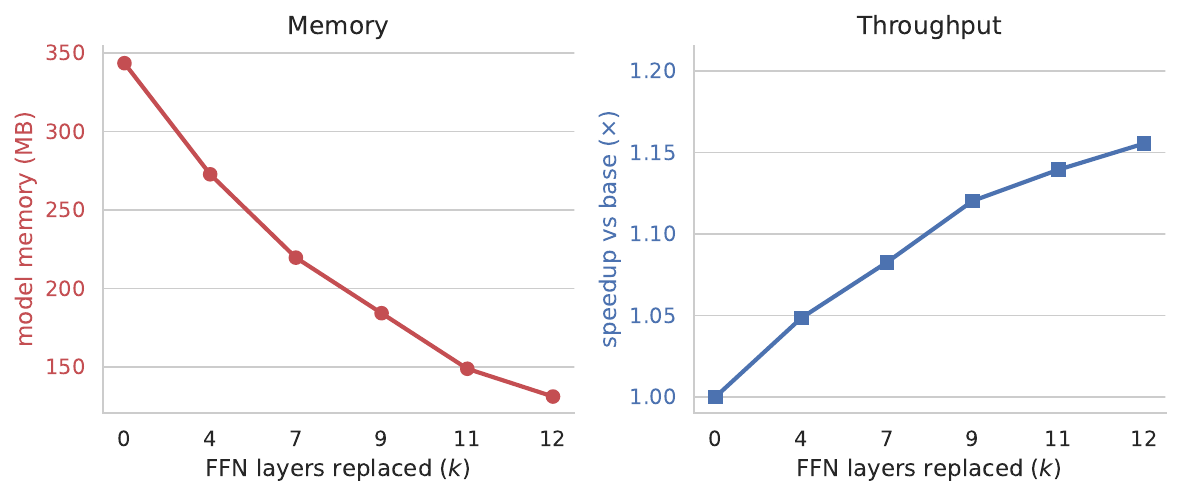}
\caption{\small{RaLiMPs in ViT. Replacing more layers by RaLiMPs reduce memory overhead and speed up inference.}}
\label{fig:efficiency}
\end{figure}

\subsection{Causal Language Modeling}
\label{sec:clm_app}
We use \url{https://github.com/karpathy/nanogpt} implementation of GPT-2. For our experiment, GPT-2 Large ($36$ blocks, $d=1280$, $20$ heads, context $1024$) is trained from scratch on OpenWebText
with the GPT-2 tokenizer and $10$M held-out tokens for validation. We replace the MLP of the last $k$ blocks and run each experiment identically: AdamW ($\beta=(0.9,0.95)$, weight decay $0.1$), lr
$6\!\times\!10^{-4}$ decaying by cosine to $6\!\times\!10^{-5}$ after $500$ warmup iterations, batch
$24\times30$ accumulated steps of $1024$ tokens ($737$k tokens per iteration), bf16, and seed $1337$. Validation perplexity is reported at a fixed budget of $30$k iterations ($\approx\!22$B tokens, generally following the Chinchilla scaling law~\citep{chinchilla}), averaged over $200$ batches. Both EUGen and RaLiMP use degree 2 polynomials. Both SNNK and EUGen uses 128 random features, while RaLiMP uses 128 random features split between the polynomial block with 120 features and rational block with $n_{\mathrm{freq}}=2$ frequencies and $m=4$ projections each with trainable $A,B,p,q$ of Eq.~\ref{eq:genatom} (i.e. $8$ random features), and $\tau=3$.

\subsection{PTv3 Experiments}
We evaluate RaLiMP on three indoor and outdoor 3D semantic segmentation benchmarks:
\begin{itemize}[leftmargin=2.5em]

    \item \textbf{ScanNet v2}~\citep{dai2017scannet}: A large-scale indoor RGB-D dataset with 20 semantic categories. We use the standard split of 1,201 training and 312 validation scans.

    \item \textbf{ScanNet200}~\citep{rozenberszki2022language}: An extension of ScanNet v2 with 200 fine-grained categories, designed to evaluate robustness under long-tailed class distributions.

    \item \textbf{nuScenes}~\citep{caesar2020nuscenes}: A large-scale outdoor autonomous driving dataset with 1,000 urban scenes. We use it to evaluate RaLiMP on sparse LiDAR point clouds across 16 classes.
\end{itemize}
We use the official PTv3 implementation provided by Pointcept~\citep{pointcept2023} and replace selected MLP modules with RaLiMP.
The backbone uses serialized softmax attention with FlashAttention, and the patch sizes of both the encoder and decoder are set to $1024$.
The serialization patterns Z, TZ, H, and TH denote Z-order, transposed Z-order, Hilbert order, and transposed Hilbert order, respectively.
We compare EUGen, SNNK, and RaLiMP using the same PTv3 backbone and MLP replacement configurations.
For each $k \in \{5,10,22\}$, all three methods replace the MLP modules in the same Transformer blocks, ensuring that both the number and locations of replaced modules are matched across methods.
The backbone and RaLiMP configurations, together with the MLP replacement allocations, are summarized in Tab.~\ref{tab:ptv3-ralimp-config}.

We report mean intersection over union (mIoU), mean class accuracy (mAcc), and overall accuracy (allAcc).
mIoU and mAcc average the intersection over union and classification accuracy across semantic classes, respectively, while allAcc measures the fraction of correctly classified points.
The results for all three metrics are reported in Tab.~\ref{tab:mlp_replacement}.

\begin{table*}[h]
\caption{Semantic segmentation performance on indoor (ScanNet and ScanNet200)
and outdoor (nuScenes) benchmarks.
$k$ denotes the number of replaced MLP modules in PTv3.
}
\label{tab:mlp_replacement}
\centering
    \begin{adjustbox}{width=\textwidth}
        \begin{tabular}{ll|ccc|ccc|ccc}
        \toprule
        \multirow{2}{*}{$k$} & \multirow{2}{*}{Methods}
        & \multicolumn{3}{c|}{ScanNet Val}
        & \multicolumn{3}{c|}{ScanNet200 Val}
        & \multicolumn{3}{c}{nuScenes Val} \\
        \cmidrule(lr){3-5}\cmidrule(lr){6-8}
        \cmidrule(lr){9-11}
        & & mIoU & mAcc & allAcc
          & mIoU & mAcc & allAcc
          & mIoU & mAcc & allAcc \\
        \midrule
        \multirow{3}{*}{5}
        & EUGen         & 75.65 & 83.98 & 91.54 & 34.62 & 45.47 & 83.37 & 79.14 & 86.36 & 94.56 \\
        & SNNK          & 76.84 & 84.33 & 91.89 & 35.65 & 46.37 & 83.78 & 80.76 & 87.38 & 94.79 \\
        & RaLiMP (ours) & 78.08 & 85.43 & 92.37 & 36.27 & 46.54 & 83.65 & 80.81 & 87.20 & 94.77 \\
        \midrule
        \multirow{3}{*}{10}
        & EUGen         & 74.50 & 82.77 & 90.87 & 33.08 & 43.65 & 82.76 & 76.34 & 85.56 & 94.38 \\
        & SNNK          & 76.63 & 84.19 & 91.69 & 34.69 & 44.33 & 83.40 & 79.92 & 86.82 & 94.68 \\
        & RaLiMP (ours) & 77.54 & 85.24 & 92.09 & 35.50 & 45.36 & 83.24 & 80.65 & 87.19 & 94.75 \\
        \midrule
        \multirow{3}{*}{22}
        & EUGen         & 72.80 & 81.49 & 90.22 & 30.56 & 41.96 & 81.16 & 74.84 & 84.11 & 94.21 \\
        & SNNK          & 76.35 & 84.15 & 91.66 & 34.01 & 43.99 & 83.26 & 79.79 & 87.40 & 94.64 \\
        & RaLiMP (ours) & 77.33 & 84.89 & 91.93 & 34.90 & 45.62 & 83.41 & 80.47 & 87.13 & 94.65 \\
        \midrule
        \multicolumn{2}{l|}{Baseline (dense, full FT)}
        & 77.57 & 84.97 & 92.04 & 35.26 & 46.05 & 83.38 & 80.32 & 87.25 & 94.63 \\
        \bottomrule
        \end{tabular}
    \end{adjustbox}
\end{table*}
\begin{table*}[h]
\centering
\small
\caption{PTv3 backbone and RaLiMP configurations for semantic segmentation.
    $C$ denotes the channel dimension of the corresponding block.
    $\tau_0$ and $\lambda_{\mathrm{gate},0}$ denote the initial values of the learnable rational input scale and gate, respectively.
    $k$ denotes the number of replaced MLP modules, with replacement starting from the top MLP modules within each selected stage.
    }
\label{tab:ptv3-ralimp-config}
    \begin{tabular}{lc}
    \toprule
    \textbf{Configuration} & \textbf{Value} \\
    \midrule
    \multicolumn{2}{c}{\textit{\textbf{PTv3 backbone}}} \\
    \midrule
    Attention & Serialized softmax attention (FlashAttention) \\
    Serialization patterns & Z + TZ + H + TH \\
    Patch interaction & Shift Order + Shuffle Order \\
    Positional encoding & xCPE \\
    Encoder depths  & $[2,2,2,6,2]$ \\
    Encoder channels & $[32,64,128,256,512]$ \\
    Encoder heads & $[2,4,8,16,32]$ \\
    Encoder patch sizes & $[1024,1024,1024,1024,1024]$ \\
    Decoder depths & $[2,2,2,2]$ \\
    Decoder channels & $[64,64,128,256]$ \\
    Decoder heads & $[4,4,8,16]$ \\
    Decoder patch sizes & $[1024,1024,1024,1024]$ \\
    Downsampling strides & $[\times2,\times2,\times2,\times2]$ \\
    MLP ratio & 4 \\
    QKV bias & True \\
    Pre-normalization & True \\
    Drop path rate & 0.3 \\
    Backbone output channels & 64 \\
    \midrule
    \multicolumn{2}{c}{\textit{\textbf{RaLiMP configuration}}} \\
    \midrule
    Random feature dimension & 8 \\
    Random projections (base and rational blocks) & Frozen Gaussian \\
    Rational atom type & General shifted \\
    Number of rational atoms $J$ & 2 \\
    Rational quadrature frequencies $n_{\mathrm{freq}}$
    ($C<256$ / $C\geq256$) & $2/3$ \\
    Random projections per frequency $m$
    ($C<256$ / $C\geq256$) & $4/8$ \\
    Rational input normalization & Disabled \\
    Readout weights and rational coefficients & Learnable \\
    Rational input scale initialization $\tau_0$
    & 3.0 \\
    Gate initialization per block $\lambda_{\mathrm{gate},0}$
    & $\{0.5,1.0,2.0\}$ \\
    \midrule
    \multicolumn{2}{c}{\textit{\textbf{MLP replacement configurations}}} \\
    \midrule
    Allocation for $k=5$
    & Enc. $[0,0,0,3,1]$, Dec. $[0,0,0,1]$ \\
    Allocation for $k=10$
    & Enc. $[0,0,0,6,2]$, Dec. $[0,0,0,2]$ \\
    Allocation for $k=22$
    & Enc. $[2,2,2,6,2]$, Dec. $[2,2,2,2]$ \\
    \bottomrule
    \end{tabular}
\end{table*}
\clearpage

\subsection{Coordinate Networks}
\label{sec:exp_inr_exp}
Next we discuss the experimental details for NeRF, SIREN and PINN networks. 

\subsubsection{SIREN} 
Image fitting on Cameraman ($256 \times {256}$) with $3$ hidden layers of width $256$ and
$\omega_0=30$, trained by Adam on the full-image MSE for $500$ steps, seed $0$, each arm's learning
rate tuned over $10^{-4}$ to $10^{-2}$; we report $\mathrm{PSNR}=10\log_{10}(1/\mathrm{MSE})$. The
first sine layer is kept for all the methods and all of hidden layers are replaced by linear variants. A periodic target uses
$\cos/\sin$ pair at $n_{freq} = 1,\ \tau = 1$. Ours uses $128$ projections, hence $256$ features $[\cos,\sin]$ and $99$k deployed parameters against $198$k for SIREN. Both SNNK and EUGen use $256$ random features to match the same inference parameters as RaLiMP.
\begin{figure}[h]
    \centering
    \includegraphics[width=0.95\linewidth]{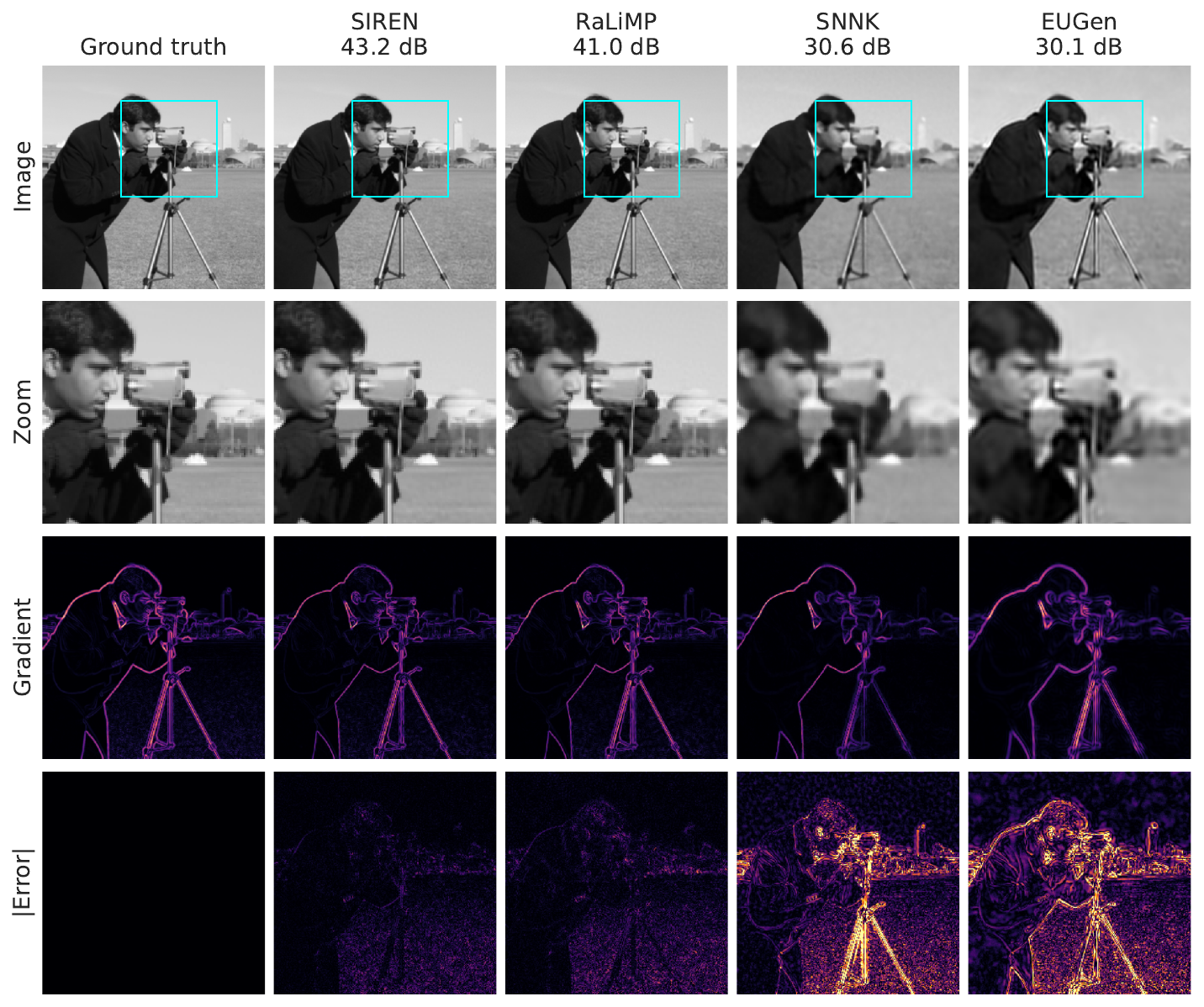}
    \caption{\small{\textbf{SIREN Networks.} RaLiMP matches the dense baseline while replacing all layers and outperform all variants. }}
    \label{fig:siren}
\end{figure}

\subsubsection{NeRF} 
\label{sec:exp_nerf_app}
On the $8$ synthetic Blender scenes at half resolution we follow the EUGen protocol in replacing the feed-forward layers with our layers including : depth $8$, width $256$, positional
encoding of order $10$ (and $4$ for view directions), $1024$ rays per step, $64$ coarse and $128$ fine
samples, Adam decaying by a factor $10$ every $500$k steps, for $200$k iterations, per-scene learning rate $5\!\times\!10^{-4}$ for chair, drums,
    lego, materials and mic and $10^{-4}$ for ficus, hotdog and ship. We report the mean PSNR over the test views. All the linear methods use matched $256$ random features. Moreover RaLiMP uses $n_{freq} =  1, \tau=1$. 

\subsubsection{Physics-Informed Network}
\label{sec:exp_pinn_app}
The viscous Burgers equation $u_t+uu_x=\nu u_{xx}$ with $\nu=0.01/\pi$ on $x\in[-1,1]$, $t\in[0,1]$,
$u(0,x)=-\sin(\pi x)$ and $u(t,\pm1)=0$. The network is $4$ hidden layers of width $128$ with $\tanh$
activations, trained by Adam (lr $10^{-3}$, cosine decay) for $20$k steps on $10$k collocation points and
$200$ initial/boundary points with an unweighted sum of the residual, initial and boundary losses; seed
$0$. We report relative $L_2$ error against a finite-difference reference on a $101\times256$ grid. We replace all hidden layers by our variants and the
replaced layers are constrained to fewer deployed parameters than the $\tanh$ network ($66.6$k): ours
$38.5$k ($K=3$, $8$ polynomial projections, $n_{freq} = 3$ nodes, $4$ projections per node, $\tau=3$, $s_{max} = 4$), EUGen
$38.5$k ($12$ projections per degree) and SNNK $38.4$k ($36$ random features).
\begin{figure}[h]
    \centering
    \includegraphics[width=0.95\linewidth]{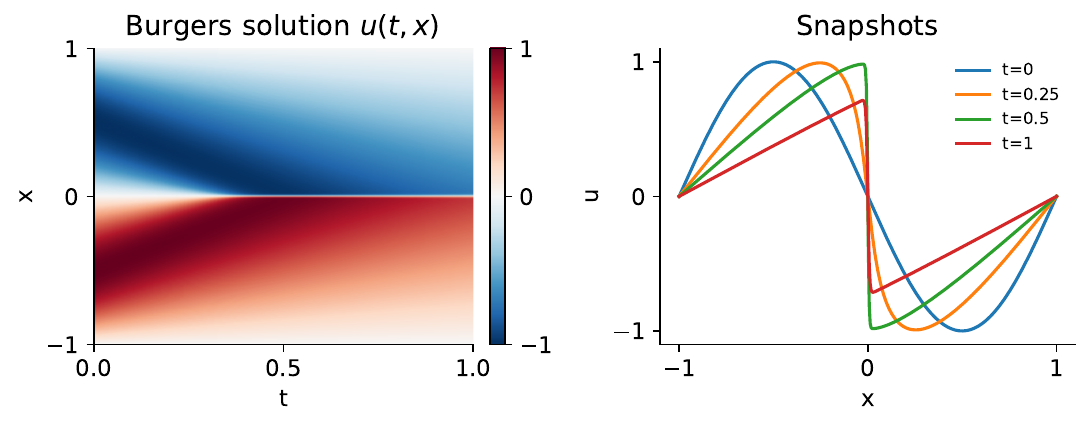}
    \includegraphics[width=.95\linewidth]{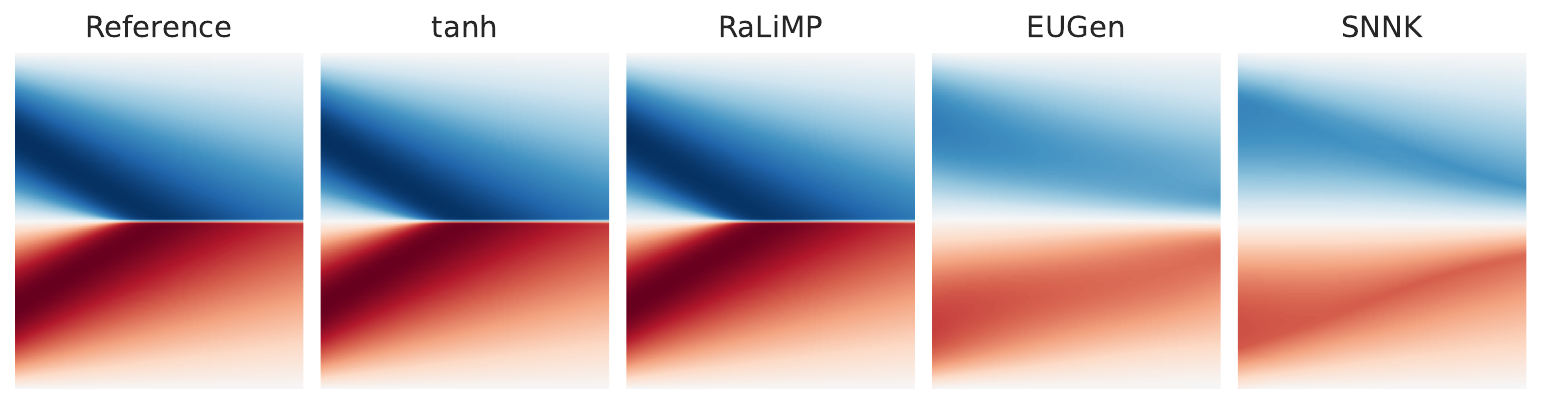}
    \caption{\textbf{Top Row : Reference solution of the viscous Burgers' equation.} \emph{Left:} the ground-truth field
    $u(t,x)$ over $(t,x)\in[0,1]\times[-1,1]$. \emph{Right:} snapshots at $t\in\{0,0.25,0.5,1\}$, showing the
    initial sinusoidal profile steepening into a sharp shock at $x{=}0$. \textbf{Bottom Row : Predicted Burgers' fields.} Solution $u(t,x)$ recovered by each model against the reference
    (leftmost). The dense tanh network and RaLiMP sharply reproduce the shock at $x{=}0$, whereas EUGen and SNNK
    smear it and lose the discontinuity, at matched inference budget. Tanh refer to the baseline PiNN model with tanh activation.}
    \label{fig:burgers}
\end{figure}

\subsection{Qwen3-8B experimental details}
\label{app:qwen}
In this section we describe in detail the Qwen experiments.

\subsubsection{Datasets}
First we list all the datasets that are used for this experiment.
\paragraph{Training.}
\begin{itemize}
\item \textbf{OpenThoughts3-1.2M} \citep{guha2025openthoughtsdatarecipesreasoning} : 1.2M long chain-of-thought reasoning traces; used for the supervised fine-tuning of the replacement layers (Section~\ref{app:qwen-sft}).
\item \textbf{DeepScaleR} \citep{DEEPSCALER} : 40{,}027 competition-mathematics problems with verifiable answers; the prompt pool from which the compressed model generates its own solutions for the Stage-2 on-policy rollouts (Section~\ref{app:qwen-rollouts}). After sampling and filtering to correct, self-terminating solutions, the retained Stage-3 distillation set contains $87{,}000$ (problem, solution) pairs for the $k = 8$ ($20k$ for the $k = 4$ model). The data consists of math problems with short prompts (median $63$ tokens, $10$th/$90$th percentiles $30$/$129$) and long chain-of-thought solutions (median $6{,}538$ tokens, $10$th/$90$th percentiles $2{,}271$/$13{,}042$), of which $0.9\%$ reach the $16{,}384$-token cap. We additionally pool a \emph{science-focused} data with the mathematics pool above. The science prompts come from
\textbf{WebInstruct-verified} \citep{yue2024mammoth2}, restricted to the questions tagged physics, chemistry,
or biology whose answers are numeric or short symbolic expressions (answer types \emph{Float}, \emph{Integer},
or \emph{Expression}); in practice these are chiefly physics and chemistry. This yields $2{,}287$ traces over $2{,}256$ distinct problems,
with similarly short prompts (median $58$ tokens) but somewhat shorter solutions (median $4{,}098$ tokens,
$10$th/$90$th percentiles $1{,}522$/$7{,}478$). 
\end{itemize}

\paragraph{Evaluation.} ($n$ denotes the number of problems.)
\begin{itemize}
\item \textbf{AMC23} ($n{=}83$) \citep{AMC23} : problems from the 2023 American Mathematics
      Competitions (AMC 10/12); avg@16 and pass@16.\footnote{We use the $83$-problem
      \texttt{AI-MO/aimo-validation-amc} release.}
\item \textbf{AIME24} ($n{=}30$) \citep{aime_2024} : the 2024 American Invitational Mathematics Examination; avg@16 and pass@16.
\item \textbf{AIME25} ($n{=}30$) \citep{aime_2025} : the 2025 American Invitational Mathematics Examination; avg@16 and pass@16.
\item \textbf{GSM8K} ($n{=}1{,}319$ test) \citep{cobbe2021gsm8k} : grade-school arithmetic word problems.
\item \textbf{MATH500} ($n{=}500$) \citep{hendrycks2021math} : a 500-problem subset of the MATH competition-mathematics benchmark (algebra, geometry, number theory, precalculus).
\item \textbf{Minerva Math} ($n{=}272$) \citep{MINERVA} : undergraduate-level quantitative problems in physics, chemistry, and related sciences.
\item \textbf{OlympiadBench} ($n{=}674$) \citep{OLYMPIADBENCH} : olympiad-level mathematics and
      physics problems.
\item \textbf{GPQA-Diamond} ($n{=}198$) \citep{GPQA} : graduate-level,  multiple-choice questions in biology, physics, and chemistry, avg@16 and pass@16.
\item \textbf{MMLU} ($n{=}14{,}042$ test) \citep{mmlu} : multiple-choice questions across 57 subjects spanning STEM, humanities, and social sciences.
\end{itemize}

We compress Qwen3-8B by replacing the feed-forward network in the top $k$ transformer blocks with our
random-feature layer, and then recover the lost accuracy in two training stages: (i) a supervised
fine-tuning stage that fits the replacement layers on a large reasoning corpus, and (ii) an on-policy
distillation stage in which the compressed model learns to reproduce the original model's next-token
distribution on the compressed model's own generated solutions. We run the identical pipeline at two
compression levels, $k=8$ and $k=4$; the only differences between them are which blocks are replaced  (Table~\ref{tab:qwen-arch}). Everything else, i.e. optimizer, schedule, and the two-stage procedure is shared. The k = 4 model requires less data in both stages to recover its performance. 

Qwen3-8B has 36 transformer blocks. For $k=8$ we replace the feed-forward network in blocks 28--35 (the
last eight); for $k=4$ we replace blocks 32--35 (the last four). In both cases the attention layers, the
embedding, and the output head are left unchanged, and the original feed-forward weights of the replaced
blocks are discarded. During all training the original (non-replaced) weights are kept frozen and adapted
only through low-rank adapters (LoRA)~\citep{lora}; the replacement layers themselves are trained fully.

The replacement layer is exactly the construction of Section~\ref{sec:hadamard}, applied to
the gated (SwiGLU) feed-forward network $\mathrm{SiLU}(\mathbf{Wx})\odot(\mathbf{Ux})$: the SiLU gate is expressed through the
random-feature map (linear and squared-dot terms plus the rational term), and the gated product is collapsed to a rank-$r$ factorization. Table~\ref{tab:qwen-arch} lists all the settings used in our experiments.

The feature map has width $672$: a linear group of $192$; a squared-dot group of $192$ (two projections
$\mathbf{G},\mathbf{H}$ combined by an element-wise product, which does not double the width); and a rational group of
$2\times 3\times 48 = 288$ ($3$ retained quadrature frequencies, $48$ features each, a cosine and sine tower
per frequency).

We use a single pole $1/(1+t^2)$, with $\tau=1$, and
normalize the pre-activation onto the unit sphere before the rational projection, and as a further safeguard the exponent of the rational random features (the argument of the $e^{(\cdot)}$ factor in Eq.~\eqref{eq:ratfeat}) is clamped from above at $10$. The gated product is collapsed to
Hadamard rank $r=512$ for $k=8$; for $k=4$ we report both $r=256$ and $r=512$. The SiLU coefficients are
held at their fixed values, and all random projections together with the rational weight and node
coefficients are trained.
\begin{table}[h]
\centering
\caption{Architecture of the replaced feed-forward layers. Both settings replace the top $k$ blocks of
Qwen3-8B and share all other hyperparameters.}
\label{tab:qwen-arch}
\begin{tabular}{lcc}
\toprule
 & $k=8$ & $k=4$ \\
\midrule
Blocks replaced                                  & 28--35 & 32--35 \\
Linear (random-projection) group dimension       & 192    & 192 \\
Quadratic (EUGen squared-dot) group dimension    & 192    & 192 \\
Rational quadrature frequencies                  & 6      & 6 \\
Rational random features per frequency           & 48     & 48 \\
Rational group width ($2\times 3\times 48$)      & 288    & 288 \\
Total feature width                              & 672    & 672 \\
Linear coefficient (fixed)                       & 0.5    & 0.5 \\
Quadratic coefficient (fixed)                    & 0.25   & 0.25 \\
Rational weight (learnable, init.)               & 0.0    & 0.0 \\
$\tau$ & 1 & 1 \\
Cap on rational-feature exponent (Eq.~\ref{eq:ratfeat})  & 10.0   & 10.0 \\
Hadamard rank $r$                                & 512    &  512 \\
\bottomrule
\end{tabular}
\end{table}

\subsubsection{Stage 1: supervised fine-tuning of the replacement layers}
\label{app:qwen-sft}

We first fit the freshly-initialized replacement layers to the base model's behavior by supervised
fine-tuning on OpenThoughts3-1.2M~\citep{guha2025openthoughtsdatarecipesreasoning}. We
stream the corpus shard by shard and tokenize on the fly, so that so no example repeats within a pass, with a fixed seed for the shard order and sampling.
 Each training example is a (problem, solution) pair; the loss is the standard next-token cross-entropy on the solution tokens. We choose the best checkpoint based on validation accuracy on a held out portion of OpenThoughts to mimic the distribution of the downstream tasks. Note that fine-tuning base Qwen model on this dataset using LoRA does not improve the base model. 

\paragraph{Hyperparameters.}
\begin{itemize}
\item Optimizer: AdamW, learning rate $1\times10^{-4}$, cosine decay to $10\%$ of the peak, 100 warmup steps.
\item Weight decay: 0.
\item Sequence length: 16{,}384 tokens; examples longer than this are dropped so that only complete
      (end-of-sequence terminated) traces are trained on.
\item Batch size: 1 sequence per step, with gradient accumulation over 16 micro-batches (effective batch
      size 16 sequences).
\item Number of optimization steps: 64{,}000 for k = 8 and 40{,}000 for k = 4.
\item Frozen backbone with low-rank adapters on all linear layers (attention and feed-forward projections):
      LoRA rank 32, LoRA scaling factor 64. The replacement layers are trained fully.
\item Attention implementation: scaled dot-product attention. Activation checkpointing enabled.
\item Precision: bfloat16.
\end{itemize}
This stage produces the compressed model we refer to as the \emph{pre-distillation} model in the main text.

\subsubsection{Stage 2: sampling the compressed model's own solutions}
\label{app:qwen-rollouts}

On-policy distillation requires solutions drawn from the \emph{compressed} model itself so
that the teacher is later scored on states the compressed model actually visits. We therefore generate a
pool of solutions from the Stage-1 model on Competition-mathematics problems from DeepScaleR. Each problem is presented with the instruction ``Please reason step
by step, and put your final answer within \verb|\boxed{}|.'' For each problem we draw 8 independent samples with temperature 0.8, nucleus sampling $p=0.95$, top-$k$ = 20, and a maximum generation length of 16{,}384 tokens.

\paragraph{Filtering.} We keep only solutions that (i) terminate on their own (emit the end-of-sequence
token before the length cap) and (ii) are correct, verified by extracting the final \verb|\boxed{}| answer
and checking it against the ground-truth answer. The kept (problem, solution) pairs form the on-policy
distillation set. This pool contains roughly 87{,}000 solutions from math datasets and about 2700 from science. Notably for the k = 4 model, we use about 18{,}000 traces from the math dataset as that model was already close to base after fine-tuning. 

\subsubsection{Stage 3: on-policy distillation}
\label{app:qwen-distill}

Starting from the Stage-1 compressed model, we distill the base Qwen3-8B into the compressed
model on the solutions collected in Stage 2. Both the compressed (``student'') model and the frozen
original (``teacher'') model process the same solution; the training loss is the forward KL divergence
$\mathrm{KL}\!\left(p_\text{teacher} \,\|\, p_\text{student}\right)$ between the teacher's and student's
raw next-token distributions (temperature = 1), averaged over the solution tokens. It is on-policy because these solutions are the student's own Stage-2 generations, so the teacher is scored on the states the student actually visits.

\paragraph{Hyperparameters.}
\begin{itemize}
\item Initialization: the Stage-1 compressed model (Section~\ref{app:qwen-sft}).
\item Distillation data: the on-policy solution pool from Stage~2 (Section~\ref{app:qwen-rollouts}).
\item Optimizer: AdamW, learning rate $2\times10^{-5}$, cosine decay to $10\%$ of the peak, 10 warmup steps.
\item Weight decay: 0.
\item Forward KL divergence as the only loss term (weight 1.0).
\item The forward KL is computed over the completion tokens in chunks of 2{,}048 tokens to bound memory,
      then summed over chunks and averaged over tokens.
\item Sequence length: 12{,}288 tokens.
\item Batch size: 1 sequence per step, with gradient accumulation over 24 micro-batches (effective batch
      size 24 sequences).
\item Frozen backbone with low-rank adapters on all linear layers: LoRA rank 32, LoRA scaling factor 64.
      The replacement layers are trained fully.
\item Attention implementation: scaled dot-product attention. Activation checkpointing enabled.
\item Precision: bfloat16. The frozen teacher is held on a separate device from the student.
\item Number of optimization steps: the distillation set is trained for two epochs. We select the
      final checkpoint by validation accuracy on a small 150 sample in Numina-Math train~\citep{li2024numinamath}.
\end{itemize}

The $k=4$ model is produced by the identical Stage-3 procedure, initialized from the $k=4$ Stage-1 model,
with the only architectural differences being those in Table~\ref{tab:qwen-arch}.

\subsubsection{Evaluation protocol}
\label{app:qwen-eval}

All models are evaluated with temperature 0.6, nucleus sampling $p=0.95$, top-$k$ 20, and a maximum
generation length of 16{,}384 tokens, in the model's default reasoning (``thinking'') mode.

Since AMC, AIME and GPQA are small datasets, we therefore draw
16 samples per problem and report the mean per-problem accuracy (the average over the 16 samples), which we
denote avg@16. We also report pass@16 (a problem counts as solved if any of the 16 samples is correct) as a capability ceiling. For the larger benchmarks (GSM8K, MATH500, Minerva, OlympiadBench) we report a single sample per problem at the same decoding settings.

Mathematical answers are extracted from the last \verb|\boxed{}| in the generation and scored against the ground truth by \texttt{math\_verify} for symbolic/numeric equivalence, with a boxed-string match and a numeric-tolerance check as fallbacks when parsing fails; the same check gates the Stage-2 filter. The single-sample benchmarks decode with a
fixed seed, while the $16$ avg@16 draws are independent. The multiple-choice answers are extracted as the boxed letter. MMLU is evaluated via log probabilities via~\citep{eval-harness}.

\subsubsection{Effect of the distillation stage}
\label{app:qwen-recovery}
Table~\ref{tab:qwen-recovery} isolates Stage 3 (on-policy distillation) at $k=8$, with benchmarks ordered by
base generation length $L$. The Stage-1 supervised model loses accuracy that grows with $L$. The
length-compounding pattern of Eq.~\ref{eq:delta_law} and distillation recovers most of it, with the
largest gains on the longest-generation benchmarks. After it, GSM8K and MATH500 exceed the
uncompressed model and Minerva and OlympiadBench are within a point.

\begin{table}[h]
\centering
\caption{Effect of the on-policy distillation stage (accuracy, \%), benchmarks ordered by base median
generation length $L$ (tokens). For each compression level, ``pre'' is the Stage-1 supervised model and
``\,$+$dist'' is the same model after Stage 3; ``Base'' is uncompressed Qwen3-8B. AMC/AIME are avg@16, the
rest single-sample; maximum generation length 16{,}384 tokens.}
\label{tab:qwen-recovery}
\begin{tabular}{lc cc cc c}
\toprule
& & \multicolumn{2}{c}{$k{=}4$} & \multicolumn{2}{c}{$k{=}8$} & \\
\cmidrule(lr){3-4}\cmidrule(lr){5-6}
Benchmark & $L$ & pre & $+$dist & pre & $+$dist & Base \\
\midrule
GSM8K         & 1{,}685  & 94.5 & 95.5 & 92.3 & 95.2 & 93.5 \\
MATH500       & 3{,}528  & 87.4 & 92.8 & 85.2 & 94.5 & 90.0 \\
Minerva       & 4{,}892  & 35.3 & 45.3 & 28.7 & 43.4 & 45.6 \\
AMC23         & 7{,}820  & 75.2 & 82.2 & 71.8 & 78.8 & 83.0 \\
OlympiadBench & 8{,}244  & 65.4 & 70.4 & 63.8 & 69.0 & 69.9 \\
AIME24        & 12{,}298 & 51.0 & 61.0 & 42.3 & 58.1 & 67.5 \\
GPQA          & 12{,}803 & 41.4 & 52.4 & 28.8 & 51.1 & 56.1 \\
AIME25        & 15{,}942 & 43.3 & 52.3 & 34.2 & 48.6 & 57.7 \\
\bottomrule
\end{tabular}
\end{table}

\subsubsection{Length-compounding of the kernel approximation error}
\label{sec:delta_law}
A finite-feature kernel block incurs a bounded per-token approximation error that perturbs the output logits once per decoded token, so its effect on end-to-end accuracy compounds with the generation length $L$. Modeling this as a constant per-token derailment rate $\delta$ gives a multiplicative law
\begin{equation}
\mathrm{acc}\;\approx\;\mathrm{acc}_{\mathrm{base}}\exp(-\delta L),
\qquad
\text{i.e.}\quad
\text{deficit}\;=\;1-\frac{\mathrm{acc}}{\mathrm{acc}_{\mathrm{base}}}\;\approx\;1-e^{-\delta L}.
\label{eq:delta_law}
\end{equation}
Figure~\ref{fig:delta_law} evaluates this for the pre-distillation model of Table~\ref{tab:qwen-recovery}. The deficit is monotone in $L$, from $3\%$ at $L\!\approx\!1.5$k (GSM8K) to $41\%$ at $L\!\approx\!16$k (AIME25). The implied rate $\delta$ is stable at $\sim\!2$--$4\times10^{-5}$ per token wherever the base model is unsaturated where the $\delta$ is dominated by ceiling effects. Two orderings are robust: $\delta$ grows with the swapped-layer count ($\delta_{k=8}>\delta_{k=4}$) and shrinks with feature rank at fixed $k$ ($r{=}512<r{=}256$), so $\delta$ tracks the kernel's approximation error rather than layer count alone. Long-chain benchmarks are thus \emph{predictably} harder for a fixed kernel budget, and the distillation stage above preferentially removes this length-dependent error.

\begin{figure}[h]
\centering
\includegraphics[width=0.62\linewidth]{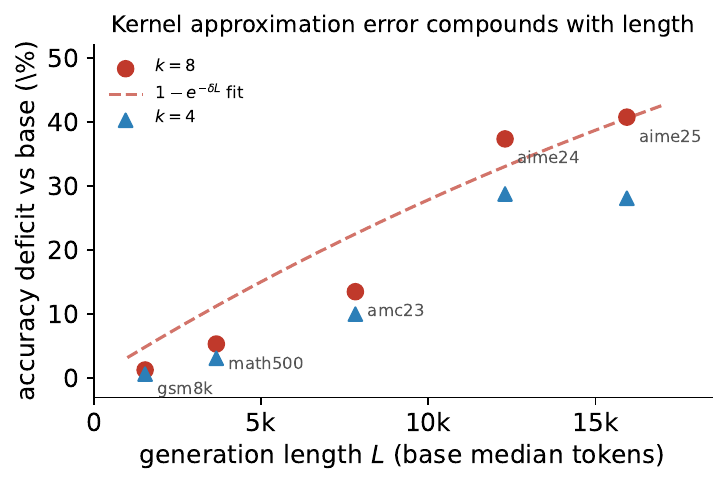}
\caption{\textbf{Accuracy deficit vs.\ generation length $L$ at matched decoding.} Points are the benchmarks; the dashed curve is the one-parameter fit $1-e^{-\delta L}$ ($\delta{=}3.2\times10^{-5}$, fit on the unsaturated benchmarks). Swapping more layers ($k{=}8$ vs.\ $k{=}4$) raises the deficit at fixed $L$.}
\label{fig:delta_law}
\end{figure}

\subsubsection{Effect of the Hadamard rank $r$}
\label{app:qwen-rank}
Table~\ref{tab:qwen-rank} compares the two $k{=}4$ variants, $r{=}256$ and $r{=}512$. Raising the rank
lowers the kernel's factorization error and improves the competition-math benchmarks. The
main-body table reports the $r{=}512$ variant.

\begin{table}[h]
\centering
\caption{$k{=}4$ at Hadamard rank $r{=}256$ vs.\ $r{=}512$ (accuracy after stage 1 training, \%; AMC/AIME are avg@16).}
\label{tab:qwen-rank}
\begin{tabular}{lccccccc}
\toprule
$r$ & AMC23 & AIME24 & AIME25 & GSM8K & MATH500 & Minerva & GPQA \\
\midrule
$256$ & 74.7 & 48.1 & 41.5 & 92.9 & 87.2 & 37.9 & 44.9 \\
$512$ & 75.2 & 51.0 & 43.3 & 94.5 & 87.4 & 35.3 & 41.4 \\
\bottomrule
\end{tabular}
\end{table}

\subsubsection{Memory footprint and throughput}
\label{app:qwen-efficiency}

Because the replacement layer folds its weight side into a single $[d\times r]$ matrix at inference, each
replaced feed-forward network is far smaller than the original gated network, and the saving grows linearly
with the number of blocks replaced. Table~\ref{tab:qwen-efficiency} reports the parameter reduction relative
to the full 8.19B-parameter model (the feed-forward networks account for about two-thirds of the model's
parameters). At $k=8$ the replacement removes 1.17B parameters (14\% of the whole model, 21\% of the
feed-forward parameters); at $k=4$ it removes 0.59B (7\% of the model).

\begin{table}[h]
\centering
\caption{Efficiency of replacing the top $k$ feed-forward networks of Qwen3-8B (36 blocks, 8.19B
parameters). Parameter reduction is relative to the full model / to the feed-forward parameters (which are
$\sim$two-thirds of the model). Throughput is compiled decoding (bfloat16; 4096-token prompt, 1024 generated
tokens, 16 concurrent sequences, single GPU).}
\label{tab:qwen-efficiency}
\begin{tabular}{lccccc}
\toprule
Model & \makecell{Params\\removed} & \makecell{\% of\\model} & \makecell{\% of\\FFN} & \makecell{Throughput\\(tok/s)} & \makecell{Speedup\\vs.\ base} \\
\midrule
Uncompressed & ---     & ---     & ---     & 763.3 & --- \\
$k=4$        & 0.59B   & 7.2\%   & 10.8\%  & 785.6 & $+2.9\%$ \\
$k=8$        & 1.17B   & 14.3\%  & 21.5\%  & 809.6 & $+6.1\%$ \\
\bottomrule
\end{tabular}
\end{table}

\section{Detailed Results}
In this section we present detailed results for NeRF, ViT, PTv3 and Causal Language Modeling.
\begin{table}[h]
\centering
\caption{\small{Synthetic-NeRF held-out test PSNR (dB). RaLiMP beats all other methods, at \emph{matched deployed parameters} and even beating the dense NeRF baseline.}}
\renewcommand{\arraystretch}{1.2}
\setlength{\tabcolsep}{6pt}
\resizebox{0.8\textwidth}{!}{%
\begin{tabular}{lccccccccc}
\toprule
Method & Chair & Drums & Ficus & Hotdog & Lego & Mater. & Mic & Ship & Avg \\
\midrule
NeRF      & 33.70 & 25.46 & 26.40 & 35.81 & 31.31 & 29.00 & 33.11 & 28.79 & 30.45 \\
EUGen    & 33.09 & 25.18 & 28.40 & 35.47 & 30.60 & 28.88 & 32.63 & 28.66 & 30.36 \\
SNNK & 30.66 & 24.76 & 28.21 & 35.72 & 30.00 & 28.64 & 30.44 & 28.41 & 29.61 \\
\midrule
RaLiMP (ours)  & 33.19 & 25.26 & 28.89 & 35.73 & 30.95 & 29.02 & 32.86 & 28.95 & 30.61 \\
\bottomrule
\label{table:nerf}
\end{tabular}
}
\vspace{-8mm}
\end{table}

\begin{table}[t]
\centering
\small
\caption{ViT results on CIFAR-100, ImageNet, and Places365 (Top-1 Accuracy). RaLiMPs beat all other methods at \emph{matched deployed inference budget}. Baseline is the dense ViT-B/16 full
fine-tune.}
\setlength{\tabcolsep}{8pt}
\begin{tabular}{llcccc}
\toprule
&& \multicolumn{4}{c}{$k$ (layers replaced)}\\
Dataset & Method & 4 & 7 & 9 & 11\\
\midrule
\multirow{4}{*}{CIFAR-100}
& Low Rank & 86.6 & 78.3 & 74.3 & 69.3\\
& SNNK & 87.3 & 84.0 & 80.3 & 75.9\\
& EUGen & 88.6 & 86.5 & 81.3 & 74.6\\
& \textbf{RaLiMP (ours)} & \textbf{89.8} & \textbf{88.1} & \textbf{84.6} & \textbf{80.4}\\
& Baseline & \multicolumn{4}{c}{92.0}\\
\midrule
\multirow{4}{*}{Places365}
& Low Rank & 54.1 & 52.8 & 49.7 & 45.9\\
& SNNK & 57.1 & 56.5 & 56.3 & 54.6\\
& EUGen & 56.9 & 56.9 & 56.1 & 54.2\\
& \textbf{RaLiMP (ours)} & \textbf{58.0} & \textbf{57.5} & \textbf{56.9} & \textbf{55.2}\\
& Baseline & \multicolumn{4}{c}{58.2}\\
\midrule
\multirow{4}{*}{ImageNet-1k}
& Low Rank & 78.6 & 72.4 & 71.0 & 62.8\\
& SNNK & 80.4 & 77.4 & 75.0 & 73.1\\
& EUGen & 79.1 & 77.2 & 75.8 & 72.8\\
& \textbf{RaLiMP (ours)} & \textbf{80.8} & \textbf{78.5} & \textbf{76.8} & \textbf{73.9}\\
& Baseline & \multicolumn{4}{c}{81.9}\\
\bottomrule
\end{tabular}
\label{table:vit-main-results}
\end{table}

\begin{table}[h]
\centering
\caption{VTAB-1k top-1 accuracy (\%), matched inference budget, per $k$ (layers replaced).
\textbf{R}aLiMP (ours), \textbf{S}NNK, \textbf{E}UGen, \textbf{L}owrank. \textbf{Bold} = best method at each
$(\text{task},k)$.}
\footnotesize
\renewcommand{\arraystretch}{1.15}
\setlength{\tabcolsep}{3pt}
\resizebox{0.9\textwidth}{!}{%
\begin{tabular}{l cccc c cccc c cccc c cccc}
\toprule
 & \multicolumn{4}{c}{$k{=}4$} & & \multicolumn{4}{c}{$k{=}7$} & & \multicolumn{4}{c}{$k{=}9$} & & \multicolumn{4}{c}{$k{=}11$} \\
\cmidrule(lr){2-5}\cmidrule(lr){7-10}\cmidrule(lr){12-15}\cmidrule(lr){17-20}
Task & R & S & E & L & & R & S & E & L & & R & S & E & L & & R & S & E & L \\
\midrule
\multicolumn{20}{l}{\emph{Natural}}\\
CIFAR-100     & \textbf{58.2} & 52.9 & 38.2 & 4.8 & & \textbf{57.5} & 43.3 & 26.7 & 3.3 & & \textbf{51.0} & 36.0 & 13.8 & 2.5 & & \textbf{40.3} & 29.1 & 3.6 & 2.8 \\
Caltech101    & \textbf{85.2} & 82.1 & 74.3 & 25.6 & & \textbf{84.4} & 79.0 & 63.6 & 4.2 & & \textbf{76.0} & 75.9 & 49.1 & 9.0 & & \textbf{69.4} & 69.2 & 7.0 & 3.8 \\
DTD           & \textbf{69.0} & 66.2 & 59.9 & 18.2 & & \textbf{67.9} & 63.5 & 53.4 & 8.5 & & \textbf{59.1} & 59.0 & 44.2 & 10.5 & & \textbf{50.0} & 49.7 & 22.8 & 9.7 \\
Flowers102    & \textbf{95.6} & 95.4 & 90.3 & 10.2 & & \textbf{94.3} & 94.0 & 76.9 & 6.0 & & \textbf{92.3} & 91.8 & 73.6 & 4.3 & & 85.2 & \textbf{86.1} & 31.2 & 6.7 \\
Pets          & \textbf{85.7} & 83.8 & 78.2 & 22.0 & & \textbf{85.0} & 79.7 & 65.3 & 10.7 & & \textbf{75.9} & 74.3 & 52.4 & 5.2 & & \textbf{66.7} & 64.0 & 2.8 & 8.7 \\
SVHN          & \textbf{89.3} & 88.0 & 87.3 & 66.7 & & \textbf{89.2} & 88.2 & 84.9 & 68.0 & & \textbf{89.7} & 89.5 & 31.4 & 22.1 & & \textbf{85.8} & 85.7 & 19.6 & 34.5 \\
Sun397        & \textbf{46.5} & 35.2 & 22.4 & 4.5 & & \textbf{44.8} & 28.9 & 15.3 & 3.0 & & \textbf{35.2} & 24.7 & 10.6 & 3.4 & & \textbf{30.1} & 20.0 & 3.8 & 2.5 \\
\midrule
\multicolumn{20}{l}{\emph{Specialized}}\\
PatchCamelyon & \textbf{88.7} & 87.5 & 87.1 & 83.3 & & \textbf{88.5} & 87.1 & 87.4 & 82.1 & & \textbf{88.4} & 88.2 & 82.8 & 84.7 & & \textbf{86.0} & 85.8 & 78.6 & 77.1 \\
EuroSAT       & \textbf{95.3} & 94.7 & 94.5 & 91.7 & & \textbf{95.7} & 95.4 & 94.3 & 71.7 & & \textbf{95.6} & 95.4 & 91.6 & 48.9 & & \textbf{93.4} & 93.4 & 82.1 & 70.6 \\
Resisc45      & \textbf{85.8} & 84.3 & 81.5 & 16.5 & & \textbf{85.2} & 84.8 & 78.2 & 8.2 & & \textbf{84.0} & 82.6 & 72.2 & 17.8 & & \textbf{75.6} & 75.3 & 37.3 & 19.6 \\
Retinopathy   & \textbf{78.2} & 76.7 & 76.0 & 76.9 & & \textbf{76.8} & 76.5 & 76.6 & 76.0 & & \textbf{77.6} & 76.7 & 73.8 & 76.3 & & \textbf{75.5} & 75.3 & 73.6 & 73.6 \\
\midrule
\multicolumn{20}{l}{\emph{Structured}}\\
Clevr-count   & \textbf{67.8} & 51.8 & 56.0 & 66.5 & & \textbf{70.2} & 56.9 & 63.9 & 68.5 & & \textbf{68.3} & 53.4 & 67.2 & 58.6 & & \textbf{57.8} & 53.2 & 53.9 & 48.7 \\
Clevr-dist    & 54.7 & 52.0 & 54.1 & \textbf{55.1} & & \textbf{58.0} & 52.9 & 56.4 & 54.6 & & \textbf{57.6} & 55.5 & 55.9 & 55.7 & & \textbf{56.5} & 55.4 & 56.2 & 54.2 \\
DMLab         & \textbf{51.5} & 49.4 & 51.5 & 40.5 & & \textbf{50.9} & 50.3 & 47.0 & 38.3 & & \textbf{49.4} & 47.3 & 39.4 & 38.9 & & \textbf{45.6} & 45.3 & 35.7 & 35.1 \\
dSprites-loc  & \textbf{44.1} & 41.4 & 43.9 & 37.1 & & \textbf{44.9} & 42.1 & 43.9 & 37.4 & & \textbf{43.5} & 40.0 & 42.1 & 8.8 & & \textbf{45.3} & 41.4 & 19.7 & 35.2 \\
dSprites-ori  & \textbf{29.8} & 29.7 & 29.4 & 22.9 & & \textbf{30.7} & 29.8 & 27.4 & 13.2 & & \textbf{28.8} & 28.2 & 13.6 & 7.6 & & \textbf{27.4} & 27.2 & 7.6 & 10.7 \\
KITTI-dist    & \textbf{52.5} & 51.6 & 51.0 & 41.6 & & \textbf{53.7} & 51.9 & 51.9 & 46.4 & & 54.8 & \textbf{54.9} & 43.3 & 49.0 & & \textbf{52.0} & 50.1 & 47.7 & 49.5 \\
sNORB-azim    & \textbf{26.3} & 24.4 & 25.4 & 10.1 & & \textbf{26.2} & 25.8 & 21.0 & 7.1 & & \textbf{27.9} & 26.9 & 5.6 & 6.5 & & \textbf{26.2} & 26.1 & 7.8 & 8.5 \\
sNORB-elev    & \textbf{41.0} & 40.6 & 37.2 & 37.0 & & \textbf{40.4} & 38.0 & 40.1 & 15.5 & & \textbf{41.5} & 38.5 & 40.7 & 30.1 & & \textbf{32.0} & 31.8 & 25.8 & 23.1 \\
\bottomrule
\end{tabular}
}
\label{table-vtab}
\end{table}

\clearpage
\begin{table*}[h]
\centering
    \begin{minipage}[t]{0.48\linewidth}
        \centering
        \caption{Semantic segmentation results on ScanNet v2. Val denotes validation mIoU. Bold indicates the best result in the table.}
        \label{tab:scannetv2}
        \small
        \setlength{\tabcolsep}{3pt}
        \renewcommand{\arraystretch}{0.95}
        \begin{tabularx}{\linewidth}{@{}Yc@{}}
            \toprule
            \textbf{Method} & \textbf{Val} \\
            \midrule
            \multicolumn{2}{@{}c@{}}{\textit{\textbf{MLP/CNN}}} \\
            \midrule
            PointNet++~\citep{qi2017pointnet++}        & 53.5 \\
            SparseConvNet~\citep{graham20183d}         & 69.3 \\
            PointConv~\citep{wu2019pointconv}          & 61.0 \\
            JointPointBased~\citep{chiang2019unified}  & 69.2 \\
            KPConv~\citep{thomas2019kpconv}            & 69.2 \\
            PointASNL~\citep{yan2020pointasnl}         & 63.5 \\
            PointNeXt~\citep{qian2022pointnext}        & 71.5 \\
            LargeKernel3D~\citep{chen2022scaling}      & 73.5 \\
            PointMetaBase~\citep{lin2023meta}          & 72.8 \\
            MinkowskiNet~\citep{choy20194d}            & 72.2 \\
            \midrule
            \multicolumn{2}{@{}c@{}}{\textit{\textbf{Transformer}}} \\
            \midrule
            Fast Point Transformer~\citep{park2022fast} & 72.1 \\
            Stratified Transformer~\citep{lai2022stratified} & 74.3 \\
            PointConvFormer~\citep{wu2023pointconvformer} & 74.5 \\
            OctFormer~\citep{wang2023octformer}        & 75.7 \\
            Swin3D~\citep{yang2025swin3d}              & 76.4 \\
            PTv1~\citep{zhao2021point}                 & 70.6 \\
            PTv2~\citep{wu2022point}                   & 75.4 \\
            PTv3~\citep{wu2024point}                   & 77.6 \\
            \midrule
            \multicolumn{2}{@{}c@{}}{\textit{\textbf{Mamba}}} \\
            \midrule
            Point Mamba~\citep{liu2024point}           & 74.6 \\
            Pamba~\citep{li2025pamba}                  & 77.6 \\
            \midrule
            \multicolumn{2}{c}{\textit{\textbf{PTv3 variants (our experiments)}}} \\
            \midrule
            MLP (dense baseline) & 77.6 \\
            RaLiMP ($k=5$)       & \textbf{78.1} \\
            RaLiMP ($k=10$)      & 77.6 \\
            RaLiMP ($k=22$)      & 77.3 \\
            \bottomrule
        \end{tabularx}
    \end{minipage}
    \hfill
    \begin{minipage}[t]{0.48\linewidth}
        \centering
        \caption{Semantic segmentation results on nuScenes. Val denotes validation mIoU. Bold indicates the best result in the table.}
        \label{tab:nuscenes_semseg}
        \small
        \setlength{\tabcolsep}{3pt}
        \renewcommand{\arraystretch}{0.95}
        \begin{tabularx}{\linewidth}{@{}Yc@{}}
            \toprule
            \textbf{Method} & \textbf{Val} \\
            \midrule
            \multicolumn{2}{@{}c@{}}{\textit{\textbf{MLP/CNN/SparseConv}}} \\
            \midrule
            SPVNAS~\citep{tang2020searching} & 77.4 \\
            Cylinder3D~\citep{zhu2021cylindrical} & 76.1 \\
            PVKD~\citep{hou2022point} & 76.0 \\
            2DPASS~\citep{yan20222dpass} & 79.4 \\
            MinkUNet~\citep{choy20194d} & 73.3 \\
            ~~+ M3Net~\citep{liu2024multi} & 79.0 \\
            ~~+ PPT~\citep{wu2024towards} & 78.6 \\
            OA-CNNs~\citep{peng2024oa} & 78.9 \\
            \midrule
            \multicolumn{2}{@{}c@{}}{\textit{\textbf{Transformer}}} \\
            \midrule
            SphereFormer~\citep{lai2023spherical} & 78.4 \\
            RangeFormer~\citep{kong2023rethinking} & 78.1 \\
            PTv2~\citep{wu2022point} & 80.2 \\
            PTv3~\citep{wu2024point} & 80.4 \\
            \midrule
            \multicolumn{2}{@{}c@{}}{\textit{\textbf{Mamba}}} \\
            \midrule
            Pamba~\citep{li2025pamba} & 80.4 \\
            \midrule
            \multicolumn{2}{c}{\textit{\textbf{PTv3 variants (our experiments)}}} \\
            \midrule
            MLP (dense baseline) & 80.4 \\
            RaLiMP ($k=5$)       & \textbf{80.8} \\
            RaLiMP ($k=10$)      & 80.7 \\
            RaLiMP ($k=22$)      & 80.5 \\
            \bottomrule
        \end{tabularx}
    \end{minipage}
\end{table*}
\begin{table}[h]
\centering
\caption{\small{GPT-2 Large (774M, OpenWebText) validation perplexity (lower is better) at a fixed 30k-iteration budget. RaLiMP improves on the EUGen and SNNK baselines at every $k$.}}
\renewcommand{\arraystretch}{1.25}
\setlength{\tabcolsep}{7pt}
\resizebox{0.7\textwidth}{!}{%
\begin{tabular}{lccccccc}
\toprule
 & \multicolumn{7}{c}{\textbf{FFN layers replaced ($k$)}} \\
\cmidrule(lr){2-8}
Method & 2 & 6 & 11 & 18 & 24 & 27 & 30 \\
\midrule
Low Rank   & 16.83 & 17.57 & 18.79 & 19.06 & 20.91 & 22.66 & 25.31 \\
EUGen  & 16.03 & 16.08 & 16.37 & 16.66 & 17.06 & 17.74 & 18.42 \\
SNNK  & 14.18 & 14.32 & 14.71 & 15.02 & 15.69 & 16.05 & 16.44 \\
RaLiMP (ours) & \textbf{14.10} & \textbf{14.22} & \textbf{14.58} & \textbf{14.90} & \textbf{15.26} & \textbf{15.42} & \textbf{15.69} \\
\midrule
Dense (no replacement) & \multicolumn{7}{c}{14.17} \\
\bottomrule
\label{table:gpt}
\end{tabular}
}
\vspace{-7mm}
\end{table}

\section{Additional Experiments}
In this section, we show that RaLiMP provides further compression over already compressed models. To illustrate this point, we finetune DistilBERT~\cite{sanh2020distilbertdistilledversionbert} which is distilled from BERT~\citep{bert} on the GLUE benchmark~\citep{glue}. RaLiMP beats both SNNK and EUGen and further reduces DistilBERT by \textbf{35}\% while remaining competitive. This shows that our method is complementary to various compression methods and can be combined with them. 

In all our experiments we use 32 random features. RaLimP spends that budget on the polynomial of degree $K=2$ with $8$ random features and the rest of the $24$ features in the rational block is split among $n_{\mathrm{freq}}=3$ frequencies with $m=4$ features each, one centered summand of Eq.~\ref{eq:pf}, and $\tau=4$. We report the MCC score for CoLA, F1 score for MRPC and QQP, Spearman correlation for STSB, and accuracy scores for the other tasks. We use the AdamW optimizer with a weight decay of $10^{-5}$ and a learning rate of $3 \times 10^{-5}$, and a constant scheduler with warmup steps 6\% of the total number of training steps with a batch size of 16.

\begin{figure}
    \centering
    \includegraphics[width=0.85\linewidth]{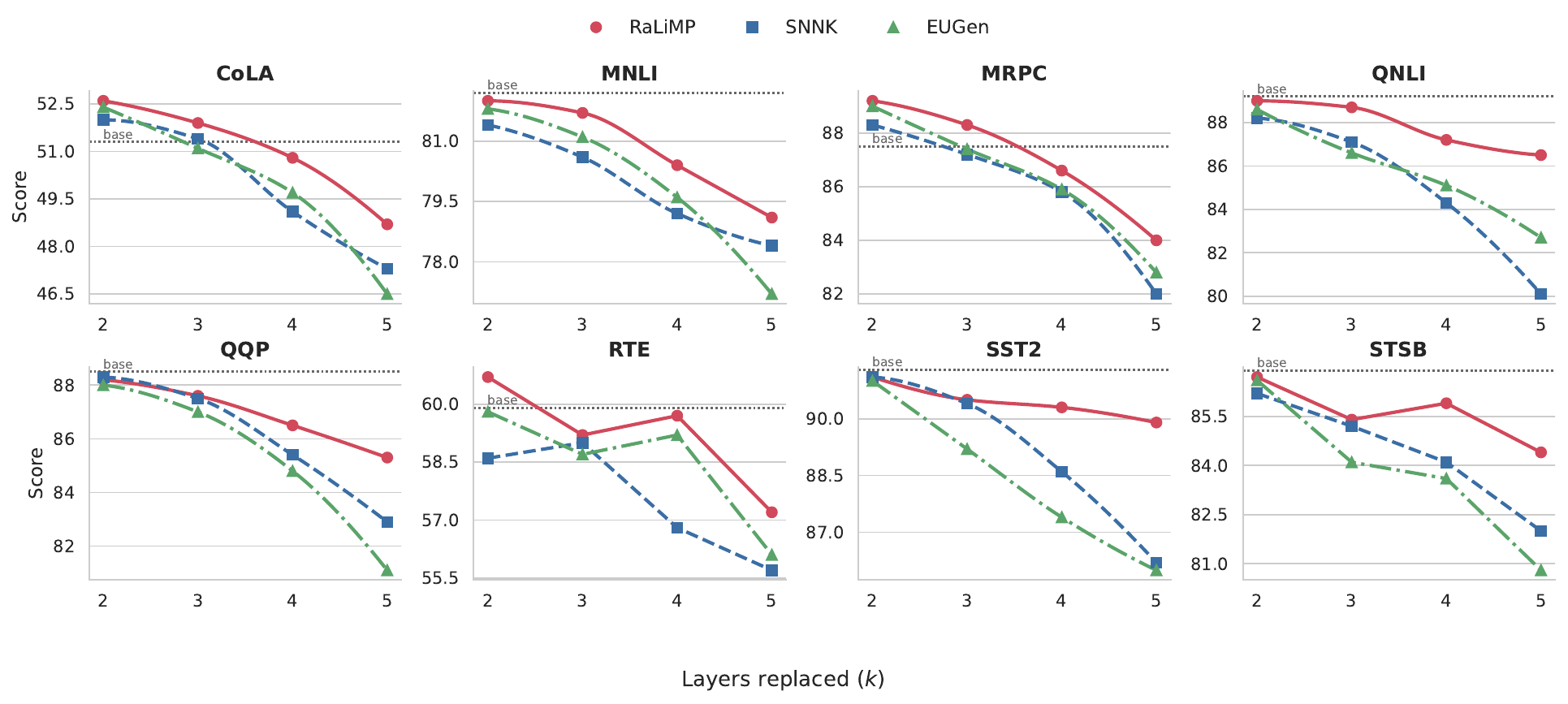}
    \caption{\textbf{GLUE results.} MCC score is reported for CoLA, F1 score is reported for MRPC and QQP, Spearman correlation is reported for STSB, and accuracy scores are reported for the other tasks. RaLiMP is competitive with baseline while being almost 35\% smaller. Results are an average of 5 seeds.}
    \label{fig:glue}
\end{figure}
\section{Additional Related Work}
\label{appendix:related_work}
In this section we present additional related work to complement the works already discussed in Sec.~\ref{sec: intro_related}.

The central theme of our work is to reduce the time and space complexity of neural networks. A substantial body of prior work has focused on reducing these computational bottlenecks via dimensionality reduction using the Johnson-Lindenstrauss transform (JLT)~\citep{dasgupta2003elementary,dasgupta2010sparse,ailon2013almost} or via kernel methods with random features (RF)~\citep{rffs}. Following the influential work of~\citealt{rffs}, considerable effort has been devoted to improving the accuracy of RF-based methods through the use of entangled projections~\citep{yu2016orthogonal,choromanski2017unreasonable}. Kernels that are widely studied are of the form $\textbf{K}_f(\mathbf{x}, \mathbf{y}) := f(\mathbf{x}^{\top}\mathbf{y})$, where $f := \exp$~\citep{performers,likhosherstov2022chefs} or $f$ is a polynomial or power series with positive coefficients~\citep{kar2012random,wacker2022local,wacker2023complex,kacham2023polysketchformer,wacker2024improved}.
However, these approaches are restricted to positive definite kernels, with the recent exception of~\citep{eugens}. Our work extends this linearization to special classes of rational functions $f$.

Kernels have also been used to linearize two-layer neural networks~\citep{cho2009kernel,cho2011analysis}, as well as to analyze more general neural networks through Neural Tangent Kernels (NTKs)~\citep{jacot2018neural} and Path Kernels~\citep{domingos2020every}. While NTK-based methods provide useful insights into training dynamics, particularly in the fine-tuning regime where weights remain close to their pre-trained values~\citep{wei2022more,malladi2023kernel,tomihari2024understanding}, they do not directly reduce the computational cost of MLPs trained from scratch. A different line of work uses kernel approximations to directly linearize the computation of MLP layers. \citealt{snnk} introduced Universal Random Features (URF), which separates the dependence on weights and inputs in certain MLP layers. However, its reliance on Fourier transforms of activation functions limits its direct applicability to unbounded functions such as ReLU. More recently, \citealt{eugens} avoids this requirement by using random features to provide unbiased approximations for MLPs with polynomial activations, while supporting continuous activations through polynomial approximation on bounded intervals.

Other well known methods include pruning~\citep{sun2024simple,ma2023llm}, quantization~\citep{xiao2023smoothquant,liu2025spinquant,zhao2024atom}, knowledge distillation~\citep{li2023prompt,sreenivas2024llm,busbridge2025distillation}, and the use of structured matrices in place of dense ones~\citep{potapczynski2024searching,wei2024building} as common strategies for reducing the computational cost of models with dense layers, including Transformers.
For NeRFs, specialized techniques have been developed to improve efficiency~\citep{tancik2022block,xu2023grid,sun2022direct,reiser2021kilonerf,muller2022instant,hu2022efficientnerf,reiser2023merf}, which does not generalize beyond that domain.

\end{document}